\documentclass{article} % For LaTeX2e
\usepackage{iclr2024_conference,times}

% Optional math commands from https://github.com/goodfeli/dlbook_notation.

\usepackage{hyperref}
\usepackage{url}

\usepackage[utf8]{inputenc} % allow utf-8 input
\usepackage[T1]{fontenc}    % use 8-bit T1 fonts
\usepackage{hyperref}       % hyperlinks
\usepackage{url}            % simple URL typesetting
\usepackage{booktabs}       % professional-quality tables
\usepackage{amsfonts}       % blackboard math symbols
\usepackage{nicefrac}       % compact symbols for 1/2, etc.
\usepackage{microtype}      % microtypography
\usepackage{xcolor}         % colors

\usepackage{mathtools}
\usepackage{amsmath}
\usepackage{amsthm}
\usepackage{bbm}
\usepackage{mathrsfs} 

\usepackage{caption}
\usepackage{subcaption}
\usepackage{graphicx}
\usepackage{graphbox}
\usepackage{multirow}
\usepackage{wrapfig}
\usepackage{comment}

% Comment Commands 

\newcommand{\vct}[1]{\pmb{#1}}
\newcommand{\mtx}[1]{\pmb{#1}}

\newcommand{\diag}{\textbf{Diag}}
\newcommand{\relu}{\text{ReLu}}
\newcommand{\clloss}{\mathcal{L}_{CL}}
\newcommand{\supclloss}{\mathcal{L}_{SCL}}
\newcommand{\suploss}{\mathcal{L}_{SL}}

\newcommand{\E}{\mathbb{E}}
\DeclareMathOperator{\Tr}{Tr}

%%%%%%%%%%%%%%%%%%%%%%%%%%%%%%%%
% THEOREMS
%%%%%%%%%%%%%%%%%%%%%%%%%%%%%%%%
\newtheorem{theorem}{Theorem}[section]

\newtheorem{lemma}[theorem]{Lemma}
\newtheorem{corollary}[theorem]{Corollary}
\newtheorem{definition}[theorem]{Definition}

\title{Investigating the Benefits of Projection Head for Representation Learning}

% Authors must not appear in the submitted version. They should be hidden
% as long as the \iclrfinalcopy macro remains commented out below.
% Non-anonymous submissions will be rejected without review.

\author{
    Yihao Xue,~
    Eric Gan,~
    Jiayi Ni,~
    Siddharth Joshi,~
    Baharan Mirzasoleiman \\
    ~Department of Computer Science, \\
    ~University of California, Los Angeles \\
    ~\texttt{yihaoxue@g.ucla.edu}, \texttt{egan8@g.ucla.edu}, \texttt{nijiayi1119626@g.ucla.edu}, \\
    ~\texttt{sjoshi804@cs.ucla.edu}, \texttt{baharan@cs.ucla.edu}
    %\{email1, email2, email3, email4\}@university.edu
}

% The \author macro works with any number of authors. There are two commands
% used to separate the names and addresses of multiple authors: \And and \AND.
%
% Using \And between authors leaves it to \LaTeX{} to determine where to break
% the lines. Using \AND forces a linebreak at that point. So, if \LaTeX{}
% puts 3 of 4 authors names on the first line, and the last on the second
% line, try using \AND instead of \And before the third author name.

\iclrfinalcopy % Uncomment for camera-ready version, but NOT for submission.
\begin{document}

\maketitle
\vspace{-3mm}
\begin{abstract}
\vspace{-2mm}
An effective technique for obtaining high-quality representations is adding a projection head on top of the encoder during training, then discarding it and using the pre-projection representations. Despite its proven practical effectiveness, the reason behind the success of this technique is poorly understood. The pre-projection representations are not directly optimized by the loss function, raising the question: what makes them better? In this work, we provide a rigorous theoretical answer to this question.
We start by examining linear models trained with self-supervised contrastive loss. We reveal that the implicit bias of 
training algorithms leads to layer-wise progressive feature weighting, where features become increasingly unequal as we go deeper into the layers. Consequently, lower layers tend to have more normalized and less specialized representations. 
We theoretically characterize scenarios where such representations are more beneficial, highlighting the intricate interplay between data augmentation and input features.  
Additionally, we demonstrate that introducing non-linearity into the network allows lower layers to learn features that are completely absent in higher layers. Finally, we show how this mechanism improves the robustness in supervised contrastive learning and supervised learning. We empirically validate our results through various experiments on CIFAR-10/100, UrbanCars and shifted versions of ImageNet. We also introduce a potential alternative to projection head, which offers a more interpretable and controllable design.
\looseness=-1
\vspace{-1mm}
\end{abstract}

\vspace{-1mm}\section{Introduction}\vspace{-2mm}

Representation learning has been the subject of extensive study in the last decade \citep{chen2020simple, zbontar2021barlow, khosla2020supervised, ermolov2021whitening}. Despite the great progress, learning representations that generalize well across various domains has remained challenging. 
% This is because machine learning models often fail to learn all the generalizable features in the training data, and are biased towards learning simpler domain-specific features that are not predictive of categories in general. % positive pairs. 
Among the existing techniques, contrastive self-supervised learning has gained a lot of attention, due to its ability to learn robust  representations that generalize better across various domains.
Contrastive learning (CL) learns representations by maximizing the agreement between different augmented views of the same examples and minimizing that of different examples \citep{chen2020simple}. Pre-training with CL is often essential before transfering models to new or shifted domains \citep{hendrycks2019using}. 

A key factor that enables contrastive learning to learn robust representations is \textit{projection head}, a shallow MLP that is used during pretraining and is discarded afterwards \citep{chen2020simple}, which has become a standard  %in recent years 
for learning high-quality representations \citep{he2020momentum,yeh2022decoupled,bardes2022variance,chuang2020debiased,grill2020bootstrap,garrido2022duality,zbontar2021barlow,ermolov2021whitening, pmlr-v202-joshi23b}. It is particularly advantageous when there is a misalignment between training and downstream objectives
%a shift between training and downstream data 
\citep{bordes2023guillotine}. 
% However, the effect of projection head and why it improves the generalizability and robustness of contrastive representations learning has remained poorly understood. 
The benefit of projection head 
%for learning higher-quality representations 
extends beyond self-supervised learning to other representation learning methods, including supervised contrastive learning \citep{khosla2020supervised}. %and supervised learning \citep{?}. 
However, the mechanism by which the projection head improves the generalizability and robustness of representations  is 
poorly understood.\looseness=-1

Theoretically analyzing the effect of projection head for CL is challenging, 
as one needs to understand feature learning both before and after the projection head and compare them.
%as one needs to analyze which features are learned in the last layer of a neural network trained with CL, and compare them when the network is trained with and without projection head.
A few recent studies \citep{tian2021understanding, wang2021towards, wen2022mechanism} have focused on non-contrastive self-supervised methods and demonstrated that the projection head can mitigate the dimensional collapse problem, where all inputs are mapped to the same representation.
However, such results do not extend to contrastive methods due to its inherently different learning mechanism.
Besides, they do not provide an understanding of the projection head's role in enhancing 
robustness under misalignment between pretraining and downstream objectives.\looseness=-1
%learning representations with superior robustness against distribution shift.  
% \ba{last 2 sentence need to clarify better the difference of what we do vs. Tian, Wang, Wen.}

In this work, we analyze the effect of projection head on the quality and robustness of representations learned by CL, and extend our results to supervised CL (SCL) and supervised learning (SL). 
First, 
we theoretically show that linear models progressively assign weights to features as they operate through the layers.
%we prove that in a linear network trained with CL, every layer magnifies the weights of the feature in the previous layer. 
Thus, in deeper layers the features are represented more unequally and the representations are
%in deeper layers are less uniformly distributed 
%and are 
more specialized toward the pretraining objective.
%data. 
%Interestingly, 
Moreover,
non-linear activations exacerbate this effect allowing lower layers to learn features that are entirely absent in the outputs of the projection head.
%and result in some features to be entirely absent in the last layer's representations.
%In this setting, 
We demonstrate that
projection head provably improves the robustness and generalizability of the representations, when data augmentation harms useful features of the pretraining data, or when features relevant to the downstream task are too weak or too strong in the pretraining data. Finally, we extend our results beyond CL to SCL and SL. In this setting, we reveal that lower layers can learn subclass-level features that are not represented in the final layer, demonstrating how representations before the final representation layer 
%\ba{one layer lower} 
can significantly reduce class/neural collapse, a problem previously observed in the final representations.
%confirming that the projection head can mitigate class collapse and neural collapse.
%\sj{can we make it clear here or in rel. work, how this is different from prior work in class collapse etc. currently the difference seems small (looking at n-1th layer v/s n-2th layer)} 

We conduct extensive experiments to confirm our theoretical analysis. First, we construct a semi-synthetic dataset by adding MNIST \citep{lecun1998mnist} digits to CIFAR10 \citep{krizhevsky2009learning} images, and confirm that using projection head yields superior representations when data augmentation harms the 
downstream-relevant features, or when these features are either very strong or very weak during pretraining. Then, we study supervised learning and demonstrate that using projection head results in better course-to-fine transferability on CIFAR100, superior performance of few-shot adaption to distribution shift on UrbanCars \citep{li2023whac}, and better robustness against natural distribution shifts in ImageNet \citep{xiao2020noise,hendrycks2021many,hendrycks2021natural}. We also demonstrate how a fixed reweighting head can achieve performance comparable to the projection head, providing further evidence for our theoretical conclusions and potentially inspiring future design.

\vspace{-3mm}
\section{Additional Related Work}
\vspace{-2mm}

% \TODO{use the iclr 2024 template} \ba{discussion covered in intro can be cut from here}

\textbf{Projection head.} 
% The technique of adding a projection head during pretraining and then discarding it for the downstream task notably emerged in \cite{chen2020simple} as a design improvements to enhance self-supervised CL's performance. It has later been adopted by almost all self-supervised representation learning approaches \cite{chen2020simple,he2020momentum,yeh2022decoupled,bardes2022variance,chuang2020debiased,grill2020bootstrap,garrido2022duality,zbontar2021barlow,ermolov2021whitening}, and also supervised contrastive learning \cite{khosla2020supervised}. 
%However, the underlying reasons for its importance have not been fully
While projection head is widely used, the underlying reasons for its benefit have not been fully understood. The empirical study by \cite{bordes2023guillotine} suggests that the benefit of the projection head is influenced by optimization, data, and downstream task,
%\ba{play a role}, 
and is especially significant when there is a misalignment between the training and downstream tasks.
% and the projection head is beneficial especially when there is a misalignment between the training and downstream tasks.  
%However, these studies are all empirical in nature.
On the theoretical side, 
%studies including \cite{tian2021understanding, wang2021towards, wen2022mechanism} have focused on non-contrastive methods and demonstrated that the projection head can mitigate the dimensional collapse problem, where all inputs are mapped to the same point, without delving into CL. \cite{jing2021understanding} has shown that CL can theoretically experience dimensional collapse to some extent, suggesting that the projection head may help alleviate this issue. However, given that CL is inherently less prone to dimensional collapse by design, these explanations do not provide a complete understanding of the benefits of the projection head. Furthermore, they have not shed light on practical observations \cite{bordes2023guillotine} regarding the noticeable role of the projection head in cases of misalignment between pretraining and downstream tasks.
\cite{jing2021understanding} suggests that projection head can alleviate dimensional collapse, but did not provide insights into the significant role of the projection head in cases of misalignment between pretraining and downstream tasks.
Recently, \cite{gui2023unraveling} analyzed the training of a linear projection head using CL.
However, their analysis is performed in the case where representations are fixed, and the results only show how training a linear head on these representations can lead to worse post-projection representations.
This does not align with practical scenarios where encoder and projection are trained simultaneously and does not reveal why adding a projection head is needed in the first place. 

%, characterizing the expansion and shrinkage effects of the CL loss. 
%In contrast, our study investigates practical scenarios where the entire model, including the projection head, is trained and compares pre-projection and post-projection representations. 

%demonstrating that training a projection head on top of these representations can sometimes result in worse representations. This setting
\looseness=-1

\vspace{-1mm}\textbf{Generalizability and transferability of representations.} A longstanding goal in machine learning is to acquire representations that generalize and transfer across various tasks. Two key challenges emerge in this pursuit. The first arises from shifts in labels between training and downstream tasks. Notably, the issue of class or neural collapse \citep{papyan2020prevalence,han2021neural,zhu2021geometric,zhou2022all,zhou2022optimization,lu2022neural,fang2021exploring,hui2022limitations,cc_chen2022perfectlybalanced,cc_dissecting_scl_2021,xue2023features}, affecting both SCL and SL, where representations within the same class become indistinguishable at a subclass level, rendering them unfit for fine-grained labeling. We theoretically demonstrate that such issue can be alleviated by taking the pre-projection head representations.
The second challenge arises from shifts in input distributions, where neural networks rely heavily on patterns specific to the training distribution that do not generalize \citep{zhu2016object,geirhos2018imagenet, ilyas2019adversarial,barbu2019objectnet, recht2019imagenet,sagawa2019distributionally, sagawa2020investigation,xiao2020noise,taori2020measuring,koh2021wilds,shankar2021image}. %Interestingly, \cite{kirichenko2022last} discovered that despite the entire model's reliance on these patterns, the hidden representations in the penultimate layer exhibit a degree of generalizability. 
%This generalizability can be further amplified by leveraging the idea of projerion head has demonstrated in our paper.
%In summary, 
We explore the benefits of projection head in the above two scenarios, but noting that our primary focus lies in elucidating the broader concepts at play rather than addressing each specific problem.
\vspace{-2mm}

% For instance, certain image details (e.g., backgrounds, textures) may seem indicative of labels in the training data but not consistently in general. Neural networks' reliance on such specific details leads to poor performance on test distributions where such details are absent. 

\vspace{-1mm}\section{Exploring the role of projection head in self-supervised CL}\label{sec: theory_sscl}\vspace{-2mm}

% \yx{Discuss the challenge of characterizing 'misalignment' between training and downstream objectives. The former is determined by input-augmentation distribution and the cl objective, while the latter depends on the input-label relationship, both of which exist in different spaces. Explain how we address this issue in our model...}

In this section we consider self-supervised contrastive learning, a scenario where the concept of the projection head is frequently employed and has a crucial role in achieving optimal performance. %We will begin by formalizing the problem.

\textbf{Pretraining data distribution.} We define the following data distribution and data augmentation used for pretraining with contrastive learning. We choose this setting for clarity in presenting our results, but we note that our results can be generalized to more complex cases.
\begin{definition}[Input distribution of pretraining data]\label{def: data}
    The input data used for pretraining follows distribution $\mathcal{D}$, where, for an input $\vct{x}\in\mathbbm{R}^d$, its $i$-th element is randomly drawn from $\{-\phi_i, \phi_i\}$. 
\end{definition}
\begin{definition}[Data augmentation]
Given an input $\vct{x}$, its augmentation follows the distribution $\mathcal{A}(\vct{x})$. The augmentation operates on an input $\vct{x}$ in two steps. First, for each $i$, with probability $\alpha_i$, it alters $i$-th element in $\vct{x}$ by randomizing its sign, resulting in a modified input denoted by $\vct{x}'$. Then, a random noise $\vct{\xi}$ satisfying $\E[\vct{\xi}]=\vct{0}$ and $\E[\vct{\xi}\vct{\xi}^\top]=\sigma^2\mathbf{I}$ is added to $\vct{x}'$, yielding augmentation $\vct{x}'+\vct{\xi}$.    \looseness=-1 
\end{definition} \vspace{-3mm}
Each coordinate of the input is an independent feature. At a high level, $\phi_i$ represents the magnitude of a feature in the input data, while $\alpha_i$ quantifies the level of disruption introduced by the augmentation. If $\alpha_i=1$, it means 
that this feature is completely `destroyed' during augmentation, such that there is no correlation between a positive pair (two augmented versions of the same input) at this coordinate. This data augmentation is designed to mimic practical scenarios where data augmentation techniques intentionally modify different features to varying degrees. For example, some features, like color and texture, may be altered more significantly than core features of the main object.
%core object concept. 
Additionally, the noise $\vct{\xi}$ with variance $\sigma^2$ at each coordinate is included because data augmentation used in practice unintentionally 
introduces certain level of noise.\looseness=-1

% \TODO{check if our analysis works for PCA-like losses; not very important though}

\vspace{-1mm}\textbf{Model.} We consider 2-layer linear and non-linear models for the sake of clarity, as it suffices to demonstrate our main findings, although our results generalize to multiple layers. Given an input $\vct{x}\in\mathbbm{R}^d$, the output after the $l$-th layer is denoted as $f_l(\vct{x})$, with $f_2(\vct{x})=h(f_1(\vct{x}))$. We will consider linear and non-linear functions for $f_1(\cdot)$
%$f$'s 
and $h(\cdot)$ in our analysis. The second layer $h(\cdot)$ serves as the projection head and the first layer $f_1(\cdot)$ serves as the encoder.
%\TODO{make it more clear that second layer is projection head}

\vspace{-1mm}\textbf{Contrastive loss.} We consider the following spectral loss, which has been widely used in previous theoretical and empirical studies \citep{haochen2021provable,xue2022investigating,saunshi2022understanding,haochen2022theoretical,garrido2022duality,xue2023features}. Given a model representing function $f(\cdot)$, the loss is 
%\TODO{also cite the duality paper and mention empirically all different loss functions including this one achieve very similar performance}
\vspace{-3mm}\begin{align}
%\label{eq: sscl_loss}
\nonumber
\clloss(f) = -2\mathbbm{E}_{\vct{x}\sim \mathcal{D}, \vct{x}_1^+\sim\mathcal{A}(\vct{x}),\vct{x}_2^+\sim\mathcal{A}(\vct{x})}[f(\vct{x}_1^+)^\top f(\vct{x}_2^+)]
    + \mathbbm{E}_{\vct{x}_1\sim\mathcal{D},\vct{x}_2\sim\mathcal{D}}[\left(f(\mathcal{A}(\vct{x}_1))^\top f(\mathcal{A}(\vct{x}_2))\right)^2].
\end{align}\vspace{-0.6mm}
After training a model to minimize this loss, we utilize it in \emph{downstream tasks} where we feed different inputs, which may or may not follow the same distribution as the pretraining data, into the model. We can leverage the \emph{representations} provided by the model for various purposes, most commonly to train a linear model on these representations and predict labels for the downstream data.  When we use the two-layer model, i.e., when $\clloss(f_2)$ is minimized, we have the option to choose either the post-projection representations generated by $f_2(\cdot)$ or the pre-projection representations from $f_1(\cdot)$ for the downstream task.

\vspace{-1mm}\textbf{Clarification on what we compare.} To gain a comprehensive understanding, it's necessary to compare these three cases: (1) \emph{pre-projection}, where we minimize $\clloss(f_2)$ first and discard $h(\cdot)$, using only $f_1(\cdot)$ for the downstream task; (2) \emph{post-projection}, where we minimize $\clloss(f_2)$ and use $f_2(\cdot)$ for the downstream task; (3) \emph{no-projection}, where we minimize $\clloss(f_1)$ and using $f_1(\cdot)$ for the downstream task. The goal is to determine when \emph{pre-projection} outperforms both of the other two and understand the reasons behind it. However, in practical deep neural networks, there is typically no significant difference between \emph{no-projection} and \emph{post-projection} as both scenarios use the network's final output for the downstream task and the difference is mainly an additional layer in \emph{post-projection}. Given that the networks are sufficiently large and expressive, this one-layer difference does not significantly change the representations achieved at the output. Therefore, in theory, we consider settings where $f_1(\cdot)$ and $f_2(\cdot)$ have the same expressiveness and yield equivalent representations when used to minimize the loss, and then solely comparing \emph{pre-projection} and \emph{post-projection} is sufficient.\looseness=-1

% To determine if the operation of `\emph{using} the projection head during contrastive learning and then \emph{discarding} it during the downstream task' (minimizing $\clloss(f_2)$ and using $f_1(\cdot)$ for the downstream task) is beneficial, we must compare it with two scenarios: (1) not using the projection head in both contrastive learning and the downstream task (minimizing $\clloss(f_1)$ and using $f_1(\cdot)$ for the downstream task), and (2) using the projection head in contrastive learning and retaining it in the downstream task (minimizing $\clloss(f_2)$ and using $f_2(\cdot)$ for the downstream task). We need to demonstrate that `use-then-discard' outperforms both of the above two. \looseness=-1

\vspace{-1mm}\subsection{Layer-wise Progressive Feature Weighting in Linear Models}\label{sec: ssl_linear}\vspace{-1mm}

%\TODO{make the title more straightforward}

\textbf{Linear network.} We consider a linear model in which $f_2(\vct{x}) = \mtx{W}_2f_1(\vct{x}) = \mtx{W}_2\mtx{W}_1\vct{x}$, with $\mtx{W}_1\in\mathbb{R}^{p\times d}$ and $\mtx{W}_2\in\mathbb{R}^{p\times p}$ representing the weights of the first and second layers, respectively.  Both the hidden and output dimensions are $p$, to ensure consistent dimensionality between pre-projection and post-projection representations for a fair comparison.
%, with $p\leq d$.

% \TODO{}
% In the case of linear models, scenarios (1) and (2) yield equivalent results. The only distinction between them is the number of layers. In a linear model, any number of layers can be reparameterized as a one-layer model. For instance, we can view $\mtx{W}_2\mtx{W}_1$ as a single parameter in a one-layer model. Therefore, we focus solely on comparing  `use-then-discard' (performing CL with $f_2(\cdot)$ and using $f_1(\cdot)$ for the dowstream task) with `use-and-retain' (performing CL with $f_2(\cdot)$ and using $f_2(\cdot)$ for the dowstream task).

\vspace{-2mm}\subsubsection{Structure of layer weights} \vspace{-0mm}

We begin by examining the weights at different layers within a model. Our investigations reveals the relationship between these layer weights, which generally hold regardless of the data distribution. Firstly, we consider the weights of the minimum norm minimizer of the CL loss. This is pertinent because gradient-based algorithms are shown to prefer minimizers with small norms \citep{neyshabur2014search,gunasekar2017implicit}.
%different types of smallest norm \cite{??}.
Furthermore, many theoretical studies on CL \citep{ji2021power,liu2021self, nakada2023understanding} have considered regularization in the form of $\|\mtx{W}^\top \mtx{W}\|_F$, which promotes a small norm, and \cite{xue2023features} have shown that the minimum norm provides an explanation for many intriguing phenomena in CL.
%Additionally, norms in the form of $\|\mtx{W}^\top \mtx{W}\|_F$ as a regularization have been considered in other studies on contrastive learning \cite{ji2021power,liu2021self, nakada2023understanding, xue2023features}.\looseness=-1
\begin{theorem}[Weights of the minimum norm minimizer]\label{thm: min_norm}
The global minimizer of the CL loss $\clloss$ with the smallest norm, defined as $\|\mtx{W}_1^\top \mtx{W}_1\|_F^2 + \|\mtx{W}_2^{\top} \mtx{W}_2\|_F^2$, satisfies $\mtx{W}_1 \mtx{W}_1^{\top} = \mtx{W}_2^{\top} \mtx{W}_2$.
\end{theorem}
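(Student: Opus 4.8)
The plan is to reduce the statement to a first-order optimality condition that does not depend on the precise form of $\clloss$. The only structural fact I will use is that $\clloss(f_2)$ depends on $(\mtx{W}_1,\mtx{W}_2)$ solely through the product $\mtx{M}:=\mtx{W}_2\mtx{W}_1$ (in fact only through $\mtx{M}^\top\mtx{M}$, since $f_2(\vct{x})^\top f_2(\vct{x}')=\vct{x}^\top\mtx{M}^\top\mtx{M}\vct{x}'$). Let $(\mtx{W}_1^\star,\mtx{W}_2^\star)$ be the global minimizer of smallest norm (existence is routine: the set of loss minimizers is closed, the norm $\|\mtx{W}_1^\top\mtx{W}_1\|_F^2+\|\mtx{W}_2^\top\mtx{W}_2\|_F^2$ is coercive, so a minimum-norm minimizer is attained), and set $\mtx{M}^\star=\mtx{W}_2^\star\mtx{W}_1^\star$. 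For any symmetric $\mtx{S}\in\mathbb{R}^{p\times p}$, I would consider the curve $t\mapsto(e^{t\mtx{S}}\mtx{W}_1^\star,\ \mtx{W}_2^\star e^{-t\mtx{S}})$. Because $e^{-t\mtx{S}}e^{t\mtx{S}}=\mathbf{I}$, every pair on this curve has product $\mtx{M}^\star$, hence the same loss value, hence is also a global minimizer; so the norm restricted to this curve is minimized at $t=0$.

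It then remains to compute that derivative. Writing $\mtx{P}_1:=\mtx{W}_1^\star\mtx{W}_1^{\star\top}$ and $\mtx{P}_2:=\mtx{W}_2^{\star\top}\mtx{W}_2^\star$, and using $(e^{t\mtx{S}})^\top=e^{t\mtx{S}}$ together with cyclicity of the trace, one obtains
\begin{equation*}
\psi(t):=\|(e^{t\mtx{S}}\mtx{W}_1^\star)^\top(e^{t\mtx{S}}\mtx{W}_1^\star)\|_F^2+\|(\mtx{W}_2^\star e^{-t\mtx{S}})^\top(\mtx{W}_2^\star e^{-t\mtx{S}})\|_F^2=\Tr\!\big((e^{2t\mtx{S}}\mtx{P}_1)^2\big)+\Tr\!\big((e^{-2t\mtx{S}}\mtx{P}_2)^2\big).
\end{equation*}
Differentiating at $t=0$ gives $\psi'(0)=4\,\Tr\!\big(\mtx{S}(\mtx{P}_1^2-\mtx{P}_2^2)\big)$, which must vanish for every symmetric $\mtx{S}$. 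Since $\mtx{P}_1^2-\mtx{P}_2^2$ is itself symmetric, choosing $\mtx{S}=\mtx{P}_1^2-\mtx{P}_2^2$ forces $\mtx{P}_1^2=\mtx{P}_2^2$; as $\mtx{P}_1,\mtx{P}_2$ are positive semidefinite, they admit a unique PSD square root, so $\mtx{P}_1=\mtx{P}_2$, i.e.\ $\mtx{W}_1^\star\mtx{W}_1^{\star\top}=\mtx{W}_2^{\star\top}\mtx{W}_2^\star$.

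The step I expect to be the crux is choosing the right family of norm-reparametrizations of $\mtx{M}^\star$. The obvious candidate, orthogonal conjugations $(\mtx{Q}\mtx{W}_1^\star,\mtx{W}_2^\star\mtx{Q}^\top)$, is useless here: it preserves $\|\mtx{W}_1^\top\mtx{W}_1\|_F$ and $\|\mtx{W}_2^\top\mtx{W}_2\|_F$ separately and yields no constraint, so one really needs the non-orthogonal factor $e^{t\mtx{S}}$ with $\mtx{S}$ symmetric, which shifts ``energy'' between the two layers. A secondary point worth stressing is that this perturbation argument is preferable to writing Lagrange/KKT conditions for the constraint $\mtx{W}_2\mtx{W}_1=\mtx{M}^\star$: the latter needs a constraint qualification for the bilinear map $(\mtx{W}_1,\mtx{W}_2)\mapsto\mtx{W}_2\mtx{W}_1$, which can fail when $\mtx{W}_1^\star$ is rank-deficient, whereas the explicit curve lies on the constraint set by construction for every $\mtx{S}$ and at every rank. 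Finally, one should note that the passage from $\mtx{P}_1^2=\mtx{P}_2^2$ to $\mtx{P}_1=\mtx{P}_2$ genuinely uses positive semidefiniteness (it is false for general symmetric matrices).
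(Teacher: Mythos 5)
Your proof is correct, but it takes a genuinely different route from the paper's. The paper argues purely algebraically: it rewrites $\|\mtx{W}_1^\top\mtx{W}_1\|_F^2+\|\mtx{W}_2^\top\mtx{W}_2\|_F^2 = \Tr\big((\mtx{W}_1\mtx{W}_1^\top-\mtx{W}_2^\top\mtx{W}_2)^2\big)+2\Tr(\mtx{W}_1^\top\mtx{W}_2^\top\mtx{W}_2\mtx{W}_1)$, notes that the second term equals $\|\mtx{W}_2\mtx{W}_1\|_F^2$ and is therefore fixed once the product $\mtx{W}=\mtx{W}_2\mtx{W}_1$ is fixed, and then observes that the residual term vanishes exactly at balanced factorizations, which are attainable via the SVD construction $\mtx{W}_1=\mtx{\Sigma}^{1/2}\mtx{V}^\top$, $\mtx{W}_2=\mtx{U}\mtx{\Sigma}^{1/2}$. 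You instead run a first-order variational argument along the rebalancing curves $(e^{t\mtx{S}}\mtx{W}_1,\ \mtx{W}_2e^{-t\mtx{S}})$ with $\mtx{S}$ symmetric, which preserve the product and hence the loss, obtaining the stationarity condition $\Tr\big(\mtx{S}(\mtx{P}_1^2-\mtx{P}_2^2)\big)=0$ for all symmetric $\mtx{S}$, hence $\mtx{P}_1^2=\mtx{P}_2^2$ and then $\mtx{P}_1=\mtx{P}_2$ by uniqueness of the PSD square root; your derivative computation is right, your point that orthogonal conjugation is vacuous while symmetric factors genuinely shift norm between layers is exactly the correct insight, and your coercivity/closedness remark covers an existence point both proofs implicitly need. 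The trade-off: the paper's identity is more elementary and slightly stronger in content --- for any fixed loss-minimizing $\mtx{W}$ it characterizes the balanced factorizations as exactly the minimum-norm ones, with explicit minimum value $2\|\mtx{W}\|_F^2$, and needs no differentiability, stationarity, or square-root argument --- whereas your approach extracts only a necessary first-order condition (which is all the theorem claims) but generalizes mechanically, e.g.\ to adjacent layers of deeper linear networks and to any objective that depends on the weights only through the product $\mtx{W}_2\mtx{W}_1$.
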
\vspace{-1mm}

In addition, we establish that a similar conclusion holds for models trained using gradient flow, which is a continuous version of gradient descent widely adopted in theoretical analysis. In gradient flow, at any time $t$, the weight updates are given by $ \frac{d}{dt}\mtx{W}_i^{(t)} = - \frac{\partial }{\partial \mtx{W}_i^{(t)}} \clloss(\mtx{W}^{(t)})$, where $i=1,2$. 
\begin{theorem}[Weights of the model trained with gradient flow, proved in \citep{arora2018optimization}]\label{thm: gf}\vspace{-1mm}
Suppose the initialization satisfies
$\mtx{W}_1^{(0)} \mtx{W}_1^{(0) \top} = \mtx{W}_2^{(0) \top} \mtx{W}_2^{(0)}$. Using gradient flow, at any time $t$, we have
$\mtx{W}_1^{(t)} \mtx{W}_1^{(t) \top} = \mtx{W}_2^{(t) 
 \top}\mtx{W}_2^{(t)}$.
\end{theorem}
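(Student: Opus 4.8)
The plan is to prove the slightly stronger statement that the matrix $\mtx{W}_1^{(t)}\mtx{W}_1^{(t)\top} - \mtx{W}_2^{(t)\top}\mtx{W}_2^{(t)}$ is \emph{conserved} along the entire gradient-flow trajectory; since it equals $\mtx{0}$ at $t=0$ by the balanced-initialization hypothesis, it then equals $\mtx{0}$ for all $t\ge 0$, which is exactly the claim. This reduces everything to showing that the time derivative of this matrix vanishes identically.

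First I would record that, for the two-layer linear model, $\clloss$ depends on $(\mtx{W}_1,\mtx{W}_2)$ only through the product $\mtx{W}:=\mtx{W}_2\mtx{W}_1$: indeed $f_2(\vct{x})=\mtx{W}_2\mtx{W}_1\vct{x}$ enters the loss solely through such linear evaluations, so we may write $\clloss(\mtx{W}_1,\mtx{W}_2)=L(\mtx{W}_2\mtx{W}_1)$ for a smooth (in fact polynomial) function $L$ of a single $p\times d$ matrix. Setting $\mtx{G}:=\nabla L(\mtx{W})$ and applying the chain rule gives $\partial\clloss/\partial\mtx{W}_1=\mtx{W}_2^\top\mtx{G}$ and $\partial\clloss/\partial\mtx{W}_2=\mtx{G}\mtx{W}_1^\top$, so the gradient-flow equations become $\dot{\mtx{W}}_1=-\mtx{W}_2^\top\mtx{G}$ and $\dot{\mtx{W}}_2=-\mtx{G}\mtx{W}_1^\top$, all quantities evaluated at time $t$. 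Then I would differentiate: by the product rule $\tfrac{d}{dt}\big(\mtx{W}_1\mtx{W}_1^\top\big)=\dot{\mtx{W}}_1\mtx{W}_1^\top+\mtx{W}_1\dot{\mtx{W}}_1^\top=-\mtx{W}_2^\top\mtx{G}\mtx{W}_1^\top-\mtx{W}_1\mtx{G}^\top\mtx{W}_2$, whereas $\tfrac{d}{dt}\big(\mtx{W}_2^\top\mtx{W}_2\big)=\dot{\mtx{W}}_2^\top\mtx{W}_2+\mtx{W}_2^\top\dot{\mtx{W}}_2=-\mtx{W}_1\mtx{G}^\top\mtx{W}_2-\mtx{W}_2^\top\mtx{G}\mtx{W}_1^\top$. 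The two right-hand sides are identical, hence $\tfrac{d}{dt}\big(\mtx{W}_1\mtx{W}_1^\top-\mtx{W}_2^\top\mtx{W}_2\big)=\mtx{0}$; integrating from $0$ to $t$ and invoking the hypothesis at $t=0$ finishes the argument. This is precisely the conserved-quantity (``balancedness'') computation of \citet{arora2018optimization}; the only model-specific input is that $\clloss$ factors through $\mtx{W}_2\mtx{W}_1$, which is why the conclusion holds regardless of the data distribution, the augmentation, and the particular contrastive loss.

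Since the computation itself is routine, the hard part is only foundational bookkeeping: one should check that $\clloss$ is $C^1$ (indeed smooth) in $(\mtx{W}_1,\mtx{W}_2)$ so that the gradient flow admits a well-defined solution on which the fundamental theorem of calculus may be applied, and that the derivative identity above is a genuine \emph{matrix} equality rather than merely an equality of traces or Frobenius norms. Neither of these is a real obstacle; in particular no spectral, convexity, or positivity property of $\clloss$ is needed, so beyond verifying the chain-rule identities there is essentially nothing to overcome.
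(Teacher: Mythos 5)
Your proof is correct and follows essentially the same route as the paper's: both exploit that $\clloss$ depends on $(\mtx{W}_1,\mtx{W}_2)$ only through the product $\mtx{W}_2\mtx{W}_1$ (the paper phrases the gradients via $\partial\mathcal{L}/\partial\mtx{Z}$, which plays the role of your $\mtx{G}$), show the time derivatives of $\mtx{W}_1\mtx{W}_1^\top$ and $\mtx{W}_2^\top\mtx{W}_2$ coincide under gradient flow, and conclude from the balanced initialization, exactly the conserved-quantity argument of \cite{arora2018optimization} that the paper reiterates.
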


\subsubsection{Layer-wise progressive feature weighting}\label{sec: layerwise} 

What insights can we gain from Theorems \ref{thm: min_norm} and \ref{thm: gf}? Given that $\mtx{W}_1 \mtx{W}_1^{\top} = \mtx{W}_2^{\top} \mtx{W}_2$, both layers have the same singular values, and the left singular vectors of $\mtx{W}_1$ match the right singular vectors of $\mtx{W}_2$. Consequently, the singular values of the joint weight matrix $\mtx{W}_2\mtx{W}_1$ are the squares of those in the first layer, $\mtx{W}_1$.
%Consequently, when comparing the singular values of the function up to the first layer with the singular values of the function represented by the entire model, the latter's singular values are the squares of the former's. 
As a result, the differences in weights assigned to the features are smaller when the input is passed through the first layer than when the input is passed through the whole network. 
%features in the input are represented as more distinct from each other after passing through the second layer than after the first layer. 
To illustrate this concept, we analyze the model trained on the data distribution given by Definition \ref{def: data} and analyze the resulting representations.

The following analysis holds for both models obtained from either the minimum norm minimizer of the loss (as in Theorem \ref{thm: min_norm}) or the model trained using gradient flow under the assumption that the model converges to a global minimum (as in Theorem \ref{thm: gf}), as they are equivalent. 

% Let's denote the weights of the model obtained from either the minimum norm minimizer of the loss (as in Theorem \ref{thm: min_norm}) or the model trained using gradient flow under the assumption that the model converges to a global minimum (as in Theorem \ref{thm: gf}), as $(\mtx{W}_1^*, \mtx{W}_2^*)$.

In our input data, we refer to the $d$ independent coordinates as input features. Our interest lies in understanding the weight assigned to each feature in the pre- and post-projection representations. To achieve this, we examine the quantities $\|f_l(\vct{e}_i)\|, i=1,\dots, d$, and $l=1, 2$, with $\vct{e}_i$ denoting $i$-th standard basis. These quantities represent the scale of the representation of a unit feature at each coordinate. The following theorem (see proof in Appendix \ref{apdx: weights}) shows the weights of the features in each layer:\looseness=-1
\begin{theorem}\label{thm: feature_weights}
Define 
% \label{eq: beta_gamma}
$
    \beta_i \coloneqq \frac{(1-\alpha_i)^2\phi_i^2}{\phi_i^2+\sigma^2} ~~~\text{ and }~~~ \gamma_i \coloneqq \sqrt{\frac{(1-\alpha_i)\phi_i}{\phi_i^2 + \sigma^2}}
$. Let $\Pi \coloneqq (j_1, j_2, \dots, j_d)$ be a permutation of indices $\{1, 2, \dots, d\}$ such that $\beta_{j_1}\geq \dots \geq \beta_{j_d}$. Then after pretraining,
\begin{align}
    \nonumber
    \|f_l(\vct{e}_i)\| = 
        \gamma_i^l ~~~ \text{if}~~ i\in \{j_1, \dots, j_{\min\{d, p\}}\}, ~~ \text{else} ~~0.
\end{align}
\vspace{-5mm}
\end{theorem}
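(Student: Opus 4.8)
The plan is to collapse the two-layer optimization onto a single "effective" matrix, solve the resulting low-rank matrix-approximation problem in closed form, and then use the balancedness relation $\mtx{W}_1\mtx{W}_1^\top = \mtx{W}_2^\top\mtx{W}_2$ supplied by Theorems \ref{thm: min_norm} and \ref{thm: gf} to split the answer back across the two layers. Throughout, I write $\mtx{A} := (\mtx{W}_2\mtx{W}_1)^\top(\mtx{W}_2\mtx{W}_1)$ for the effective Gram matrix; since $f_2(\vct{x}) = \mtx{W}_2\mtx{W}_1\vct{x}$ is linear, $\clloss(f_2)$ depends on the weights only through $\mtx{A}$.

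First I would reduce the loss. Two second-moment matrices appear: the positive-pair cross-moment $\mtx{M}_+ := \E_{\vct{x}\sim\mathcal{D}}\E_{\vct{x}_1^+,\vct{x}_2^+\sim\mathcal{A}(\vct{x})}[\vct{x}_1^+(\vct{x}_2^+)^\top]$ and the single-view second moment $\Sigma := \E_{\vct{x}\sim\mathcal{D}}\E_{\vct{x}^+\sim\mathcal{A}(\vct{x})}[\vct{x}^+(\vct{x}^+)^\top]$. A short coordinatewise computation (the sign randomization attenuates each coordinate's conditional mean by a factor $1-\alpha_i$, the two views are conditionally independent given $\vct{x}$, and $\vct{\xi}$ is zero-mean with covariance $\sigma^2\mathbf{I}$) gives $\mtx{M}_+ = \diag\big((1-\alpha_i)^2\phi_i^2\big)$ and $\Sigma = \diag\big(\phi_i^2+\sigma^2\big)$. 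Substituting into the spectral loss yields $\clloss(f_2) = -2\,\Tr(\mtx{A}\mtx{M}_+) + \Tr(\mtx{A}\Sigma\mtx{A}\Sigma)$. Because $\mtx{W}_2\mtx{W}_1$ can realize any $p\times d$ matrix (e.g.\ take $\mtx{W}_2=\mathbf{I}$), minimizing $\clloss(f_2)$ is equivalent to minimizing this expression over all positive semidefinite $\mtx{A}$ of rank at most $\min\{d,p\}$. Now change variables to $\mtx{B} := \Sigma^{1/2}\mtx{A}\Sigma^{1/2}$ (still PSD, same rank); using exactly the definition of $\beta_i$ one gets $\clloss(f_2) = \Tr(\mtx{B}^2) - 2\,\Tr\!\big(\mtx{B}\,\diag(\beta_i)\big) = \|\mtx{B} - \diag(\beta_i)\|_F^2 - \sum_i \beta_i^2$. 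So $\mtx{B}$ must be the best rank-$\le\min\{d,p\}$ Frobenius approximation of the PSD matrix $\diag(\beta_i)$; by the Eckart--Young theorem this is $\mtx{B}^\star = \diag\big(\beta_i\,\mathbbm{1}[i\in\{j_1,\dots,j_{\min\{d,p\}}\}]\big)$, which is automatically PSD since the retained $\beta_i$ are nonnegative, so the PSD constraint is not binding. Undoing the substitution, $\mtx{A}^\star = \Sigma^{-1/2}\mtx{B}^\star\Sigma^{-1/2} = \diag\!\big(\tfrac{\beta_i}{\phi_i^2+\sigma^2}\,\mathbbm{1}[\cdot]\big) = \diag\big(\gamma_i^4\,\mathbbm{1}[\cdot]\big)$, since $\gamma_i^4 = \beta_i/(\phi_i^2+\sigma^2)$. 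This already settles the $l=2$ case: $\|f_2(\vct{e}_i)\|^2 = \vct{e}_i^\top\mtx{A}^\star\vct{e}_i = \gamma_i^4$ on the top coordinates and $0$ otherwise.

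For $l=1$ I would invoke balancedness. Writing the SVD $\mtx{W}_1 = \mtx{U}_1\mtx{S}\mtx{V}_1^\top$, the relation $\mtx{W}_1\mtx{W}_1^\top = \mtx{W}_2^\top\mtx{W}_2$ forces $\mtx{W}_2 = \mtx{U}_2\mtx{S}\mtx{U}_1^\top$ for some $\mtx{U}_2$ with orthonormal columns, hence $\mtx{W}_2\mtx{W}_1 = \mtx{U}_2\mtx{S}^2\mtx{V}_1^\top$, giving $\mtx{A}^\star = (\mtx{W}_2\mtx{W}_1)^\top(\mtx{W}_2\mtx{W}_1) = \mtx{V}_1\mtx{S}^4\mtx{V}_1^\top$ while $\mtx{W}_1^\top\mtx{W}_1 = \mtx{V}_1\mtx{S}^2\mtx{V}_1^\top$. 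Therefore $(\mtx{W}_1^\top\mtx{W}_1)^2 = \mtx{A}^\star$, i.e.\ $\mtx{W}_1^\top\mtx{W}_1$ is the unique PSD square root of $\mtx{A}^\star = \diag(\gamma_i^4\,\mathbbm{1}[\cdot])$, namely $\diag(\gamma_i^2\,\mathbbm{1}[\cdot])$. Hence $\|f_1(\vct{e}_i)\|^2 = \gamma_i^2$ on the top coordinates and $0$ otherwise, and combining the two cases gives $\|f_l(\vct{e}_i)\| = \gamma_i^l$ on $\{j_1,\dots,j_{\min\{d,p\}}\}$ and $0$ elsewhere.

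The main obstacle is concentrated in the second step: carrying out the moment computations cleanly and, more importantly, recognizing the transformed objective as a plain Frobenius-norm low-rank approximation of $\diag(\beta_i)$, so that $\beta_i$ emerges as precisely the "effective signal strength" being thresholded at rank $\min\{d,p\}$; one must also check that the Eckart--Young solution simultaneously respects the rank and PSD constraints (it does, because $\diag(\beta_i)$ is itself PSD), and note that when several $\beta_i$ tie at the threshold the minimizer is only pinned down up to the choice of permutation $\Pi$ and a rotation within the tied eigenspace. Step three is then routine linear-algebra bookkeeping, its sole content being that balancedness upgrades "$\mtx{W}_1^\top\mtx{W}_1$ commutes with $\mtx{A}^\star$" to "$\mtx{W}_1^\top\mtx{W}_1 = (\mtx{A}^\star)^{1/2}$", which is exactly what produces the clean exponent $l$.
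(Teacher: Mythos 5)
Your proposal is correct, and its overall skeleton matches the paper's: first pin down the Gram matrix $(\mtx{W}_2\mtx{W}_1)^\top(\mtx{W}_2\mtx{W}_1)$ of the end-to-end map, then use the balancedness relation $\mtx{W}_1\mtx{W}_1^\top=\mtx{W}_2^\top\mtx{W}_2$ to split the singular values evenly across the two layers, which is exactly what produces the exponent $l$. The difference is in how the first step is executed. The paper computes the two augmentation moments and then simply invokes an external result (Lemma B.2 of \cite{xue2023features}) stating $\mtx{W}^\top\mtx{W}\mtx{M}=[\mtx{M}^{-1}\tilde{\mtx{M}}]_p$, whereas you re-derive the same characterization self-containedly: reduce the spectral loss to $-2\Tr(\mtx{A}\mtx{M}_+)+\Tr(\mtx{A}\Sigma\mtx{A}\Sigma)$, whiten via $\mtx{B}=\Sigma^{1/2}\mtx{A}\Sigma^{1/2}$, and apply Eckart--Young to $\|\mtx{B}-\diag(\beta_i)\|_F^2$ under the rank constraint, correctly noting that the PSD constraint is inactive and that ties among the $\beta_i$ only affect which permutation $\Pi$ is realized. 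This buys a proof that does not lean on the cited lemma and makes transparent why $\beta_i$ is the thresholded quantity while $\gamma_i$ is the per-layer weight; your moment computation ($\mtx{M}_+=\diag((1-\alpha_i)^2\phi_i^2)$) also matches the theorem statement's $(1-\alpha_i)$ convention, whereas the paper's appendix writes $(1-2\alpha_i)^2$ for the augmentation-center covariance, a small internal inconsistency your version avoids. Your layer-splitting argument (balancedness forces $\mtx{W}_1^\top\mtx{W}_1$ to be the unique PSD square root of $\mtx{A}^\star$) is equivalent to, and slightly cleaner than, the paper's explicit SVD-matching manipulation; like the paper, it relies on Theorem \ref{thm: min_norm} or on Theorem \ref{thm: gf} plus convergence to a global minimizer, which you state.
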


\textbf{What does the model do?} According to this theorem, the model follows two key steps: (1) Feature selection: The model selects the top $p$ features with the highest $\beta_i$ values, which experience a low level of disruption from augmentation ($\alpha_i$) and/or have a large feature magnitude ($\phi_i$). (2) Feature weighting: These selected features are scaled by $\gamma_i$ at each layer, with zero weight assigned to the remaining features. The rescaling serves a dual purpose: (a) 
\emph{moderating} the features by assigning small weights to either overly strong or overly weak features in terms of their magnitude, as indicated by $\gamma_i\rightarrow 0$ when $\phi_i$ approaches either $0$ or $+\infty$;
%balancing the effects of the selected features by assigning larger weights to weaker ones and smaller weight to stronger ones, as evident from the increasing-then-decreasing trend of $\gamma_i$ with increasing $\phi_i$; 
(b) giving larger weights to features that are less disrupted by augmentation.\looseness=-1

\textbf{The difference between $f_1(\cdot)$ and $f_2(\cdot)$.} (1) Both learn the same features but assign different weights to them. (2) $f_1(\cdot)$ treats features more equally, exhibiting a smaller gap between feature weights. \looseness=-1
%since the weights of features after the second layer are the squares of those after the first layer. 

% Consequently, using the  representations at the first layer instead of the second layer ensures that: firstly, all features deemed `necessary' due to their large $\beta_i$ values are retained, and secondly, these features are treated more equally.

\subsubsection{Why and when can more normalized features benefit a downstream task?}\label{sec: why_when}

To address this question, we analyze the representations of downstream data drawn from the following distribution: each input $\vct{x}\in\mathbbm{R}^d$ is a vector, and its $i$-th element is independently drawn from the set $\{-\hat{\phi}_i, \hat{\phi}_i\}$. Note that $\hat{\phi}$'s may differ from $\phi_i$'s, as in real-world scenarios, the input data in the downstream task may follow a different distribution than the pretraining inputs. Additionally, each input $\vct{x}$ is labeled as $\text{sign}(\vct{e}_{j^*}^\top \vct{x})$, where $\vct{e}_{j^*}$ is the $j^*$-th standard basis vector.  In simpler terms, the label is determined by the sign of the $j^*$-th coordinate of the input, that is %. We refer to this coordinate as 
the \emph{downstream-relevant feature}. \looseness=-1

To evaluate the informativeness of the learned representations
%learned features 
for the downstream task, we input the data (without any data augmentation, as is typically the case in practice) into the model that has been pretrained with the CL objective.
We then evaluate the quality of these representations at each layer by analyzing the sample complexity of the hard SVM trained with labels on these representations, which can be equivalently viewed as training a linear model with logistic losses using gradient descent \cite{soudry2018implicit}. This aligns with the standard linear evaluation protocol in practice \citep{ye2019unsupervised,oord2018representation,bachman2019learning,kolesnikov2019revisiting,chen2020simple}.For a data distribution that is separable with a $(\gamma, \rho)$-margin (see details in Appendix \ref{apdx: sample_comp}), it is well-known that the sample complexity only grows with $r=(\rho/\gamma)^2$, \cite{Bartlett1999GeneralizationPO}. Hence, we refer to $r$ as the sample complexity indicator and compare its values for pre-projection and post-projection representations. The following theorem shows the conditions under which one has a higher sample complexity indicator than the other.  Note that a smaller sample complexity is preferable.\looseness=-1
\begin{theorem}\label{thm: Delta}
Let $r_1$ and $r_2$ be the sample complexity indicators for pre-projection and post-projection representations, respectively. Define
$
   \Delta \coloneqq \sum_{1\leq i \leq \min\{d, p\} \text{ and }  j_i\neq j^* } \hat{\phi}_{j_i}^2 (\frac{\gamma_{j_i}^2}{\gamma_{j^*}^2} - \frac{\gamma_{j_i}^4}{\gamma_{j^*}^4})
$.
%Then, 
If $\Delta<0$ then $r_1<r_2$, and if $\Delta>0$ then $r_1>r_2$. 
\end{theorem}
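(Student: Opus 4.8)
### Proof proposal

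The plan is to compute explicit formulas for the two sample-complexity indicators $r_1$ and $r_2$, then compare them. By Theorem~\ref{thm: feature_weights}, feeding a downstream input $\vct{x}$ (no augmentation) through layer $l$ yields $f_l(\vct{x})$ whose $k$-th feature direction carries weight $\gamma_{j_k}^l$ for $k\leq\min\{d,p\}$ and $0$ otherwise; since the input coordinates are $\pm\hat\phi_{j_k}$, the representation $f_l(\vct{x})$ is (up to an orthogonal change of basis, which does not affect margins or norms) the vector with entries $\pm\gamma_{j_k}^l\hat\phi_{j_k}$. So the downstream data, as seen at layer $l$, lives on a scaled hypercube. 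First I would nail down the $(\gamma,\rho)$-margin of this data. The labelling rule is $\mathrm{sign}(\vct{e}_{j^*}^\top\vct{x})$, so the only coordinate correlated with the label is the $j^*$-th one; assuming $j^*\in\{j_1,\dots,j_{\min\{d,p\}}\}$ (otherwise the feature is dropped and the problem is degenerate — I would note this case), the max-margin separator at layer $l$ is the direction of that coordinate. The margin is then $\gamma_{j^*}^l\hat\phi_{j^*}$ (the signed distance contributed by the relevant coordinate) and the data radius is $\rho_l=\sqrt{\sum_{k\leq\min\{d,p\}}\gamma_{j_k}^{2l}\hat\phi_{j_k}^2}$, because every vertex of the hypercube has this norm.

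Next I would assemble the indicator $r_l=(\rho_l/\gamma_l^{\text{margin}})^2$. Writing it out,
\begin{align}
r_l = \frac{\sum_{k\leq\min\{d,p\}}\gamma_{j_k}^{2l}\hat\phi_{j_k}^2}{\gamma_{j^*}^{2l}\hat\phi_{j^*}^2} = 1 + \sum_{k\leq\min\{d,p\},\,j_k\neq j^*}\frac{\gamma_{j_k}^{2l}}{\gamma_{j^*}^{2l}}\hat\phi_{j_k}^2.
\end{align}
Now set $l=1$ and $l=2$ and subtract: the leading $1$'s cancel, and term-by-term
\begin{align}
r_1 - r_2 = \sum_{k\leq\min\{d,p\},\,j_k\neq j^*}\hat\phi_{j_k}^2\left(\frac{\gamma_{j_k}^{2}}{\gamma_{j^*}^{2}} - \frac{\gamma_{j_k}^{4}}{\gamma_{j^*}^{4}}\right) = \Delta.
\end{align}
Hence $r_1 - r_2 = \Delta$ exactly, and the sign claims ($\Delta<0\Rightarrow r_1<r_2$, $\Delta>0\Rightarrow r_1>r_2$) follow immediately. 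The bulk of the argument is really the two observations that (a) the representation is a rescaled hypercube whose radius is the $\ell_2$-norm of the weight vector and (b) the max-margin direction is simply the single label-bearing coordinate.

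The main obstacle I anticipate is rigorously justifying step (b) — that the hard-SVM / max-margin separator on this rescaled-hypercube data is exactly the $j^*$-axis direction, and that the margin is $\gamma_{j^*}^l\hat\phi_{j^*}$ rather than something larger obtained by tilting the separator to exploit correlations. Because the coordinates are independent and symmetric and only coordinate $j^*$ correlates with the label, any separating hyperplane must put all its "useful" weight on that coordinate; a hyperplane with a component along some $j_k\neq j^*$ can be beaten (in worst-case margin over the hypercube vertices) by zeroing that component, since some vertex will have that coordinate pointing the wrong way. I would make this precise with a short exchange/symmetrization argument over the $2^{\min\{d,p\}}$ vertices. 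A secondary technical point is confirming that the orthogonal transformation relating $f_l(\vct{x})$ to the canonical $\pm\gamma_{j_k}^l\hat\phi_{j_k}$ vector genuinely leaves $r_l$ invariant (it does, since both $\rho_l$ and the margin are defined through inner products and $\ell_2$-norms), and handling the edge cases where $j^*$ is not among the selected top-$p$ features or where some $\gamma_{j_k}=0$.
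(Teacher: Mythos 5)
Your proposal follows essentially the same route as the paper's appendix proof: use the per-layer feature weights from Theorem~\ref{thm: feature_weights} (together with the orthogonality of the directions $f_l(\vct{e}_i)$, which justifies your ``rescaled hypercube'' picture) to write $r_l = \bigl(\sum_{k}\gamma_{j_k}^{2l}\hat{\phi}_{j_k}^2\bigr)/\bigl(\gamma_{j^*}^{2l}\hat{\phi}_{j^*}^2\bigr)$ and then compare the two layers term by term. The only quibble is an algebra slip — you dropped the $\hat{\phi}_{j^*}^2$ in the denominator when extracting the leading $1$, so in fact $r_1 - r_2 = \Delta/\hat{\phi}_{j^*}^2$ rather than $\Delta$ exactly, which leaves the sign conclusion unchanged since $\hat{\phi}_{j^*}^2>0$; your additional max-margin optimality argument and edge-case caveats (e.g.\ $j^*$ not among the selected features) go beyond what the paper's witness-based margin definition requires, but are harmless.
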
\vspace{-3mm}
$\Delta$ depends on the strength of the downstream-relevant feature and the weights of features. While determining $\Delta$'s value may seem complex, 
% the corollary below outlines some useful cases.
% \begin{corollary}
%  (1) If $j^* = \arg\max_{j} \gamma_j $, meaning that the trained model assigns the largest weight to the downstream-relevant feature, then $\Delta>0$, and the post-projection representations are superior. (2) If $j^* = \arg\min_{j\in[j_1,\dots, j_{\min \{d, p\}}]} \gamma_j $, meaning that the downstream-relevant feature is learned by the model but has the smallest weight, then $\Delta<0$, and pre-projection representations are superior.
% \end{corollary}
in general, the key factor is whether the model assigns sufficient weight to the downstream-relevant feature. If this feature is underweighted by the model, using the first layer is beneficial. To better understand when this occurs, we provide the following interpretable examples
%\yx{where using the first layer is better, i.e., $\Delta < 0$.}
\looseness=-1

% \yx{Next, we discuss when the downstream-relevant feature is underweighted. 
% %Now, the question reduces to when the useful feature is weighted not enough?
% Theorems \ref{thm: Delta} and \ref{thm: feature_weights} reveal that many factors influence feature weighting,
% %, including training features, downstream features, augmentation, and dimension, 
% making it challenging to account for all scenarios.}
% % As we can see from equations \ref{eq: Delta} and \ref{eq: beta_gamma}, many factors play a role in the weighting of features. This includes the training features, downstream features, augmentation, and dimension, making it extremely difficult to cover all cases.
% However, we offer the following interpretable examples \yx{where using the first layer is better, i.e., $\Delta < 0$.}\looseness=-1 
%% In each bullet point below, we'll first provide a high-level summary of the concept, followed by a specific configuration example that enables readers to confirm $\Delta < 0$.
\begin{corollary}\label{cor: three}
In each of the following examples $\Delta < 0$, i.e., the pre-projection representations are preferred. (1) \textbf{Data augmentation disrupts the useful feature  too much.} Example: all features in both the pretraining and downstream data have a magnitude of $1$, and $\alpha_{j^*}$ is the $p$-th smallest among all $\{\alpha\}_{i=1}^d$,  indicating that the data augmentation disrupts useful feature the most among the $p$ features that will be learned. (2) \textbf{The downstream-relevant feature is too weak in pretraining.} Example: all $\alpha_i$'s are equal, $p\geq 2$,  $\forall i~~ \phi_i\leq \sigma$, and $\phi_{j^*}$ is the $p$-th largest among all $\{\phi_i\}^{d}_{i=1}$. (3) \textbf{Multiple features are selected by the model, with the downstream-relevant being too strong in pretraining.} Example: all $\alpha_i$'s are equal, $p\geq 2$, $ \forall i\neq j^* ~~\phi_{j^*} > \max\{\phi_i, \sigma/\phi_i\}$.
\end{corollary}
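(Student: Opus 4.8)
The plan is to verify each of the three examples by directly computing the sign of $\Delta$ using Theorem~\ref{thm: feature_weights} and the definitions of $\beta_i$ and $\gamma_i$. The key observation driving all three cases is that each summand in $\Delta$ has the form $\hat\phi_{j_i}^2\,\gamma_{j_i}^2/\gamma_{j^*}^2\left(1 - \gamma_{j_i}^2/\gamma_{j^*}^2\right)$, so the summand is negative precisely when $\gamma_{j_i}^2 > \gamma_{j^*}^2$, and it suffices to show that \emph{every} index $j_i$ appearing in the sum (i.e.\ every selected feature other than $j^*$) satisfies $\gamma_{j_i} > \gamma_{j^*}$ — provided $j^*$ is itself among the selected top-$p$ features, so that the comparison $r_1$ vs.\ $r_2$ is meaningful and $\gamma_{j^*}\neq 0$. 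I would first record the monotonicity facts I need: $\beta_i = (1-\alpha_i)\gamma_i^2\phi_i$ (so $\beta$ and $\gamma$ orderings interact through $\phi_i$ and $\alpha_i$), and that $\gamma_i^2 = (1-\alpha_i)\phi_i/(\phi_i^2+\sigma^2)$ is, for fixed $\alpha$, increasing in $\phi_i$ on $(0,\sigma]$ and decreasing on $[\sigma,\infty)$, with a maximum at $\phi_i=\sigma$, and for fixed $\phi$ is decreasing in $\alpha_i$.

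For case (1), all $\phi_i = \hat\phi_i = 1$, so $\beta_i = (1-\alpha_i)^2/(1+\sigma^2)$ and $\gamma_i^2 = (1-\alpha_i)/(1+\sigma^2)$; both are strictly decreasing in $\alpha_i$, hence the top-$p$ features by $\beta$ are exactly the $p$ features with smallest $\alpha_i$, and among these the $\beta$-ordering and the $\gamma$-ordering coincide. Since $\alpha_{j^*}$ is the $p$-th smallest, $j^*$ is selected (so $\gamma_{j^*}\neq 0$) and every other selected index $j_i$ has $\alpha_{j_i} < \alpha_{j^*}$, hence $\gamma_{j_i} > \gamma_{j^*}$; so every summand is strictly negative (and the sum is nonempty only if $p\geq 2$, otherwise $\Delta=0$ — I would note this edge case or assume $p\ge 2$ implicitly as the interesting regime). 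For case (2), all $\alpha_i$ equal means $\beta_i \propto \phi_i^2/(\phi_i^2+\sigma^2)$ and $\gamma_i^2 \propto \phi_i/(\phi_i^2+\sigma^2)$; the first is increasing in $\phi_i$, so the top-$p$ features by $\beta$ are the $p$ with largest $\phi_i$, and since $\phi_{j^*}$ is the $p$-th largest, $j^*$ is selected. The hypothesis $\phi_i\le\sigma$ for all $i$ puts every $\phi_i$ on the increasing branch of $\gamma_i^2$, so the $\beta$-order and $\gamma$-order again coincide on the selected set; every other selected $j_i$ has $\phi_{j_i} > \phi_{j^*}$, hence $\gamma_{j_i} > \gamma_{j^*}$, so again $\Delta<0$.

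For case (3), all $\alpha_i$ equal and the hypothesis is that $j^*$ dominates every other feature: $\phi_{j^*} > \max\{\phi_i, \sigma/\phi_i\}$ for all $i\neq j^*$, equivalently $\phi_{j^*} > \phi_i$ and $\phi_{j^*}\phi_i > \sigma$, i.e.\ $\phi_i > \sigma/\phi_{j^*}$. First, $\phi_{j^*}$ being the largest and $\beta$ increasing in $\phi$ ensures $j^*$ is selected. Now I need $\gamma_{j^*}^2 < \gamma_{j_i}^2$ for every other selected $j_i$, i.e.\ $\phi_{j^*}/(\phi_{j^*}^2+\sigma^2) < \phi_{j_i}/(\phi_{j_i}^2+\sigma^2)$; cross-multiplying and factoring gives the equivalent inequality $(\phi_{j^*}-\phi_{j_i})(\sigma^2 - \phi_{j^*}\phi_{j_i}) > 0$, and since $\phi_{j^*} > \phi_{j_i}$ (first factor positive) and $\phi_{j^*}\phi_{j_i} > \sigma$, hence $\phi_{j^*}\phi_{j_i} > \sigma^2$ assuming $\sigma\ge 1$ — wait, this needs care. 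I would instead use the cleaner route: the condition $\phi_{j^*} > \sigma/\phi_i$ reads $\phi_{j^*}\phi_i > \sigma$; to get $\phi_{j^*}\phi_{j_i} > \sigma^2$ I should read the hypothesis as stated, $\phi_{j^*} > \sigma/\phi_i$, which with $\phi_{j^*} > \phi_i \ge \phi_{j_i}$... the intended reading is surely that $\phi_{j^*}$ is large enough that $\phi_{j^*}\phi_i \ge \sigma^2$ (so both are past the peak in the relevant product sense); I would make the factorization $(\phi_{j^*}-\phi_{j_i})(\sigma^2-\phi_{j^*}\phi_{j_i})$ explicit and check its sign from the two hypotheses, which is the one genuine computation here. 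The main obstacle across all three parts is not any single hard step but rather being careful that $j^*$ is actually among the selected top-$p$ features and that the $\beta$-ordering used for selection is consistent with the $\gamma$-ordering needed for the sign of each summand — case (3) is where these can diverge (because $\gamma^2$ is non-monotone in $\phi$) and thus requires the explicit factorization argument rather than a monotonicity shortcut.
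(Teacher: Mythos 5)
Your overall route is the intended one: the paper gives no separate proof of Corollary \ref{cor: three}; it is meant to be checked exactly as you propose, by writing each summand of $\Delta$ as $\hat{\phi}_{j_i}^2\, x(1-x)$ with $x=\gamma_{j_i}^2/\gamma_{j^*}^2$, so that $\Delta<0$ follows once $j^*$ is among the selected top-$p$ features (so $\gamma_{j^*}\neq 0$) and every other selected index satisfies $\gamma_{j_i}>\gamma_{j^*}$. Your handling of cases (1) and (2) is correct: with the stated hypotheses, the $\beta$-ordering identifies the selected set, $\gamma_i^2$ is strictly increasing in $\phi_i$ on $(0,\sigma]$ and decreasing in $\alpha_i$, so the selected features other than $j^*$ all have strictly larger $\gamma$, and your remark that case (1) implicitly needs $p\ge 2$ (and strict ordering of the $\alpha_i$'s) for $\Delta<0$ rather than $\Delta=0$ is a fair observation about the statement itself.

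The gap is case (3), which you explicitly leave unfinished, and the partial work you do show contains two slips. Carrying out the computation you defer: with equal $\alpha_i$'s, $\gamma_{j_i}^2>\gamma_{j^*}^2$ is equivalent to $(\phi_{j^*}-\phi_{j_i})(\phi_{j^*}\phi_{j_i}-\sigma^2)>0$, i.e., given $\phi_{j^*}>\phi_{j_i}$, to $\phi_{j^*}\phi_{j_i}>\sigma^2$. Your factorization $(\phi_{j^*}-\phi_{j_i})(\sigma^2-\phi_{j^*}\phi_{j_i})>0$ has the second factor with the wrong sign (as written it characterizes $\gamma_{j_i}^2<\gamma_{j^*}^2$), and your aside ``assuming $\sigma\ge 1$'' is backwards: the hypothesis $\phi_{j^*}>\sigma/\phi_i$ gives $\phi_{j^*}\phi_i>\sigma$, which implies the needed $\phi_{j^*}\phi_i>\sigma^2$ only when $\sigma\le 1$. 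So to close case (3) you must either add the condition $\sigma\le 1$ (consistent with the regimes used in the paper's experiments, e.g.\ $\sigma=0.1$) or read the corollary's condition as $\phi_{j^*}>\sigma^2/\phi_i$, which is what the factorization actually requires; you gesture at the right target inequality but do not resolve this discrepancy. Since this sign-check is essentially the entire content of case (3) — as you yourself note, it is ``the one genuine computation here'' — leaving it undone constitutes a genuine gap in the proof as submitted.
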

It might seem surprising to include scenario 3 above, as having a strong downstream-relevant feature in pretraining may be expected to benefit the downstream task. However, as we discussed after Theorem \ref{thm: feature_weights}, the model moderates the features and assigns small weights to overly strong ones.
%%tends to strike a balance among features, often assigning smaller weights to stronger ones.
%%Therefore, when multiple features are selected by the model, an overly strong useful feature will receive smaller weights compared to other features.}
%%a stronger useful feature results in larger weights assigned to useless features. 
This can potentially explain why the use of a projection head remains beneficial in cases where the pretraining task seems to be a good match for the downstream task. Furthermore, we will validate each of the above three observations in experiments in Section \ref{sec: exp_sscl}.  We also provide a discussion on multi-layer models in Appendix \ref{sec: mutli_layer}.
% This explanation sheds light on why the use of a projection head remains beneficial in practice, even when the training and downstream tasks are highly consistent, e.g., when both are on CIFAR-10.

\vspace{-1mm}\subsection{Lower Layers can Learn More Features than Higher Layers Via Non-Linearity}\label{sec: cl_non_linear}\vspace{-1mm}

In the previous section on linear models, it's worth noting that both layers select the same features, albeit with different weightings. Now, let's consider a scenario where augmentation disrupts the useful feature excessively, for example, when $p_{j^*} = 1$ such that the useful feature is assigned a weight of $\gamma_{j^*}=0$. In such cases, neither layer would learn this feature. However, in this section, we'll explore an interesting aspect of non-linear models. Specifically, we'll demonstrate that pre-projection head representations can learn features that are weighted as zero in post-projection head representations.

For clarity, we will present our result in the simplest case, although it holds in broader scenarios. For the pretraining data, we let $d\geq 2$, and assume $\phi_1=\phi_2=1, \sigma=0$.
%(\yx{can we remove this assumption?}). 
Additionally, we set $p_{2} = 1$ and $p_{1} = 0$, \emph{meaning that the augmentation completely `destroys' feature 2 while fully preserving feature 1.} Consequently, during CL, the pretraining objective discourages the learning of feature 2.

\textbf{Non-Linear diagonal network.} We consider a diagonal non-linear network with 
\begin{align}
\label{eq: non_lin_diag}
    f_1(\vct{x}) = \sigma (\vct{w}_1 \odot \vct{x}, ~\vct{b}_1 ) ,~~~
    f_2(\vct{x}) = h(f_1(\vct{x})) = \sigma (\vct{w}_2 \odot f_1(\vct{x}), ~\vct{b}_2 ),
\end{align}
where $\vct{w}_1,\vct{w}_2, \vct{b}_1, \vct{b}_2\in\mathbbm{R}^d$ are the trainable weights and trainable biases, and $\sigma(\cdot, \cdot)$ represents the symmetrized ReLu activation function,
%\TODO{explain},
defined as $\sigma(a, b) = \relu(a - b) - \relu(-a - b)$, applied element-wise. In this model, each coordinate in the input is processed independently without any cross-coordinate connections. This design not only simplifies our analysis but also aligns with our definition where the features at all coordinates are independent. Because of this definition, there is no motivation for even a fully connected model to combine the features. Moreover, this model is sufficient for characterizing the feature selection and weighting processes described in the previous section, enabling us to understand the key aspects clearly. Our results derived with this model also extend to fully connected ReLU networks, as we will empirically demonstrate in Section \ref{sec: exp_sscl}.

We train the model using gradient flow to minimize the contrastive loss $\clloss(f_2)$. Interestingly, in the following theorem, where we compare the pre-projection and post-projection representations, we will observe that feature 2, which is discouraged from being learned during the pretraining process, has zero weight post-projection but a non-zero
%non-trivial 
weight pre-projection.\looseness=-1

% \yx{
% notations: $\vct{w}_1 = \cat{w_{11}}{ w_{12}}, \vct{w}_2=\cat{w_{21}}{w_{22}}$. and $\vct{w}_i^{(t)}$ denotes the weights at time $t$ during GF/GD. All the bias are trainable and initialized as $b_0$. %$\|f_i(\vct{e}_j)\|$ means the same as $w_{ij}$
% } \eg{I don't think this last notation makes sense. We can also use $b^{(0)}$ instead of $b_0$ for consistency}\yx{i meant that we wanted to talk about $\|f_i(\vct{e}_j)\|$ in the theorem. }\\
\begin{theorem}\label{thm: nonlin_cl}
If at initialization $\vct{w}_1^{(0)} =[ w_{11}^{(0)}~w_{12}^{(0)} \dots]^\top, \vct{w}_2^{(0)} = [w_{21}^{(0)}~ w_{22}^{(0)} \dots]^\top, \vct{b}_1^{(0)} = \vct{b}_2^{(0)} =[b^{(0)} ~b^{(0)} \dots]^\top $ with
 $|w_{22}^{(0)}|  \leq \sqrt{b_0}$ and $|w_{22}^{(0)}|(|w_{12}^{(0)}| - b_0) \geq b_0$, then as $t\rightarrow \infty$, $\|f_2(\vct{e}_2)\|\rightarrow 0$, $\|f_1(\vct{e}_2)\| \geq \sqrt{b_0}$.
 %, $\| f_2(\vct{e}_1) \| \rightarrow 1 $ 
 %and $\| f_1(\vct{e}_1) \| ? $ 
 % \eg{this also depends on initialization}\yx{are we able to characterize it? maybe with some assumption} . \yx{@Eric aslo can we add noise?}\yx{i added other cooridnates which are all independent from each other; can you incorporate that into the proof?}
\end{theorem}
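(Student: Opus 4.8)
\emph{Setting up.} The plan is to collapse the whole statement onto the four-dimensional gradient flow governing the coordinate-$2$ parameters and then track a few monotone quantities along it. Since the network is diagonal, the input coordinates are independent, and the augmentation noise is $0$, the loss decomposes as $\clloss(f_2)=\sum_{j=1}^d\mathcal{L}_j$, where $\mathcal{L}_j$ depends only on the four parameters $(w_{1j},w_{2j},b_{1j},b_{2j})$ attached to coordinate $j$; hence gradient flow on the coordinate-$2$ parameters is autonomous. Because the symmetrized ReLU satisfies $\sigma(0,b)=0$, feeding $\vct e_2$ annihilates every coordinate but the second, so $\|f_1(\vct e_2)\|=|A|$ and $\|f_2(\vct e_2)\|=|B|$, where $A:=\sigma(w_{12},b_{12})$ and $B:=\sigma(w_{22}A,\,b_{22})$ are the only nonzero entries of $f_1(\vct e_2),f_2(\vct e_2)$, and $(w_{12},w_{22},b_{12},b_{22})$ are the coordinate-$2$ parameters with $b_{12}^{(0)}=b_{22}^{(0)}=b_0:=b^{(0)}$ at initialization. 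A short computation, using that $\sigma(\cdot,b)$ is odd, that $\alpha_2=1$ makes the two augmented views of coordinate $2$ i.i.d.\ uniform on $\{-1,1\}$, and that the noise is $0$, shows that the positive-pair term of $\clloss$ contributes nothing at coordinate $2$ and that $\mathcal{L}_2=B^4$. Finally $\mathcal{L}_2$ is an even function of $w_{12}$ and, separately, of $w_{22}$, so flipping the sign of either initial weight merely reflects the trajectory; we may thus assume $w_{12}^{(0)},w_{22}^{(0)}>0$ (and $b_0>0$, the case $b_0=0$ being degenerate).

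\emph{The flow stays in the active region.} The hypothesis $|w_{22}^{(0)}|(|w_{12}^{(0)}|-b_0)\ge b_0>0$ forces $w_{12}^{(0)}>b_0$, so at $t=0$ both activations sit on their positive linear pieces, $A(0)=w_{12}^{(0)}-b_0\ge b_0/w_{22}^{(0)}\ge\sqrt{b_0}$ (here is where $|w_{22}^{(0)}|\le\sqrt{b_0}$ enters), and $B(0)=w_{22}^{(0)}A(0)-b_0\ge 0$. On the open region $\mathcal{R}=\{\,w_{12}>b_{12}>0,\ w_{22}(w_{12}-b_{12})>b_{22}>0\,\}$ the loss is smooth with $\mathcal{L}_2=(w_{22}A-b_{22})^4$, and the gradient-flow equations read $\dot b_{22}=4B^3\ge 0$, $\dot w_{22}=-4B^3A\le 0$, $\dot b_{12}=4B^3w_{22}\ge 0$, $\dot w_{12}=-4B^3w_{22}\le 0$, whence $\dot B=-4B^3(A^2+2w_{22}^2+1)\le -4B^3\le 0$ (and, as an aside, $A^2-2w_{22}^2$ is conserved). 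I would then argue, by a standard continuation argument, that the trajectory never leaves $\mathcal{R}$: on any finite interval each coordinate-$2$ parameter is monotone and bounded, so $A^2+2w_{22}^2+1\le M$ there, giving $\dot B\ge -4MB^3$ and hence $B(t)\ge (B(0)^{-2}+8Mt)^{-1/2}>0$ — $B$ cannot reach $0$ in finite time — while $A=(B+b_{22})/w_{22}>0$ keeps the first activation linear and $b_{12},b_{22},w_{22}$ stay positive.

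\emph{Conclusion.} On $\mathcal{R}$ one has the identity $w_{22}(t)\,A(t)=B(t)+b_{22}(t)$. Since $B(t)\ge 0$ and $b_{22}(t)$ is nondecreasing from $b_{22}(0)=b_0$, this gives $w_{22}(t)A(t)\ge b_0$ for all $t\ge 0$; since $w_{22}(t)$ is nonincreasing from $w_{22}(0)\le\sqrt{b_0}$, it follows that $\|f_1(\vct e_2)\|=A(t)\ge b_0/w_{22}(t)\ge\sqrt{b_0}$ for every $t$, in particular as $t\to\infty$. For the post-projection representation, $\dot B\le -4B^3$ integrates to $B(t)^{-2}\ge B(0)^{-2}+8t\to\infty$, so $B(t)\to 0$, i.e.\ $\|f_2(\vct e_2)\|\to 0$. (Boundedness of the four parameters together with $\int_0^\infty B^3\,dt\le B(0)/4<\infty$ also shows the trajectory converges, so the limiting pre-projection representation is well-defined.)

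\emph{Main obstacle.} The only genuinely delicate point is the global control of this nonsmooth flow — certifying that the trajectory never exits the ``both activations linear'' region $\mathcal{R}$ (so the closed-form ODEs stay valid) and that it converges rather than running off to infinity. Once that is established, the identity $w_{22}A=B+b_{22}$ and the monotonicity of $b_{22}$ (up) and $w_{22}$ (down) make the bound come out at exactly $\sqrt{b_0}$, with the assumption $|w_{22}^{(0)}|\le\sqrt{b_0}$ used precisely to convert $w_{22}A\ge b_0$ into $A\ge\sqrt{b_0}$.
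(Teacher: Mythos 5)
Your proposal is correct and takes essentially the same route as the paper's proof: the coordinate-wise decomposition of the loss so that the coordinate-2 parameters $(w_{12},w_{22},b_{12},b_{22})$ evolve autonomously with loss $B^4$, the same gradient-sign/monotonicity observations (weights decrease, thresholds increase), and the same closing manipulation $A \ge (B+b_{22})/w_{22} \ge b_{22}/w_{22} \ge \sqrt{b_{22}} \ge \sqrt{b_0}$ using $|w_{22}^{(t)}|\le |w_{22}^{(0)}|\le\sqrt{b_0}$. The extra steps you supply — the explicit ODE $\dot B=-4B^3(A^2+2w_{22}^2+1)$ giving $B\to 0$ quantitatively, and the continuation argument keeping the trajectory in the active region — only make rigorous what the paper asserts more tersely, so the two arguments are substantively the same.
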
\vspace{-2mm}
%\TODO{incorporate bounded noise; where $\|f_1e_1\|$ converges to}
The theorem indicates that pre-projection representations are more transferable. The `destroyed' feature, feature 2, is weighted zero after the projection head but non-zero before the projection head. Therefore, using the first layer representations allows us to successfully learn downstream tasks where feature 2 is the downstream relevant feature, whereas we can't do so with the second layer representations.  In practice, augmentations are imperfect and may inadvertently distort important features. Additionally, there is no one-size-fits-all augmentation suitable for all downstream tasks. With non-linear models, the projection head can save us from losing valuable information.

% $\bullet$ \textbf{The `destroyed' feature, feature 2, is weighted zero after the projection head but non-zero before the projection head.} Consequently, using the first layer representations allows us to successfully learn downstream tasks in cases where feature 2 is the useful feature, whereas we can't do so with the second layer representations. \\
% $\bullet$ \textbf{The first layer still respects the relative importance of features indicated by augmentation}, i.e., the weight of feature 1 is still greater than the weight of feature 2. \yx{this we can't say if we don't know what $\|f_1(\vct{e}_1)\|$ converges to.} \\

% \yx{if have time can think about how non linearity address cases where representation dimension is not large enough. e.g., by considering some other model architecture ... }

We note that the above advantage of the pre-projection representation diminishes when we use weight decay. Indeed, we can show that $\| f_1(\vct{e}_2) \|\rightarrow 0$ in this case as weight decay tends to shrink the weights considered unnecessary for minimizing the loss. However, we note that in practical scenarios, weight decay values are often small, e.g., $10^{-6}$, and training epochs are finite, e.g., 100, as the default setting in \cite{chen2020simple}. Therefore, we can estimate the effect as approximately $(1-10^{-6})^{100}\approx 1-10^{-4}$ for a weight that receives zero gradient from other sources, indicating a limited effect.
%on the weights that learn the additional features before projection. 
Consequently, the benefit of pre-projection representations remains evident with reasonable weight decay. However, using excessive weight decay will essentially remove this benefit, as shown empirically in Section \ref{sec: exp_sscl}. \looseness=-1
\vspace{-2mm}

\section{How the idea of projection head can benefit supervised learning}\label{sec: supervised}\vspace{-2mm}

The idea of projection head has been adopted by supervised contrastive learning \citep{khosla2020supervised}, yet its effects have not been systematically studied nor theoretically explored. Here, we provide theoretical insights into its benefits in both supervised CL (SCL) and standard supervised learning (SL). \looseness=-1

%We first discuss how the insights from the analysis of linear models in Section \ref{sec: ssl_linear} can be leveraged to briefly understand the benefit of projection heads in distribution shift scenarios. Then, we also concretely demonstrate how, with non-linear models, the projection head can improve the coarse-to-fine transferability of the representations.

\vspace{-2mm}\subsection{The general insights from linear models}\vspace{-2mm}

The insights gained from Section \ref{sec: ssl_linear} can be carried over to understand the case of supervised learning as well. The conclusion in Theorem \ref{thm: gf} actually holds in more generality. In fact, when employing a multi-layer linear model represented as $f(\vct{x}) = \mtx{W}_L\mtx{W}_{L-1}\dots\mtx{W}_1\vct{x}$ to minimize any loss function via gradient flow with initialization $\mtx{W}_l(0)^\top \mtx{W}_l(0) = \mtx{W}_{l+1}(0)\mtx{W}_{l+1}(0)^\top$ , one can derive a conclusion in the form of $\mtx{W}_l(t)^\top \mtx{W}_l(t) = \mtx{W}_{l+1}(t)\mtx{W}_{l+1}(t)^\top$, the proof of this relationship can be found in \cite{arora2018optimization}.
%As we have explained in Section \ref{sec: layerwise}, this indicates that linear models progressively assign weights to features as they operate through the layers. Consequently, the 
The singular values of the matrix represented by the model grow exponentially as we go deeper, making the model more specialized toward the pretraining task. Therefore, when considering cutting off at lower layers, it leads to more generalizable and less specialized representations. This is particularly beneficial when the target matrix $\mtx{W}^*$, determined by the training data to be represented by the model using $\mtx{W}_L\dots\mtx{W}_1$, doesn't adequately weight features that are important for downstream tasks.\looseness=-1

% Similar conclusions to those in Theorems \ref{thm: min_norm} and \ref{thm: gf} hold for both supervised Contrastive Loss (CL) and standard supervised losses as well. The analysis for supervised losses can be located in the proof provided by \cite{arora2018optimization}, and the part about supervised CL loss can be proved in a similar way to the self-supervised CL loss case. For the supervised CL loss, we consider the form of the supervised version of the spectral loss:
% \begin{align}
% \nonumber
% \supclloss(f) = & -2\mathbbm{E}_{\vct{x}, \vct{x}^+ \text{from the same class}}[f(\vct{x})^\top f(\vct{x}^+)] \\
%     \nonumber
%     + &  \mathbbm{E}_{\vct{x}, \vct{x}^- \text{independently drawn from the entire distribution}}[\left(f(\vct{x})^\top f(\vct{x}^-)\right)^2].
% \end{align}

%When it comes to scenarios where the pretraining task doesn't align perfectly with the downstream task, one well-known issue in supervised learning is distribution shift, where the downstream data follows a different distribution than the pretraining data. In our experiments in Section \ref{sec: exp_sup}, we will delve into this scenario and demonstrate the advantages of leveraging the projection head in such cases.
\looseness=-1

\vspace{-2mm}\subsection{Alleviating class collapse and neural collapse in non-linear models}\vspace{-1mm}

When considering representations' transferability, a crucial aspect is their utility for finer-grained downstream tasks (e.g., distinguishing dog breeds) compared to the pretraining task (e.g., distinguishing dogs from cats). In both SCL and standard supervised learning, a common challenge arises, where representations within each class become indistinguishable at a finer-grained level. This is known as class collapse in SCL \citep{cc_chen2022perfectlybalanced,cc_dissecting_scl_2021,xue2023features} and neural collapse in standard SL \cite{papyan2020prevalence,han2021neural,zhu2021geometric,zhou2022all,zhou2022optimization,lu2022neural,fang2021exploring,hui2022limitations}.\looseness=-1

Interestingly, we show that lower layers can learn subclass-level features not represented in the final layer in supervised learning, suggesting that using and then discarding the projection head can mitigate the issue of class/neural collapse. To theoretically characterize this, we consider the following data distribution. 
\begin{definition}\label{def: data_sup}
We have four subclasses represented by $(y, y_{sub})$, where $y\in\{1, -1\}$ is the class label, and $y_{sub}\in\{1, -1\}$. For each input vector $\vct{x}\in\mathbbm{R}^d$ in subclass $(y, y_{sub})$, $\vct{x} = [y ~~ y_{sub} ~~ \ldots]^\top$ with the other coordinates being independent and symmetrically distributed. 
%Therefore, the first coordinate indicates the class, and the combination of the first two coordinates indicates the subclass.\TODO{symmetric}     
\end{definition}
 We consider training the non-linear diagonal network defined in Equation \ref{eq: non_lin_diag} on the above data distribution. For supervised CL, we consider the following spectral loss, which a natural generalization from that for self-supervised ($\clloss$) \citep{haochen2021provable,xue2022investigating,saunshi2022understanding,haochen2022theoretical,garrido2022duality,xue2023features}.
 \looseness=-1
\begin{align}
\nonumber
\supclloss(f_2) =  -2\mathbbm{E}_{\vct{x}, \vct{x}^+ \text{$\sim$ the same class}}[f_2(\vct{x})^\top f_2(\vct{x}^+)] 
    +   \mathbbm{E}_{\vct{x}, \vct{x}^- \text{independently drawn}}[\left(f_2(\vct{x})^\top f_2(\vct{x}^-)\right)^2].
\end{align}
For standard supervised learning, we consider the Mean Squared Error (MSE) loss. Since the network's output $f_2(\vct{x})$ is $d$-dimensional, we map it to a scalar by adding a linear layer with all weights set to 1. Let $\vct{1}\in\mathbbm{R}^d$ be a vector with all elements equal to 1, then the loss can be written as:
\begin{align} \label{mse_loss}
\nonumber
\suploss(f_2) = & \E_{(\vct{x}, y)}(f_2(\vct{x})^\top \vct{1} - y)^2.
\end{align}
According to Definition \ref{def: data_sup}, we can use $\|f_i(\vct{e}_1)\|$ and $\|f_i(\vct{e}_2)\|$ to determine the model's weighting of the class feature and the subclass feature, respectively, in the $i$-th layer representations. Next theorem demonstrates that the features indicating subclasses are learned pre-projection but not post-projection.
\begin{theorem}\label{thm: cc_nc}
Under the same assumption about initialization as in Theorem \ref{thm: nonlin_cl}, and $w^{(0)}_{12}w^{(0)}_{22}>0$,  for either the model trained to minimize $\supclloss(f_2)$ or $\suploss(f_2)$ with gradient flow, as $t\rightarrow \infty$, $\|f_2(\vct{e}_2)\|\rightarrow 0$, $\|f_1(\vct{e}_2)\| \geq \sqrt{b_0}$.
%, $\| f_2(\vct{e}_1) \| \rightarrow 1 $ and $\| f_1(\vct{e}_1) \| ? $.
\vspace{-1mm}
\end{theorem}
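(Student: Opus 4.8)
The plan is to mirror the argument structure used for Theorem~\ref{thm: nonlin_cl} (the self-supervised non-linear case), exploiting the fact that the diagonal network acts coordinate-wise and so the dynamics on coordinates $1$ and $2$ decouple completely from each other and from the rest. First I would show that, because the loss is a sum over coordinates of independent contributions (the bilinear/quadratic structure of $\supclloss$ separates across coordinates once we note that distinct coordinates of $\vct{x}$ are independent and zero-mean, so cross terms vanish in expectation; similarly for $\suploss$ after expanding the square, the cross terms between coordinate~$1$ and coordinate~$2$ drop out because $\E[x_i x_j]=0$ and $\E[x_i]=0$), the gradient flow on $(w_{1k}, w_{2k}, b_{1k}, b_{2k})$ depends only on those four scalars. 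So it suffices to analyze coordinate~$2$ in isolation, with the per-coordinate loss it sees.

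The key observation is that coordinate~$2$ carries the subclass label $y_{sub}$, which is \emph{balanced within each class} and independent of $y$. Therefore, from the viewpoint of the population losses $\supclloss$ and $\suploss$, coordinate~$2$ looks exactly like a ``destroyed'' feature: in $\supclloss$, positive pairs are drawn from the same class, so $\E[x^{(2)} (x^+)^{(2)}] = \E[y_{sub}]\E[y_{sub}^+] = 0$ conditioned on the class (the two subclass labels within a class are independent), killing the alignment term on coordinate~$2$; and in $\suploss$, since $\E[x^{(2)} y] = \E[y_{sub} y] = 0$, coordinate~$2$ contributes only a non-negative penalty term $\propto (\text{its squared output scale})$ with no incentive to be nonzero. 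This is structurally identical to the situation $\alpha_2 = 1$ (equivalently $p_2 = 1$) in the self-supervised analysis: the pretraining objective strictly discourages feature~$2$. Hence I can invoke essentially verbatim the dynamical argument behind Theorem~\ref{thm: nonlin_cl}: track the evolution of $w_{12}$, $w_{22}$, $b_1$, $b_2$ on coordinate~$2$; use the initialization conditions $|w_{22}^{(0)}| \le \sqrt{b_0}$ and $|w_{22}^{(0)}|(|w_{12}^{(0)}| - b_0) \ge b_0$ together with the extra sign condition $w_{12}^{(0)} w_{22}^{(0)} > 0$ to show that the composition $f_2$ on coordinate~$2$ is driven to zero (the second-layer weight/bias configuration makes the symmetrized ReLU output collapse), while $f_1$ on coordinate~$2$ cannot shrink because no gradient signal reaches $w_{12}$ or $b_1$ through a loss that only ``sees'' $f_2$, and the bias $b_1$ stays bounded so that $\|f_1(\vct{e}_2)\| = |\sigma(w_{12}, b_1)|$ remains $\ge \sqrt{b_0}$.

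Concretely, the steps in order: (1) show the per-coordinate decoupling of gradient flow for the diagonal network under both losses; (2) compute the effective single-coordinate loss seen by coordinate~$2$ and verify it has the ``destroyed-feature'' form — monotonically penalizing $\|f_2(\vct{e}_2)\|$, flat in the pre-projection scale except through the coupling $w_2 \odot f_1$; (3) analyze the two-dimensional-per-coordinate dynamics of $(w_{12}, b_1)$ and $(w_{22}, b_2)$, using the invariants of gradient flow on the symmetrized-ReLU diagonal net (the ``balancedness''-type conserved quantities, analogous to Theorem~\ref{thm: gf}) plus the initialization inequalities, to conclude $f_2(\vct{e}_2) \to 0$; (4) argue $w_{12}$ and $b_1$ receive no gradient once $f_2(\vct{e}_2)$'s contribution vanishes (or are controlled throughout), so $\|f_1(\vct{e}_2)\|$ stays $\ge \sqrt{b_0}$; (5) note coordinate~$1$ (the class feature) behaves like a fully-preserved feature and is learned in both layers, so the stated conclusion about subclass collapse post-projection but not pre-projection follows.

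The main obstacle I anticipate is step~(3): carefully controlling the coupled non-linear ODEs for the symmetrized-ReLU diagonal network and showing the precise role of the sign condition $w_{12}^{(0)} w_{22}^{(0)} > 0$, which is the one new hypothesis relative to Theorem~\ref{thm: nonlin_cl}. The subtlety is that the ReLU makes the dynamics piecewise-smooth, so one must check which activation regime each coordinate's pre-/post-activation lives in along the whole trajectory — in particular verifying that the sign alignment of $w_{12}$ and $w_{22}$ keeps the second-layer pre-activation on the branch where it is pushed toward the ``dead'' region while the first-layer activation stays alive. I expect that once the correct invariant quantities are identified (as in Arora et al.\ and in the proof of Theorem~\ref{thm: nonlin_cl}), the inequalities $|w_{22}^{(0)}| \le \sqrt{b_0}$ and $|w_{22}^{(0)}|(|w_{12}^{(0)}| - b_0) \ge b_0$ are exactly what is needed to trap the trajectory in the right regime, and the argument for $\supclloss$ and $\suploss$ will be identical up to the scalar form of the per-coordinate loss, so both cases can be handled in one stroke.
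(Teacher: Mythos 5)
Your proposal is correct and follows essentially the same route as the paper: a coordinate-wise decomposition of $\supclloss$ and $\suploss$ for the odd, diagonal network over the symmetric data distribution (the paper's two decomposition lemmas), the observation that coordinate~2 then appears only through a pure penalty term ($z_2^4$ or $z_2^2$, i.e.\ a ``destroyed'' feature), and then the same monotone gradient-flow argument as for Theorem~\ref{thm: nonlin_cl}, with the sign condition $w^{(0)}_{12}w^{(0)}_{22}>0$ used in the MSE case. One small caution on your step (4): in the paper's proof the first-layer parameters \emph{do} receive gradient until the second-layer output of coordinate~2 enters the dead zone, and the bound $\|f_1(\vct{e}_2)\|\geq\sqrt{b_0}$ comes from combining that stopping condition $|w_{22}|(|w_{12}|-b_{12})\leq b_{22}$ with the monotonicity of $|w_{22}|$ (decreasing) and $b_{22}$ (increasing) and the initialization inequalities, not from $w_{12},b_1$ being gradient-free — exactly the quantitative step you flagged as the anticipated obstacle.
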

\section{Experiments}\label{sec: exp}
\vspace{-2mm}
\begin{figure}[t!]
    \centering
    \begin{minipage}[b]{0.56\textwidth}
    \centering
    \includegraphics[width=.49\textwidth]{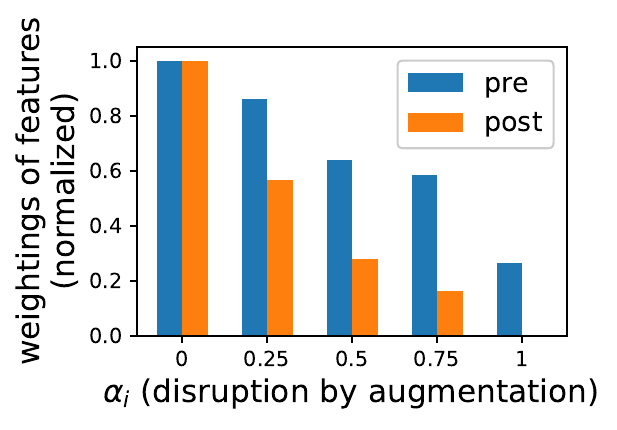}
    \includegraphics[width=.49\textwidth]{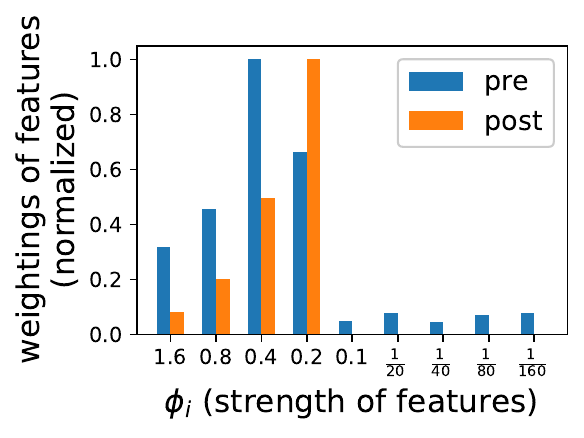}
    %\vspace{-.6cm}
    \vspace{-.7cm}
    \caption{\small Weights of features in a two-layer fully connected ReLU network trained with CL. \textbf{Left}: With all features having equal strength, those that are more disrupted by augmentation have smaller/zero weights. \textbf{Right}: With augmentation treating all features equally. features with the largest/smallest strength are weighted less compared to those with intermediate strength. In both \textbf{right} and \textbf{left}, weights are more equal pre-projection. %The above observations align with our theoretical results in Sec \ref{sec: theory_sscl}.
    \looseness=-1}
    \label{fig: synthetic_sscl}
    \end{minipage}
    \hspace{2mm}
    \begin{minipage}[b]{0.4\textwidth}
    \centering
    \includegraphics[width=.58\textwidth]{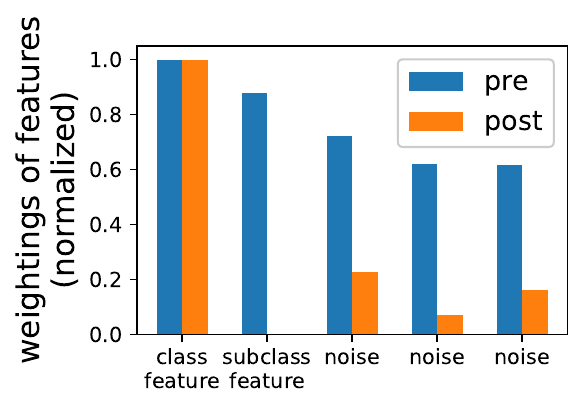}
    \includegraphics[width=.4\textwidth]{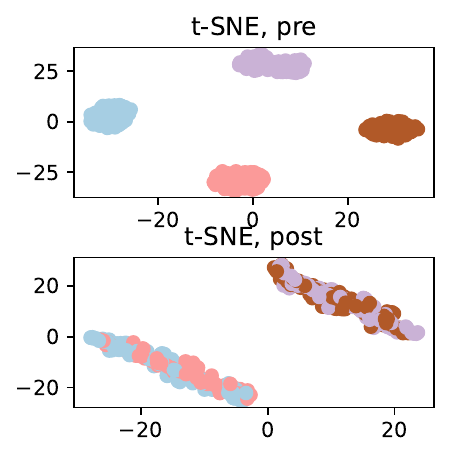}
    \vspace{-.2cm}
    \caption{\small \textbf{Left:} Weights of features in a two-layer fully connected ReLU network trained using SCL. The subclass feature is not represented post-projection but is represented pre-projection. \textbf{Right:} As a result, the four subclasses are only separable in pre-projection representations.\looseness=-1 }
    \label{fig: synthetic_supcl}
    \end{minipage}
    \vspace{-3mm}
\end{figure}

\begin{figure}[t!]
    \centering
    \begin{subfigure}[b]{0.17\textwidth}
    \centering
    \includegraphics[width=0.96\textwidth]{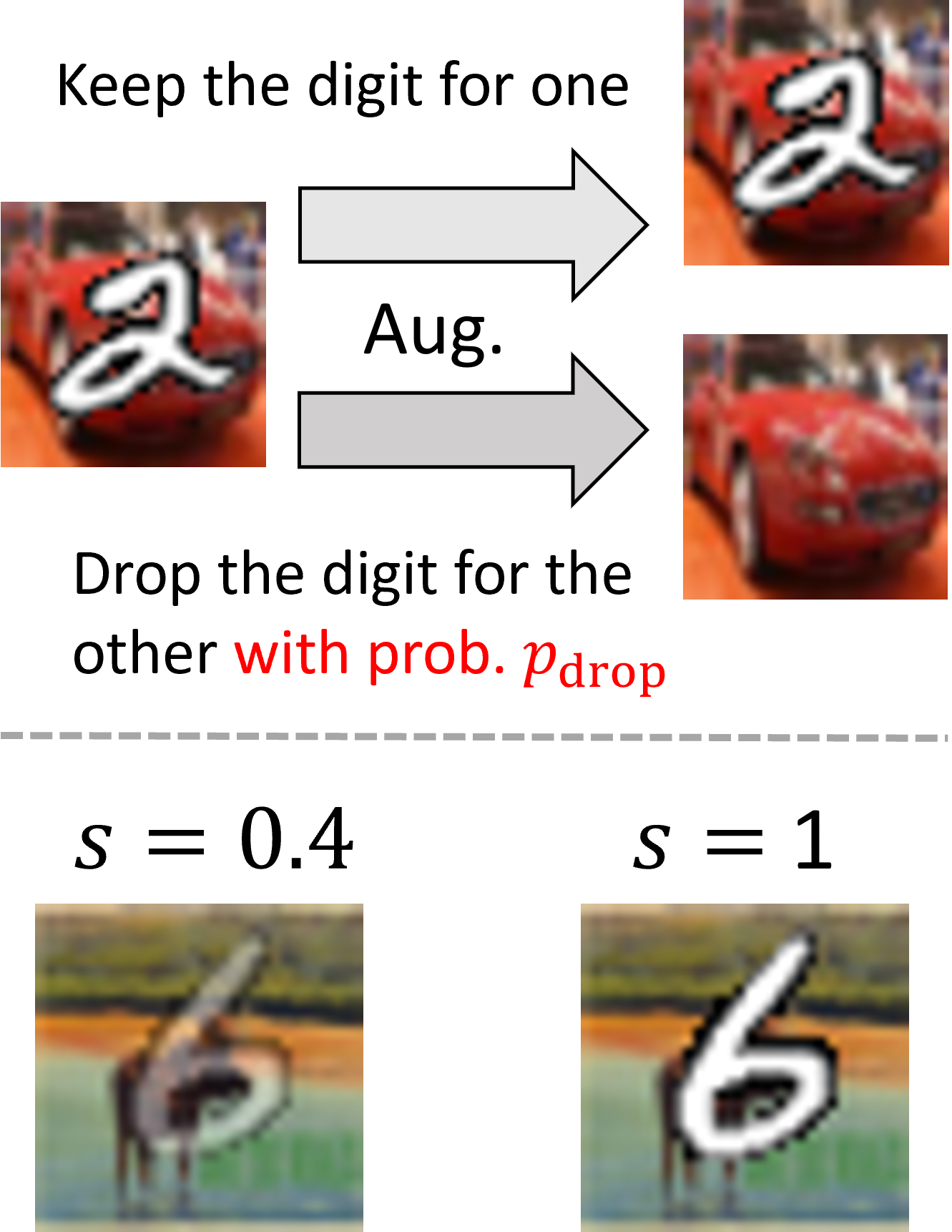}
    \caption{An illustration}
    \label{fig: mnist_illustration}
    \end{subfigure}
    \hspace{1mm}
    \begin{subfigure}[b]{0.22\textwidth}
    \centering
    \includegraphics[width=1\textwidth]{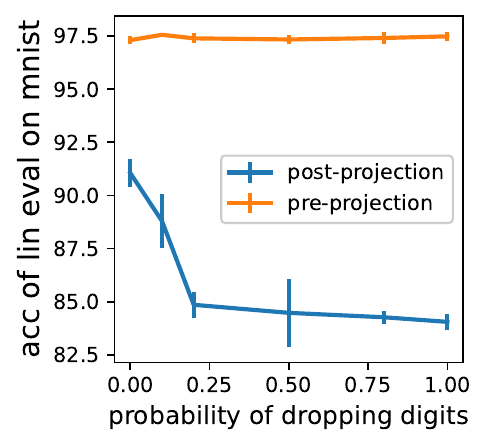}
    \caption{Effect of data aug.}
    \label{fig: mnist_aug}
    \end{subfigure}
    \begin{subfigure}[b]{0.22\textwidth}
    \centering
    \includegraphics[width=1\textwidth]{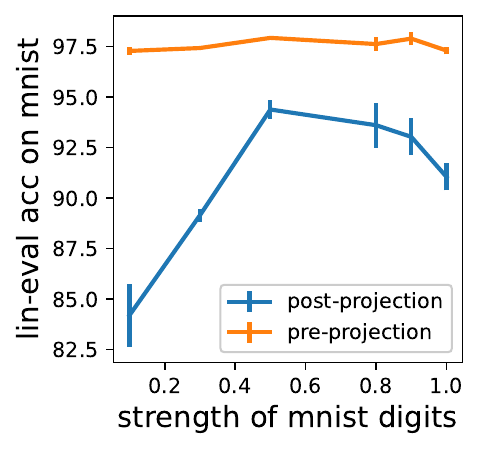}
    \caption{Effect of strength}
    \label{fig: mnist_strength}
    \end{subfigure}
    \begin{subfigure}[b]{0.21\textwidth}
    \centering
    \includegraphics[width=1\textwidth]{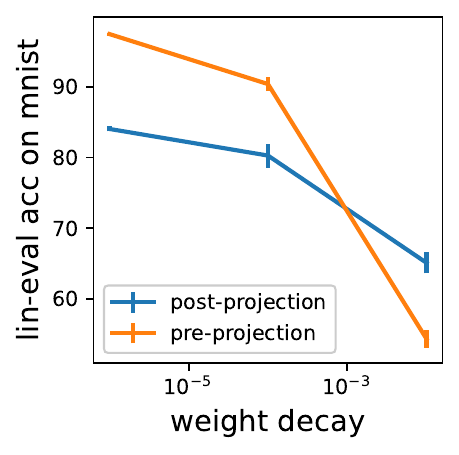}
    \caption{Effect of wd}
    \label{fig: mnist_wd}
    \end{subfigure}
    \vspace{-3mm}
    \caption{\small Results on MNIST-on-CIFAR-10. (a) The data augmentation keeps the digit for one image in the positive pair and randomly drops the digit for the other with a probability $p_{\text{drop}}$. Pre-projection is more beneficial (b) with more inappropriate augmentation (large $p_{\text{drop}}$) during pretraining, (c) when digits are very weak/strong during pretraining, and (d) when weight decay is smaller.  }
    \label{fig: mnist}
    \vspace{-5mm}
\end{figure}

\subsection{Controlled Experiments on Self-supervised CL}\label{sec: exp_sscl}
\vspace{-1mm}

The empirical benefits of the projection head are already well-established, and evidence can be found in numerous large-scale experiments. 
%Therefore, there is no need for us to reiterate this. 
Instead, in this subsection, we focus on controlled proof-of-concept experiments, allowing us to see the underlying mechanisms that are not shown in existing works and validate our theoretical results from Section \ref{sec: theory_sscl}. \looseness=-1

% simclr imagenet checkpoints
% \url{https://github.com/Spijkervet/SimCLR}\\
% \url{https://console.cloud.google.com/storage/browser/simclr-checkpoints-tf2/simclrv1/finetune[…]B%255D%22))&prefix=&forceOnObjectsSortingFiltering=false}

\textbf{Synthetic data.} We train a two-layer ReLU network on data drawn from the distribution defined in Definition \ref{def: data}, with the second ReLU layer serving as the projection head. The network is randomly initialized. 
%This allows us to demonstrate that the theoretical results apply to more general scenarios. 
More details are in Appendix \ref{apdx: exp_details}. We conduct experiments in two settings. (1) In setting 1, we let all five features have equal strength and let the augmentation disrupts features differently, with
%we set $d=5$ and $\phi_1=\dots=\phi_5=1$, indicating that all five features have equal strength. We set $\alpha_1,\alpha_2, \alpha_3, \alpha_4, \alpha_5$ to $0, 0.25, 0.5, 0.75, 1$, respectively, meaning that features are disrupted by the augmentation to different extents.  
feature 1 perfectly preserved and feature 5 completely randomized. Figure \ref{fig: synthetic_sscl} left shows the weights assigned to features at pre- and post-projection. Consistent with the conclusion from Theorem \ref{thm: feature_weights}, features disrupted more by data augmentation are assigned smaller weights. However, these features are weighted more equally pre-projection. Notably, feature 5
%, which is entirely randomized by data augmentation,  
has zero weight post-projection but non-zero weights pre-projection, aligning with the finding in Theorem \ref{thm: nonlin_cl}. (2) In setting 2, we aim to demonstrate the moderating effect reflected in Theorem \ref{thm: feature_weights}, %which explains why the downstream-relevant features being either too strong or too weak would lead to the superiority of pre-projection representations shown in Corollary \ref{cor: three}. 
We 
%set $d=9$ and 
let $\alpha_i$'s be equal, controlling the factor of data augmentation, then plot the weights of features with different strength in Figure \ref{fig: synthetic_sscl} right. Features being either too strong or too weak are weighted less than intermediate strength features, but in pre-projection, they are less underweighted compared to post-projection.

\textbf{MNIST-on-CIFAR-10.} We %demonstrate that our results extent to ResNet models trained on image data by 
design the following dataset 
for pretraining, with the downstream task on the original MNIST. The \textit{pretraining dataset} is based on CIFAR-10, where each image combines a CIFAR-10 image with a MNIST digit. Specifically, the pixel values within the digit area are calculated as $(1-s)\times$ CIFAR-10 image $+ s \times$ MNIST digit, while the values outside this area remain unchanged as in the CIFAR-10 image. Here $s$ controls the strength of the downstream-relevant feature (MNIST digits) during pretraining. We use the following \textit{data augmentation} during pretraining. For each positive pair, we keep one image's digit, while dropping the other's digit with probability $p_{drop}$.
%, before applying standard  RandomResizedCrop and ColourDistortion to both. 
A larger $p_{drop}$ means that the augmentation disrupts the downstream-relevant feature more. See Figure \ref{fig: mnist_illustration} for an illustration. The \textit{downstream task} is classification on the original MNIST dataset. We feed MNIST digits to the pretrained model and train a linear classifier on representations. We pretrain ResNet-18s with one-hidden-layer MLP as the projection head using the popular SimCLR loss \citep{chen2020simple} with varying $s$ and $p_{drop}$ and report the evaluation results in Figure \ref{fig: mnist}.\looseness=-1

Figure \ref{fig: mnist_aug} shows the downstream accuracy against $p_{\text{drop}}$ with $s=1$. 
%We present the results for models trained for both 100 and 400 epochs. In both cases, 
Pre-projection representations outperform post-projection representations, and the gap widens as $p_{\text{drop}}$ increases. 
%At epoch 100, the gap is only 2\% when $p_{\text{drop}}=0$ but is about 15\% when $p_{\text{drop}}=1$\TODO{check the numbers}. 
This confirms that inappropriate augmentations can significantly amplify the superiority of pre-projection representations, aligning with  Corollary \ref{cor: three}. Figure \ref{fig: mnist_strength} shows the downstream accuracy against $s$, the strength of digits, with $p_{\text{drop}}=0$. While the pre-projection accuracy barely changes, the post-projection accuracy increases and decreases, as $s$ increases, making the benefit of using pre-projection more pronounced when either the digit is too weak or too strong, aligning with Corollary \ref{cor: three}. Figure \ref{fig: mnist_wd} with $p_{\text{drop}}=1$ demonstrates the impact of weight decay. Larger weight decay diminishes the benefits of pre-projection representations, aligning with our discussion in Section \ref{sec: cl_non_linear}, and can even turn them into a detriment. 
%Weight decay encourages the model to be more parsimonious by shrinking weights that learn additional features before projection.
However, pre-projection remains superior with reasonable weight decay. 
%that yields good performance. 

In Appendix \ref{apdx: natural}, we present two additional experiments involving minimally modified CIFAR-10 images to further support our conclusions. We also discuss the effect of early stopping in Appendix \ref{apdx: es}.\looseness=-1

\vspace{-2mm}

\subsection{Projection Head in Supervised Learning}\label{sec: exp_sup}
\vspace{-1mm}

We present the following experiments showing broader implications of projection head in SCL and SL.
%More details can be found in Appendix \TODO{}.\looseness=-1
%Before we proceed, let's clarify `applying the projection head in standard supervised learning' in ResNet experiments. The original ResNet comprises an encoder and a final linear classifier layer. When introducing the projection head in standard supervised learning, we add an extra one-hidden layer MLP, serving as the projection head, between the encoder and the final linear classifier during pretraining. For supervised CL, there is no linear classifier after the projection head in pretraining, as the loss is computed based on representations.
\looseness=-1

\textbf{Coarse-to-fine transferability on synthetic data.} We consider the data distribution in Definition \ref{def: data_sup}. %with $d=5$, where the first two input coordinates encode class and subclass features, and the remaining coordinates are independent from the first two. 
A two-layer ReLU network is trained from random initialization on such data. Figure \ref{fig: synthetic_supcl} visualizes the weight assigned to each input component by the model (left) and  the representations  (right) at each layer. Consistent with the conclusion from Theorem \ref{thm: cc_nc}, we observe that the subclass feature is assigned a weight of zero post-projection but has a non-zero weight pre-projection. As a result, pre-projection representations do not suffer from class collapse, making them more transferable. %Similar results are observed for standard supervised learning as well (see Appendix \TODO{}).
\begin{wraptable}{r}{7.8cm}
\vspace{-2mm}
    \centering
        \caption{\small The pre-projection reps. suffer less from class/neural collapse, allowing for better fine-grained classification.}
        \vspace{-1mm}
    \label{tab:cifar100}
    \begin{tabular}{|c|c|c|c|c|}
    \hline
    & \multicolumn{2}{|c|}{coarse} & \multicolumn{2}{|c|}{fine} \\
    \hline
    & pre & post & pre & post \\
    \hline
       SCL  & $55.2_{\pm 0.4}$ & $53.6_{\pm 0.1}$ & $36.0_{\pm 0.1}$ & $25.7_{\pm 0.7}$ \\
       %& $54.1_{\pm 0.8}$ & $52.1_{\pm 0.4}$ & $35.1_{\pm 0.4}$ & $25.1_{\pm 0.3}$  \\
       \hline
       SL  & $53.1_{\pm 0.3}$ & $53.2_{\pm 0.7}$ & $33.7_{\pm 0.3}$ & $29.7_{\pm 0.3}$  \\
       \hline
    \end{tabular}
    \vspace{-3mm}
\end{wraptable}

\vspace{-4mm}
\textbf{Coarse-to-fine transferability on CIFAR-100.} We pretrain a ResNet-18s on CIFAR-100 with 20 coarse-grained labels and conducted linear evaluation on representations using both these 20 labels and 100 fine-grained labels, separately (details in Appendix \ref{apdx: exp_details}). The results are compared in Table \ref{tab:cifar100}. Pre-projection representations yield higher accuracy in general, but with the benefit much more significant in the fine-grained downstream task. This emphasizes that pre-projection representations are less susceptible to class/neural collapse, resulting in more distinguishable representations within each pretraining class.\looseness=-1

%\vspace{-2mm}
\textbf{Few-shot adaption to distribution shift on UrbanCars.} 
Subpopulation shift typically involves scenarios where some data groups are underrepresented in the source domain but well-represented in the target domain. Recent works have demonstrated the effectiveness of retraining 
\begin{wrapfigure}{r}{0.24\textwidth} 
\vspace{-.5cm}
\includegraphics[width=0.23\textwidth]{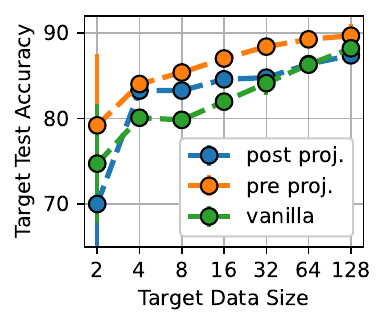}
\vspace{-.4cm}
  \caption{Performance of few-shot adaption.}
  \label{fig: adaption}
\vspace{-.4cm}
\end{wrapfigure}
the final linear layer using target data to adapt to distribution shifts \cite{rosenfeld2022domain,kirichenko2022last,mehta2022you}. However, it has been shown that this approach leads to suboptimal results when the target data for training the linear layer is scarce \cite{chen2023project}. This could be attributed to the suboptimal quality of penultimate representations. Here, in Figure \ref{fig: adaption}, we demonstrate that by applying the projection head technique and performing adaption on pre-projection representations can lead to better performance. The details are provided in the Appendix \ref{apdx: exp_details}. We also present experiments on shifted ImageNets in Appendix \ref{apdx: imagenet}.

\vspace{-3mm}\section{Replacing the Projection Head with a Fixed Reweighting Head}\vspace{-3mm}

\begin{table}[t]
    \caption{The fixed reweighting head can yield improvements comparable to  those of the trainable projection head. For the distribution shift scenario, we report the average test accuracies for 2, 4, 8, 16, 32, 64, and 128 shots adaption.  More detaied results are in Appendix \ref{apdx: adaption_5lines}.}
    \label{tab: reweight}
    \vspace{-.3cm}
    \centering
    \begin{tabular}{c|c|c|c|c|c|c}
    \hline
       Scenario & Dataset & Alg. & Performance Measure  & \multicolumn{3}{c}{\small Performance}  \\
       \cline{5-7} 
        &  & &  & {\small vanilla} & {\small proj} & {\small reweight}  \\
        \hline
        synthetic & M-on-C & SSCL & digit clf. acc.  &  77.0 & 97.3 & 97.3 \\
        coarse-to-fine & CIFAR100 & SCL & fine-grained clf. acc. & 21.8 & 36.0 & 30.2 \\
        coarse-to-fine & CIFAR100 & SL & fine-grained clf. acc. & 31.44 & 33.7 & 32.2 \\
        distribution shift & UrbanCars & SL & few-shot adaption acc.  & 82.2 & 86.1 & 87.0 \\
        \hline
    \end{tabular}
    \vspace{-.6cm}
\end{table}
Based on the observations above, we now explore an alternative design that employs a fixed re-weighting head instead of a trainable projection head. 
%This approach shares a similar intuition with the effects of the projection head we derived from earlier sections. 
%We configure the reweighting head to assign very unequal weights to the dimensions of the representation. Thus, during pretraining, the reweighting head takes on the role of unequal weighting. The encoder, positioned before the head, can map input features considered unimportant for the pretraining objective to dimensions with smaller weights, while mapping those deemed important to dimensions with larger weights, instead of representing them unequally itself. As a result, the encoder would end up representing input features relatively more equally. 
Let $\mathbf{r} = [r_1, \dots, r_p]^\top$ be the representation output by the encoder. The reweighting head $h_{\text{rw}}()$ acts as $
    h_{\text{rw}}( \vct{r} ) = [r_1, \frac{1}{\kappa}r_2, \dots, \frac{1}{\kappa^{p-1}}r_p ]^\top$, where $\kappa > 1$ is a hyperparameter. 
    %This approach shares a similar intuition with the effects of the projection head we derived from earlier sections. 
    This mimics the role of the projection head described earlier. During pretraining, the reweighting head unequally weight features 
    by assigning different weights to the dimensions of representations, with $\kappa$ controlling how unequal they are. This allows the encoder before the head to weight input features more equally. 
    Note that with a slightly large $\kappa$, dimensions at the end will be almost `turned off' after the head, which can result in
    %the scenario where 
    representations after the head representing fewer features than before. This mimics the effect discussed for non-linear models in Sections \ref{sec: theory_sscl} and \ref{sec: supervised}. We evaluate this approach in the scenarios from Sec. \ref{sec: exp_sup} and demonstrate that the fixed reweighting head can achieve improvements comparable to those of the projection head, as shown in Table \ref{tab: reweight}. This has two implications: (1) 
%The results serve as 
it serves as further evidence for our theoretical conclusions in Sections \ref{sec: theory_sscl} and \ref{sec: supervised}, indicating that the reweighting effect accounts for all or most of the improvements achieved by the projection head. (2) 
%This 
it opens up possibilities for designing more straightforward and interpretable alternatives to the projection head.
%However, we note that our intention here is merely to provide a preliminary exploration, and there is significant potential for further enhancement. 
%Additionally, 
It would be interesting for future work to explore the effects of making $\kappa$ trainable rather than fixed.

% \begin{table}[]
%     \centering
%     \begin{tabular}{c|c|c|c|c|c}
%        Scenario  & Algorithm & Performance Measure  & \begin{tabular}{c|c|c} \multicolumn{3}{c}{\small Performance} \\ \hline 
% vanilla & proj & reweight  \\ \end{tabular}   \\
%     synthetic & self-supervised CL &  &  77.0 & 97.3 & 97.3 \\
%     \end{tabular}
%     \caption{Caption}
%     \label{tab:my_label}
% \end{table}

\vspace{-2mm}\section{Conclusion}
\vspace{-2mm}
We provided the first theoretically rigorous explanation for the intriguing empirical success of using the projection head for self-supervised contrastive learning, supervised contrastive learning, and supervised learning. We demonstrated that lower layers represent features more evenly in linear networks and can represent more features in non-linear networks. This enhances the generalizability and transferability of the representations, especially when downstream-relevant features are weak in the pretraining data or heavily distorted by data augmentations. Interestingly, we also show the benefits when the downstream-relevant features are prominent in the pretraining data.
%We demonstrate that later layers of the network often become overly specialized due to the pre-training objective, thus resulting, in linear models, in the attenuation of features relevant to downstream tasks, and in non-linear models, complete suppression of these features. Interestingly, we prove that this phenomenon can occur not only when downstream-relevant features are weak or distorted by data augmentations but also when such features are extremely strong. 
We validated our theoretical findings through extensive experiments on both synthetic and real-world data. Finally, we demonstrate how a fixed reweighting head can achieve performance comparable to the projection head, providing further evidence to support our theoretical conclusions. We also hope that this will offer valuable guidance for future design choices.

%Understanding the projection head's effectiveness and its interaction with data augmentations and downstream tasks can drive further research into leveraging representations at different depths in deep networks. 

%\TODO{remove wb/celebA, mention reweighting head}

\textbf{Acknowledgements}
This research is partially supported by the National Science Foundation CAREER Award 2146492. %and Cisco Systems.

\looseness=-1

\bibliography{iclr2024_conference}

\begin{thebibliography}{68}
\providecommand{\natexlab}[1]{#1}
\providecommand{\url}[1]{\texttt{#1}}
\expandafter\ifx\csname urlstyle\endcsname\relax
  \providecommand{\doi}[1]{doi: #1}\else
  \providecommand{\doi}{doi: \begingroup \urlstyle{rm}\Url}\fi

\bibitem[Arora et~al.(2018)Arora, Cohen, and Hazan]{arora2018optimization}
Sanjeev Arora, Nadav Cohen, and Elad Hazan.
\newblock On the optimization of deep networks: Implicit acceleration by overparameterization.
\newblock In \emph{International Conference on Machine Learning}, pp.\  244--253. PMLR, 2018.

\bibitem[Bachman et~al.(2019)Bachman, Hjelm, and Buchwalter]{bachman2019learning}
Philip Bachman, R~Devon Hjelm, and William Buchwalter.
\newblock Learning representations by maximizing mutual information across views.
\newblock \emph{Advances in neural information processing systems}, 32, 2019.

\bibitem[Barbu et~al.(2019)Barbu, Mayo, Alverio, Luo, Wang, Gutfreund, Tenenbaum, and Katz]{barbu2019objectnet}
Andrei Barbu, David Mayo, Julian Alverio, William Luo, Christopher Wang, Dan Gutfreund, Josh Tenenbaum, and Boris Katz.
\newblock Objectnet: A large-scale bias-controlled dataset for pushing the limits of object recognition models.
\newblock \emph{Advances in neural information processing systems}, 32, 2019.

\bibitem[Bardes et~al.(2022)Bardes, Ponce, and LeCun]{bardes2022variance}
Adrien Bardes, Jean Ponce, and Yann LeCun.
\newblock Variance-invariance-covariance regularization for self-supervised learning.
\newblock \emph{ICLR, Vicreg}, 1:\penalty0 2, 2022.

\bibitem[Bartlett \& Shawe-Taylor(1999)Bartlett and Shawe-Taylor]{Bartlett1999GeneralizationPO}
Peter~L. Bartlett and John Shawe-Taylor.
\newblock Generalization performance of support vector machines and other pattern classifiers.
\newblock 1999.
\newblock URL \url{https://api.semanticscholar.org/CorpusID:61853586}.

\bibitem[Bordes et~al.(2023)Bordes, Balestriero, Garrido, Bardes, and Vincent]{bordes2023guillotine}
Florian Bordes, Randall Balestriero, Quentin Garrido, Adrien Bardes, and Pascal Vincent.
\newblock Guillotine regularization: Why removing layers is needed to improve generalization in self-supervised learning.
\newblock \emph{Transactions of Machine Learning Research (TMLR)}, 2023.

\bibitem[Chen et~al.(2023)Chen, Lee, Setlur, Levine, and Finn]{chen2023project}
Annie~S Chen, Yoonho Lee, Amrith Setlur, Sergey Levine, and Chelsea Finn.
\newblock Project and probe: Sample-efficient adaptation by interpolating orthogonal features.
\newblock In \emph{The Twelfth International Conference on Learning Representations}, 2023.

\bibitem[Chen et~al.(2022)Chen, Fu, Narayan, Zhang, Song, Fatahalian, and R{\'e}]{cc_chen2022perfectlybalanced}
Mayee Chen, Daniel~Y Fu, Avanika Narayan, Michael Zhang, Zhao Song, Kayvon Fatahalian, and Christopher R{\'e}.
\newblock Perfectly balanced: Improving transfer and robustness of supervised contrastive learning.
\newblock In \emph{International Conference on Machine Learning}, pp.\  3090--3122. PMLR, 2022.

\bibitem[Chen et~al.(2020)Chen, Kornblith, Norouzi, and Hinton]{chen2020simple}
Ting Chen, Simon Kornblith, Mohammad Norouzi, and Geoffrey Hinton.
\newblock A simple framework for contrastive learning of visual representations.
\newblock In \emph{International conference on machine learning}, pp.\  1597--1607. PMLR, 2020.

\bibitem[Chuang et~al.(2020)Chuang, Robinson, Lin, Torralba, and Jegelka]{chuang2020debiased}
Ching-Yao Chuang, Joshua Robinson, Yen-Chen Lin, Antonio Torralba, and Stefanie Jegelka.
\newblock Debiased contrastive learning.
\newblock \emph{Advances in neural information processing systems}, 33:\penalty0 8765--8775, 2020.

\bibitem[Deng et~al.(2009)Deng, Dong, Socher, Li, Li, and Fei-Fei]{deng2009imagenet}
Jia Deng, Wei Dong, Richard Socher, Li-Jia Li, Kai Li, and Li~Fei-Fei.
\newblock Imagenet: A large-scale hierarchical image database.
\newblock In \emph{2009 IEEE conference on computer vision and pattern recognition}, pp.\  248--255. Ieee, 2009.

\bibitem[Ermolov et~al.(2021)Ermolov, Siarohin, Sangineto, and Sebe]{ermolov2021whitening}
Aleksandr Ermolov, Aliaksandr Siarohin, Enver Sangineto, and Nicu Sebe.
\newblock Whitening for self-supervised representation learning.
\newblock In \emph{International Conference on Machine Learning}, pp.\  3015--3024. PMLR, 2021.

\bibitem[Fang et~al.(2021)Fang, He, Long, and Su]{fang2021exploring}
Cong Fang, Hangfeng He, Qi~Long, and Weijie~J Su.
\newblock Exploring deep neural networks via layer-peeled model: Minority collapse in imbalanced training.
\newblock \emph{Proceedings of the National Academy of Sciences}, 118\penalty0 (43):\penalty0 e2103091118, 2021.

\bibitem[Garrido et~al.(2022)Garrido, Chen, Bardes, Najman, and Lecun]{garrido2022duality}
Quentin Garrido, Yubei Chen, Adrien Bardes, Laurent Najman, and Yann Lecun.
\newblock On the duality between contrastive and non-contrastive self-supervised learning.
\newblock \emph{arXiv preprint arXiv:2206.02574}, 2022.

\bibitem[Geirhos et~al.(2018)Geirhos, Rubisch, Michaelis, Bethge, Wichmann, and Brendel]{geirhos2018imagenet}
Robert Geirhos, Patricia Rubisch, Claudio Michaelis, Matthias Bethge, Felix~A Wichmann, and Wieland Brendel.
\newblock Imagenet-trained cnns are biased towards texture; increasing shape bias improves accuracy and robustness.
\newblock \emph{arXiv preprint arXiv:1811.12231}, 2018.

\bibitem[Graf et~al.(2021)Graf, Hofer, Niethammer, and Kwitt]{cc_dissecting_scl_2021}
Florian Graf, Christoph Hofer, Marc Niethammer, and Roland Kwitt.
\newblock Dissecting supervised constrastive learning.
\newblock In \emph{International Conference on Machine Learning}, pp.\  3821--3830. PMLR, 2021.

\bibitem[Grill et~al.(2020)Grill, Strub, Altch{\'e}, Tallec, Richemond, Buchatskaya, Doersch, Avila~Pires, Guo, Gheshlaghi~Azar, et~al.]{grill2020bootstrap}
Jean-Bastien Grill, Florian Strub, Florent Altch{\'e}, Corentin Tallec, Pierre Richemond, Elena Buchatskaya, Carl Doersch, Bernardo Avila~Pires, Zhaohan Guo, Mohammad Gheshlaghi~Azar, et~al.
\newblock Bootstrap your own latent-a new approach to self-supervised learning.
\newblock \emph{Advances in neural information processing systems}, 33:\penalty0 21271--21284, 2020.

\bibitem[Gui et~al.(2023)Gui, Ma, and Zhong]{gui2023unraveling}
Yu~Gui, Cong Ma, and Yiqiao Zhong.
\newblock Unraveling projection heads in contrastive learning: Insights from expansion and shrinkage.
\newblock \emph{arXiv preprint arXiv:2306.03335}, 2023.

\bibitem[Gunasekar et~al.(2017)Gunasekar, Woodworth, Bhojanapalli, Neyshabur, and Srebro]{gunasekar2017implicit}
Suriya Gunasekar, Blake~E Woodworth, Srinadh Bhojanapalli, Behnam Neyshabur, and Nati Srebro.
\newblock Implicit regularization in matrix factorization.
\newblock \emph{Advances in neural information processing systems}, 30, 2017.

\bibitem[Han et~al.(2021)Han, Papyan, and Donoho]{han2021neural}
XY~Han, Vardan Papyan, and David~L Donoho.
\newblock Neural collapse under mse loss: Proximity to and dynamics on the central path.
\newblock \emph{arXiv preprint arXiv:2106.02073}, 2021.

\bibitem[HaoChen \& Ma(2022)HaoChen and Ma]{haochen2022theoretical}
Jeff~Z HaoChen and Tengyu Ma.
\newblock A theoretical study of inductive biases in contrastive learning.
\newblock \emph{arXiv preprint arXiv:2211.14699}, 2022.

\bibitem[HaoChen et~al.(2021)HaoChen, Wei, Gaidon, and Ma]{haochen2021provable}
Jeff~Z HaoChen, Colin Wei, Adrien Gaidon, and Tengyu Ma.
\newblock Provable guarantees for self-supervised deep learning with spectral contrastive loss.
\newblock \emph{Advances in Neural Information Processing Systems}, 34:\penalty0 5000--5011, 2021.

\bibitem[He et~al.(2020)He, Fan, Wu, Xie, and Girshick]{he2020momentum}
Kaiming He, Haoqi Fan, Yuxin Wu, Saining Xie, and Ross Girshick.
\newblock Momentum contrast for unsupervised visual representation learning.
\newblock In \emph{Proceedings of the IEEE/CVF conference on computer vision and pattern recognition}, pp.\  9729--9738, 2020.

\bibitem[Hendrycks et~al.(2019)Hendrycks, Lee, and Mazeika]{hendrycks2019using}
Dan Hendrycks, Kimin Lee, and Mantas Mazeika.
\newblock Using pre-training can improve model robustness and uncertainty.
\newblock In \emph{International conference on machine learning}, pp.\  2712--2721. PMLR, 2019.

\bibitem[Hendrycks et~al.(2021{\natexlab{a}})Hendrycks, Basart, Mu, Kadavath, Wang, Dorundo, Desai, Zhu, Parajuli, Guo, et~al.]{hendrycks2021many}
Dan Hendrycks, Steven Basart, Norman Mu, Saurav Kadavath, Frank Wang, Evan Dorundo, Rahul Desai, Tyler Zhu, Samyak Parajuli, Mike Guo, et~al.
\newblock The many faces of robustness: A critical analysis of out-of-distribution generalization.
\newblock In \emph{Proceedings of the IEEE/CVF International Conference on Computer Vision}, pp.\  8340--8349, 2021{\natexlab{a}}.

\bibitem[Hendrycks et~al.(2021{\natexlab{b}})Hendrycks, Zhao, Basart, Steinhardt, and Song]{hendrycks2021natural}
Dan Hendrycks, Kevin Zhao, Steven Basart, Jacob Steinhardt, and Dawn Song.
\newblock Natural adversarial examples, 2021{\natexlab{b}}.

\bibitem[Hui et~al.(2022)Hui, Belkin, and Nakkiran]{hui2022limitations}
Like Hui, Mikhail Belkin, and Preetum Nakkiran.
\newblock Limitations of neural collapse for understanding generalization in deep learning.
\newblock \emph{arXiv preprint arXiv:2202.08384}, 2022.

\bibitem[Ilyas et~al.(2019)Ilyas, Santurkar, Tsipras, Engstrom, Tran, and Madry]{ilyas2019adversarial}
Andrew Ilyas, Shibani Santurkar, Dimitris Tsipras, Logan Engstrom, Brandon Tran, and Aleksander Madry.
\newblock Adversarial examples are not bugs, they are features.
\newblock \emph{Advances in neural information processing systems}, 32, 2019.

\bibitem[Ji et~al.(2021)Ji, Deng, Nakada, Zou, and Zhang]{ji2021power}
Wenlong Ji, Zhun Deng, Ryumei Nakada, James Zou, and Linjun Zhang.
\newblock The power of contrast for feature learning: A theoretical analysis.
\newblock \emph{arXiv preprint arXiv:2110.02473}, 2021.

\bibitem[Jing et~al.(2021)Jing, Vincent, LeCun, and Tian]{jing2021understanding}
Li~Jing, Pascal Vincent, Yann LeCun, and Yuandong Tian.
\newblock Understanding dimensional collapse in contrastive self-supervised learning.
\newblock \emph{arXiv preprint arXiv:2110.09348}, 2021.

\bibitem[Joshi \& Mirzasoleiman(2023)Joshi and Mirzasoleiman]{pmlr-v202-joshi23b}
Siddharth Joshi and Baharan Mirzasoleiman.
\newblock Data-efficient contrastive self-supervised learning: Most beneficial examples for supervised learning contribute the least.
\newblock In Andreas Krause, Emma Brunskill, Kyunghyun Cho, Barbara Engelhardt, Sivan Sabato, and Jonathan Scarlett (eds.), \emph{Proceedings of the 40th International Conference on Machine Learning}, volume 202 of \emph{Proceedings of Machine Learning Research}, pp.\  15356--15370. PMLR, 23--29 Jul 2023.
\newblock URL \url{https://proceedings.mlr.press/v202/joshi23b.html}.

\bibitem[Joshi et~al.(2023)Joshi, Yang, Xue, Yang, and Mirzasoleiman]{joshi2023mitigating}
Siddharth Joshi, Yu~Yang, Yihao Xue, Wenhan Yang, and Baharan Mirzasoleiman.
\newblock Towards mitigating spurious correlations in the wild: A benchmark and a more realistic dataset, 2023.

\bibitem[Khosla et~al.(2020)Khosla, Teterwak, Wang, Sarna, Tian, Isola, Maschinot, Liu, and Krishnan]{khosla2020supervised}
Prannay Khosla, Piotr Teterwak, Chen Wang, Aaron Sarna, Yonglong Tian, Phillip Isola, Aaron Maschinot, Ce~Liu, and Dilip Krishnan.
\newblock Supervised contrastive learning.
\newblock \emph{Advances in neural information processing systems}, 33:\penalty0 18661--18673, 2020.

\bibitem[Kingma \& Ba(2014)Kingma and Ba]{kingma2014adam}
Diederik~P Kingma and Jimmy Ba.
\newblock Adam: A method for stochastic optimization.
\newblock \emph{arXiv preprint arXiv:1412.6980}, 2014.

\bibitem[Kirichenko et~al.(2022)Kirichenko, Izmailov, and Wilson]{kirichenko2022last}
Polina Kirichenko, Pavel Izmailov, and Andrew~Gordon Wilson.
\newblock Last layer re-training is sufficient for robustness to spurious correlations.
\newblock \emph{arXiv preprint arXiv:2204.02937}, 2022.

\bibitem[Koh et~al.(2021)Koh, Sagawa, Marklund, Xie, Zhang, Balsubramani, Hu, Yasunaga, Phillips, Gao, et~al.]{koh2021wilds}
Pang~Wei Koh, Shiori Sagawa, Henrik Marklund, Sang~Michael Xie, Marvin Zhang, Akshay Balsubramani, Weihua Hu, Michihiro Yasunaga, Richard~Lanas Phillips, Irena Gao, et~al.
\newblock Wilds: A benchmark of in-the-wild distribution shifts.
\newblock In \emph{International Conference on Machine Learning}, pp.\  5637--5664. PMLR, 2021.

\bibitem[Kolesnikov et~al.(2019)Kolesnikov, Zhai, and Beyer]{kolesnikov2019revisiting}
Alexander Kolesnikov, Xiaohua Zhai, and Lucas Beyer.
\newblock Revisiting self-supervised visual representation learning.
\newblock In \emph{Proceedings of the IEEE/CVF conference on computer vision and pattern recognition}, pp.\  1920--1929, 2019.

\bibitem[Krizhevsky et~al.(2009)Krizhevsky, Hinton, et~al.]{krizhevsky2009learning}
Alex Krizhevsky, Geoffrey Hinton, et~al.
\newblock Learning multiple layers of features from tiny images.
\newblock 2009.

\bibitem[LeCun(1998)]{lecun1998mnist}
Yann LeCun.
\newblock The mnist database of handwritten digits.
\newblock \emph{http://yann. lecun. com/exdb/mnist/}, 1998.

\bibitem[Li et~al.(2023)Li, Evtimov, Gordo, Hazirbas, Hassner, Ferrer, Xu, and Ibrahim]{li2023whac}
Zhiheng Li, Ivan Evtimov, Albert Gordo, Caner Hazirbas, Tal Hassner, Cristian~Canton Ferrer, Chenliang Xu, and Mark Ibrahim.
\newblock A whac-a-mole dilemma: Shortcuts come in multiples where mitigating one amplifies others.
\newblock In \emph{Proceedings of the IEEE/CVF Conference on Computer Vision and Pattern Recognition}, pp.\  20071--20082, 2023.

\bibitem[Liu et~al.(2021)Liu, HaoChen, Gaidon, and Ma]{liu2021self}
Hong Liu, Jeff~Z HaoChen, Adrien Gaidon, and Tengyu Ma.
\newblock Self-supervised learning is more robust to dataset imbalance.
\newblock \emph{arXiv preprint arXiv:2110.05025}, 2021.

\bibitem[Lu \& Steinerberger(2022)Lu and Steinerberger]{lu2022neural}
Jianfeng Lu and Stefan Steinerberger.
\newblock Neural collapse under cross-entropy loss.
\newblock \emph{Applied and Computational Harmonic Analysis}, 59:\penalty0 224--241, 2022.

\bibitem[Mehta et~al.(2022)Mehta, Albiero, Chen, Evtimov, Glaser, Li, and Hassner]{mehta2022you}
Raghav Mehta, V{\'\i}tor Albiero, Li~Chen, Ivan Evtimov, Tamar Glaser, Zhiheng Li, and Tal Hassner.
\newblock You only need a good embeddings extractor to fix spurious correlations.
\newblock \emph{arXiv preprint arXiv:2212.06254}, 2022.

\bibitem[Nakada et~al.(2023)Nakada, Gulluk, Deng, Ji, Zou, and Zhang]{nakada2023understanding}
Ryumei Nakada, Halil~Ibrahim Gulluk, Zhun Deng, Wenlong Ji, James Zou, and Linjun Zhang.
\newblock Understanding multimodal contrastive learning and incorporating unpaired data.
\newblock In \emph{International Conference on Artificial Intelligence and Statistics}, pp.\  4348--4380. PMLR, 2023.

\bibitem[Neyshabur et~al.(2014)Neyshabur, Tomioka, and Srebro]{neyshabur2014search}
Behnam Neyshabur, Ryota Tomioka, and Nathan Srebro.
\newblock In search of the real inductive bias: On the role of implicit regularization in deep learning.
\newblock \emph{arXiv preprint arXiv:1412.6614}, 2014.

\bibitem[Oord et~al.(2018)Oord, Li, and Vinyals]{oord2018representation}
Aaron van~den Oord, Yazhe Li, and Oriol Vinyals.
\newblock Representation learning with contrastive predictive coding.
\newblock \emph{arXiv preprint arXiv:1807.03748}, 2018.

\bibitem[Papyan et~al.(2020)Papyan, Han, and Donoho]{papyan2020prevalence}
Vardan Papyan, XY~Han, and David~L Donoho.
\newblock Prevalence of neural collapse during the terminal phase of deep learning training.
\newblock \emph{Proceedings of the National Academy of Sciences}, 117\penalty0 (40):\penalty0 24652--24663, 2020.

\bibitem[Recht et~al.(2019)Recht, Roelofs, Schmidt, and Shankar]{recht2019imagenet}
Benjamin Recht, Rebecca Roelofs, Ludwig Schmidt, and Vaishaal Shankar.
\newblock Do imagenet classifiers generalize to imagenet?
\newblock In \emph{International conference on machine learning}, pp.\  5389--5400. PMLR, 2019.

\bibitem[Rosenfeld et~al.(2022)Rosenfeld, Ravikumar, and Risteski]{rosenfeld2022domain}
Elan Rosenfeld, Pradeep Ravikumar, and Andrej Risteski.
\newblock Domain-adjusted regression or: Erm may already learn features sufficient for out-of-distribution generalization.
\newblock \emph{arXiv preprint arXiv:2202.06856}, 2022.

\bibitem[Sagawa et~al.(2019)Sagawa, Koh, Hashimoto, and Liang]{sagawa2019distributionally}
Shiori Sagawa, Pang~Wei Koh, Tatsunori~B Hashimoto, and Percy Liang.
\newblock Distributionally robust neural networks for group shifts: On the importance of regularization for worst-case generalization.
\newblock \emph{arXiv preprint arXiv:1911.08731}, 2019.

\bibitem[Sagawa et~al.(2020)Sagawa, Raghunathan, Koh, and Liang]{sagawa2020investigation}
Shiori Sagawa, Aditi Raghunathan, Pang~Wei Koh, and Percy Liang.
\newblock An investigation of why overparameterization exacerbates spurious correlations.
\newblock In \emph{International Conference on Machine Learning}, pp.\  8346--8356. PMLR, 2020.

\bibitem[Saunshi et~al.(2022)Saunshi, Ash, Goel, Misra, Zhang, Arora, Kakade, and Krishnamurthy]{saunshi2022understanding}
Nikunj Saunshi, Jordan Ash, Surbhi Goel, Dipendra Misra, Cyril Zhang, Sanjeev Arora, Sham Kakade, and Akshay Krishnamurthy.
\newblock Understanding contrastive learning requires incorporating inductive biases.
\newblock \emph{arXiv preprint arXiv:2202.14037}, 2022.

\bibitem[Shankar et~al.(2021)Shankar, Dave, Roelofs, Ramanan, Recht, and Schmidt]{shankar2021image}
Vaishaal Shankar, Achal Dave, Rebecca Roelofs, Deva Ramanan, Benjamin Recht, and Ludwig Schmidt.
\newblock Do image classifiers generalize across time?
\newblock In \emph{Proceedings of the IEEE/CVF International Conference on Computer Vision}, pp.\  9661--9669, 2021.

\bibitem[Soudry et~al.(2018)Soudry, Hoffer, Nacson, Gunasekar, and Srebro]{soudry2018implicit}
Daniel Soudry, Elad Hoffer, Mor~Shpigel Nacson, Suriya Gunasekar, and Nathan Srebro.
\newblock The implicit bias of gradient descent on separable data.
\newblock \emph{The Journal of Machine Learning Research}, 19\penalty0 (1):\penalty0 2822--2878, 2018.

\bibitem[Taori et~al.(2020)Taori, Dave, Shankar, Carlini, Recht, and Schmidt]{taori2020measuring}
Rohan Taori, Achal Dave, Vaishaal Shankar, Nicholas Carlini, Benjamin Recht, and Ludwig Schmidt.
\newblock Measuring robustness to natural distribution shifts in image classification.
\newblock \emph{Advances in Neural Information Processing Systems}, 33:\penalty0 18583--18599, 2020.

\bibitem[Tian et~al.(2021)Tian, Chen, and Ganguli]{tian2021understanding}
Yuandong Tian, Xinlei Chen, and Surya Ganguli.
\newblock Understanding self-supervised learning dynamics without contrastive pairs.
\newblock In \emph{International Conference on Machine Learning}, pp.\  10268--10278. PMLR, 2021.

\bibitem[Wang et~al.(2021)Wang, Chen, Du, and Tian]{wang2021towards}
Xiang Wang, Xinlei Chen, Simon~S Du, and Yuandong Tian.
\newblock Towards demystifying representation learning with non-contrastive self-supervision.
\newblock \emph{arXiv preprint arXiv:2110.04947}, 2021.

\bibitem[Wen \& Li(2022)Wen and Li]{wen2022mechanism}
Zixin Wen and Yuanzhi Li.
\newblock The mechanism of prediction head in non-contrastive self-supervised learning.
\newblock \emph{Advances in Neural Information Processing Systems}, 35:\penalty0 24794--24809, 2022.

\bibitem[Xiao et~al.(2020)Xiao, Engstrom, Ilyas, and Madry]{xiao2020noise}
Kai Xiao, Logan Engstrom, Andrew Ilyas, and Aleksander Madry.
\newblock Noise or signal: The role of image backgrounds in object recognition.
\newblock \emph{arXiv preprint arXiv:2006.09994}, 2020.

\bibitem[Xue et~al.(2022)Xue, Whitecross, and Mirzasoleiman]{xue2022investigating}
Yihao Xue, Kyle Whitecross, and Baharan Mirzasoleiman.
\newblock Investigating why contrastive learning benefits robustness against label noise.
\newblock In \emph{International Conference on Machine Learning}, pp.\  24851--24871. PMLR, 2022.

\bibitem[Xue et~al.(2023)Xue, Joshi, Gan, Chen, and Mirzasoleiman]{xue2023features}
Yihao Xue, Siddharth Joshi, Eric Gan, Pin-Yu Chen, and Baharan Mirzasoleiman.
\newblock Which features are learnt by contrastive learning? on the role of simplicity bias in class collapse and feature suppression.
\newblock \emph{arXiv preprint arXiv:2305.16536}, 2023.

\bibitem[Ye et~al.(2019)Ye, Zhang, Yuen, and Chang]{ye2019unsupervised}
Mang Ye, Xu~Zhang, Pong~C Yuen, and Shih-Fu Chang.
\newblock Unsupervised embedding learning via invariant and spreading instance feature.
\newblock In \emph{Proceedings of the IEEE/CVF conference on computer vision and pattern recognition}, pp.\  6210--6219, 2019.

\bibitem[Yeh et~al.(2022)Yeh, Hong, Hsu, Liu, Chen, and LeCun]{yeh2022decoupled}
Chun-Hsiao Yeh, Cheng-Yao Hong, Yen-Chi Hsu, Tyng-Luh Liu, Yubei Chen, and Yann LeCun.
\newblock Decoupled contrastive learning.
\newblock In \emph{European Conference on Computer Vision}, pp.\  668--684. Springer, 2022.

\bibitem[Zbontar et~al.(2021)Zbontar, Jing, Misra, LeCun, and Deny]{zbontar2021barlow}
Jure Zbontar, Li~Jing, Ishan Misra, Yann LeCun, and St{\'e}phane Deny.
\newblock Barlow twins: Self-supervised learning via redundancy reduction.
\newblock In \emph{International Conference on Machine Learning}, pp.\  12310--12320. PMLR, 2021.

\bibitem[Zhou et~al.(2022{\natexlab{a}})Zhou, Li, Ding, You, Qu, and Zhu]{zhou2022optimization}
Jinxin Zhou, Xiao Li, Tianyu Ding, Chong You, Qing Qu, and Zhihui Zhu.
\newblock On the optimization landscape of neural collapse under mse loss: Global optimality with unconstrained features.
\newblock In \emph{International Conference on Machine Learning}, pp.\  27179--27202. PMLR, 2022{\natexlab{a}}.

\bibitem[Zhou et~al.(2022{\natexlab{b}})Zhou, You, Li, Liu, Liu, Qu, and Zhu]{zhou2022all}
Jinxin Zhou, Chong You, Xiao Li, Kangning Liu, Sheng Liu, Qing Qu, and Zhihui Zhu.
\newblock Are all losses created equal: A neural collapse perspective.
\newblock \emph{arXiv preprint arXiv:2210.02192}, 2022{\natexlab{b}}.

\bibitem[Zhu et~al.(2021)Zhu, Ding, Zhou, Li, You, Sulam, and Qu]{zhu2021geometric}
Zhihui Zhu, Tianyu Ding, Jinxin Zhou, Xiao Li, Chong You, Jeremias Sulam, and Qing Qu.
\newblock A geometric analysis of neural collapse with unconstrained features.
\newblock \emph{Advances in Neural Information Processing Systems}, 34:\penalty0 29820--29834, 2021.

\bibitem[Zhu et~al.(2016)Zhu, Xie, and Yuille]{zhu2016object}
Zhuotun Zhu, Lingxi Xie, and Alan~L Yuille.
\newblock Object recognition with and without objects.
\newblock \emph{arXiv preprint arXiv:1611.06596}, 2016.

\end{thebibliography}
\bibliographystyle{iclr2024_conference}

\appendix

\newpage
\section{Theoretical Analysis}

\subsection{Structure of weights}

% \eg{todo: verify that all notation has defined in the main paper or here}

\begin{theorem}[Weights of the minimum norm minimizer]
The global minimizer of the CL loss $\clloss$ with the smallest norm, defined as $\|\mtx{W}_1^\top \mtx{W}_1\|_F^2 + \|\mtx{W}_2^{\top} \mtx{W}_2\|_F^2$, satisfies
\begin{align}
\nonumber
    \mtx{W}_1 \mtx{W}_1^{\top} = \mtx{W}_2^{\top} \mtx{W}_2.
\end{align}
\end{theorem}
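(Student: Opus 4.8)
The plan is to exploit the fact that $\clloss(f_2)$ depends on the parameters only through the end-to-end matrix $\mtx{M}\coloneqq\mtx{W}_2\mtx{W}_1$, since $f_2(\vct{x})=\mtx{W}_2\mtx{W}_1\vct{x}$ is linear. Thus $(\mtx{W}_1,\mtx{W}_2)$ is a global minimizer of $\clloss$ exactly when $\mtx{M}$ lies in the set $\mathcal{M}^\star$ of matrices minimizing the loss viewed as a function of $\mtx{M}$, and the norm minimization decouples into two nested problems: (i) for a fixed $\mtx{M}$, minimize $g(\mtx{W}_1,\mtx{W}_2)\coloneqq\|\mtx{W}_1^\top\mtx{W}_1\|_F^2+\|\mtx{W}_2^\top\mtx{W}_2\|_F^2$ over all factorizations with $\mtx{W}_2\mtx{W}_1=\mtx{M}$; and (ii) minimize the optimal value of (i) over $\mtx{M}\in\mathcal{M}^\star$. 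All the content sits in (i): I will show that its minimizers are precisely the ``balanced'' factorizations satisfying $\mtx{W}_1\mtx{W}_1^\top=\mtx{W}_2^\top\mtx{W}_2$, and the theorem follows since a global minimum-norm minimizer must be a balanced factorization of the least-Frobenius-norm element of $\mathcal{M}^\star$.

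For step (i) the argument is a short inequality chain followed by an equality analysis. Using the identity $\|\mtx{W}^\top\mtx{W}\|_F=\|\mtx{W}\mtx{W}^\top\|_F$ (both equal $\sqrt{\sum_i\sigma_i(\mtx{W})^4}$) and writing $\langle\cdot,\cdot\rangle$ for the Frobenius inner product,
\begin{align*}
\|\mtx{M}\|_F^2 &= \Tr\big(\mtx{W}_1\mtx{W}_1^\top\,\mtx{W}_2^\top\mtx{W}_2\big) = \langle\mtx{W}_1\mtx{W}_1^\top,\ \mtx{W}_2^\top\mtx{W}_2\rangle \\
&\le \|\mtx{W}_1\mtx{W}_1^\top\|_F\,\|\mtx{W}_2^\top\mtx{W}_2\|_F \le \tfrac12\,g(\mtx{W}_1,\mtx{W}_2),
\end{align*}
by Cauchy--Schwarz and then AM--GM. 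Hence $g\ge 2\|\mtx{M}\|_F^2$ for every factorization, and equality requires equality in both steps: Cauchy--Schwarz forces $\mtx{W}_1\mtx{W}_1^\top=c\,\mtx{W}_2^\top\mtx{W}_2$ for some $c\ge 0$ (both matrices are PSD), while AM--GM forces $\|\mtx{W}_1\mtx{W}_1^\top\|_F=\|\mtx{W}_2^\top\mtx{W}_2\|_F$; together these give $c=1$, i.e.\ $\mtx{W}_1\mtx{W}_1^\top=\mtx{W}_2^\top\mtx{W}_2$. The degenerate subcase $\mtx{W}_2=\mathbf 0$ forces $\mtx{M}=\mathbf 0$, whose minimum-norm factorization is $\mtx{W}_1=\mtx{W}_2=\mathbf 0$, which is trivially balanced.

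It remains to confirm that the bound $2\|\mtx{M}\|_F^2$ is attained for every $\mtx{M}$, so that (i) has optimal value exactly $2\|\mtx{M}\|_F^2$, achieved \emph{only} at balanced factorizations. I would exhibit such a factorization from the full SVD $\mtx{M}=\mtx{U}\mtx{\Sigma}\mtx{V}^\top$: take $\mtx{W}_2=\mtx{U}\,\mtx{\Sigma}_p^{1/2}\,\mtx{U}^\top$ and $\mtx{W}_1=\mtx{U}\,\mtx{\Sigma}^{1/2}\mtx{V}^\top$, where $\mtx{\Sigma}^{1/2}$ is the entrywise square root of $\mtx{\Sigma}$ and $\mtx{\Sigma}_p^{1/2}$ its leading $p\times p$ block; a direct computation gives $\mtx{W}_2\mtx{W}_1=\mtx{M}$ and $\mtx{W}_1\mtx{W}_1^\top=\mtx{W}_2^\top\mtx{W}_2=\mtx{U}\,\mathrm{diag}(\sigma_i)\,\mtx{U}^\top$, so $g=2\sum_i\sigma_i(\mtx{M})^2=2\|\mtx{M}\|_F^2$. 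Feeding this into (ii): the overall minimum is attained at the $\mtx{M}^\star\in\mathcal{M}^\star$ of smallest Frobenius norm, and any global minimum-norm minimizer has $\mtx{M}=\mtx{M}^\star$ and meets the displayed inequality with equality, hence satisfies $\mtx{W}_1\mtx{W}_1^\top=\mtx{W}_2^\top\mtx{W}_2$.

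I expect the main obstacle to be the equality analysis together with the degenerate and rank-deficient cases: one must argue carefully that proportionality plus equal Frobenius norm pins down $c=1$, treat $\mtx{M}=\mathbf 0$ separately, and note that rank deficiency of $\mtx{M}$ (when $\operatorname{rank}\mtx{M}<p$) cannot yield a factorization beating $2\|\mtx{M}\|_F^2$, precisely because the displayed inequality is unconditional. The reduction to the end-to-end matrix and the SVD construction are routine.
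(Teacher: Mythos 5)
Your proof is correct and follows essentially the same route as the paper's: reduce to factorizations of a fixed end-to-end minimizer $\mtx{W}_2\mtx{W}_1$, lower-bound the norm by $2\|\mtx{W}_2\mtx{W}_1\|_F^2$ with equality exactly at balanced factorizations, and show attainability via an SVD-based balanced factorization. The only cosmetic difference is that you obtain the lower bound from Cauchy--Schwarz plus AM--GM, whereas the paper completes the square via the identity $\|\mtx{W}_1^\top\mtx{W}_1\|_F^2+\|\mtx{W}_2^\top\mtx{W}_2\|_F^2=\Tr\bigl((\mtx{W}_1\mtx{W}_1^\top-\mtx{W}_2^\top\mtx{W}_2)^2\bigr)+2\|\mtx{W}_2\mtx{W}_1\|_F^2$, which is the same inequality in one line.
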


\begin{proof}
Observe that
\begin{align*}
    \|\mtx{W}_1^\top \mtx{W}_1\|_F^2 + \|\mtx{W}_2^{\top} \mtx{W}_2\|_F^2 &= \|\mtx{W}_1 \mtx{W}_1^{\top}\|_F^2 + \|\mtx{W}_2^{\top} \mtx{W}_2\|_F^2 \\
    &= \Tr((\mtx{W}_1 \mtx{W}_1^{\top})^2 + (\mtx{W}_2^{\top} \mtx{W}_2)^2) \\
    &= \Tr((\mtx{W}_1 \mtx{W}_1^{\top} - \mtx{W}_2^{\top} \mtx{W}_2)^2 - \mtx{W}_1 \mtx{W}_1^{\top}\mtx{W}_2^{\top} \mtx{W}_2 - \mtx{W}_2^{\top} \mtx{W}_2\mtx{W}_1 \mtx{W}_1^{\top}) \\
    &= \Tr((\mtx{W}_1 \mtx{W}_1^{\top} - \mtx{W}_2^{\top} \mtx{W}_2)^2) - 2\Tr(\mtx{W}_1^{\top}\mtx{W}_2^{\top} \mtx{W}_2 \mtx{W}_1)
\end{align*}
Consider any value of $\mtx{W} = \mtx{W}_2 \mtx{W}_1$ that is a global minimizer of the CL loss. This determines the value of $\Tr(\mtx{W}_1^{\top}\mtx{W}_2^{\top} \mtx{W}_2 \mtx{W}_1)$. Hence the norm is minimized when $\Tr((\mtx{W}_1 \mtx{W}_1^{\top} - \mtx{W}_2^{\top} \mtx{W}_2)^2) = 0$. But $(\mtx{W}_1 \mtx{W}_1^{\top} - \mtx{W}_2^{\top} \mtx{W}_2)^2$ is positive definite so this occurs if and only if $\mtx{W}_1 \mtx{W}_1^{\top} = \mtx{W}_2^{\top} \mtx{W}_2$. 

On the other hand, this minimum is achievable, since if $\mtx{W}$ has SVD $\mtx{W} = \mtx{U}\mtx{\Sigma}\mtx{V}^{\top}$, then $\mtx{W}_1 = \mtx{\Sigma}^{\frac{1}{2}} \mtx{V}^{\top}, \mtx{W}_2 = \mtx{U} \mtx{\Sigma}^{\frac{1}{2}}$ is a solution satisfying $\mtx{W}_1 \mtx{W}_1^{\top} = \mtx{W}_2^{\top} \mtx{W}_2$.
\end{proof}

\begin{theorem}[Weights of the model trained with gradient flow, proved in \cite{arora2018optimization}]\label{thm: apdx_gf}
Suppose the initialization satisfies
$$\mtx{W}_1(0) \mtx{W}_1^{\top}(0) = \mtx{W}_2^{\top}(0) \mtx{W}_2(0).$$ Using gradient flow, at any time $t$, we have
$$\mtx{W}_1(t) \mtx{W}_1^{\top}(t) = \mtx{W}_2^{\top}(t) \mtx{W}_2(t).$$
\end{theorem}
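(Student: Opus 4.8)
The plan is to show that the symmetric matrix $\mtx{W}_1(t)\mtx{W}_1^\top(t) - \mtx{W}_2^\top(t)\mtx{W}_2(t)$ is \emph{conserved} along the gradient-flow trajectory; it then equals its value at $t=0$, which is $\mtx{0}$ by the initialization hypothesis, giving the claim for all $t$.

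First I would observe that $\clloss(f_2)$ depends on $\mtx{W}_1,\mtx{W}_2$ only through the product $\mtx{W} := \mtx{W}_2\mtx{W}_1$, since $f_2(\vct{x})=\mtx{W}_2\mtx{W}_1\vct{x}$ and the loss is a functional of $f_2$ through its outputs alone. Writing $\mtx{G}$ for the gradient $\partial\clloss/\partial\mtx{W}$ evaluated at $\mtx{W}=\mtx{W}_2\mtx{W}_1$, the (entrywise) chain rule yields $\partial\clloss/\partial\mtx{W}_1=\mtx{W}_2^\top\mtx{G}$ and $\partial\clloss/\partial\mtx{W}_2=\mtx{G}\mtx{W}_1^\top$, so the gradient-flow equations read $\dot{\mtx{W}}_1=-\mtx{W}_2^\top\mtx{G}$ and $\dot{\mtx{W}}_2=-\mtx{G}\mtx{W}_1^\top$.

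Next I would differentiate the two relevant matrices in time via the product rule and substitute these ODEs:
\begin{align*}
\frac{d}{dt}\bigl(\mtx{W}_1\mtx{W}_1^\top\bigr) &= \dot{\mtx{W}}_1\mtx{W}_1^\top+\mtx{W}_1\dot{\mtx{W}}_1^\top = -\mtx{W}_2^\top\mtx{G}\mtx{W}_1^\top-\mtx{W}_1\mtx{G}^\top\mtx{W}_2,\\
\frac{d}{dt}\bigl(\mtx{W}_2^\top\mtx{W}_2\bigr) &= \dot{\mtx{W}}_2^\top\mtx{W}_2+\mtx{W}_2^\top\dot{\mtx{W}}_2 = -\mtx{W}_1\mtx{G}^\top\mtx{W}_2-\mtx{W}_2^\top\mtx{G}\mtx{W}_1^\top.
\end{align*}
The two right-hand sides coincide, hence $\frac{d}{dt}\bigl(\mtx{W}_1\mtx{W}_1^\top-\mtx{W}_2^\top\mtx{W}_2\bigr)=\mtx{0}$ identically along the trajectory. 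A differentiable matrix-valued function on a time interval with identically vanishing derivative is constant, so evaluating at $t=0$ and invoking the hypothesis $\mtx{W}_1(0)\mtx{W}_1^\top(0)=\mtx{W}_2^\top(0)\mtx{W}_2(0)$ yields $\mtx{W}_1(t)\mtx{W}_1^\top(t)=\mtx{W}_2^\top(t)\mtx{W}_2(t)$ for all $t$.

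The argument has essentially no hard step: the cancellation above never used any property of $\mtx{G}$, so it is loss-agnostic and extends verbatim to any objective factoring through the product $\mtx{W}_2\mtx{W}_1$ --- precisely the generality invoked for the multi-layer linear model in Section~\ref{sec: supervised}. The only place that warrants a line of care is verifying the matrix chain-rule identities $\partial\clloss/\partial\mtx{W}_1=\mtx{W}_2^\top\mtx{G}$ and $\partial\clloss/\partial\mtx{W}_2=\mtx{G}\mtx{W}_1^\top$, which is cleanest done entrywise; everything else is the product rule plus the elementary fact just quoted (one also implicitly assumes the gradient flow is well-defined on the time interval in question, which holds locally since $\clloss$ is smooth). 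This is the route of \cite{arora2018optimization}, whose statement we are citing.
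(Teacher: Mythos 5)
Your proposal is correct and follows essentially the same route as the paper: compute the layer gradients via the chain rule (your $\mtx{G}$ is exactly the paper's $\partial\mathcal{L}/\partial\mtx{Z}\,\mtx{X}^{\top}$), differentiate $\mtx{W}_1\mtx{W}_1^{\top}$ and $\mtx{W}_2^{\top}\mtx{W}_2$ by the product rule, observe the two time-derivatives coincide, and conclude the difference is conserved and hence zero by the initialization. The only cosmetic difference is that you phrase the argument loss-agnostically through the product $\mtx{W}=\mtx{W}_2\mtx{W}_1$, which the paper's proof implicitly does as well.
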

\begin{proof}
We reiterate the proof. Let $\mtx{Z} = f(\mtx{X})$ be the matrix of model outputs. Calculate that the gradients are
\begin{align*}
    \frac{\partial \mathcal{L}}{\partial \mtx{W}_1} &= \mtx{W}_2^{\top} \frac{\partial \mathcal{L}}{\partial \mtx{Z}} \mtx{X}^{\top} \\
    \frac{\partial \mathcal{L}}{\partial \mtx{W}_2} &= \frac{\partial \mathcal{L}}{\partial \mtx{Z}} \mtx{X}^{\top} \mtx{W}_1^{\top} \\
\end{align*}
In addition, the chain rule and gradient flow gives
\begin{align*}
    \frac{d}{dt} (\mtx{W}_1 \mtx{W}_1^{\top}) &= -\mtx{W}_1 \left(\frac{\partial \mathcal{L}}{\partial \mtx{W}_1}\right)^{\top} - \frac{\partial \mathcal{L}}{\partial \mtx{W}_1} \mtx{W}_1^{\top} \\
    \frac{d}{dt} (\mtx{W}_2^{\top} \mtx{W}_2) &= -\left(\frac{\partial \mathcal{L}}{\partial \mtx{W}_2}\right)^{\top} \mtx{W}_2 - \mtx{W}_2^{\top} \frac{\partial \mathcal{L}}{\partial \mtx{W}_2}.
\end{align*}
Substituting, we see that $\frac{d}{dt} (\mtx{W}_1 \mtx{W}_1^{\top}) = \frac{d}{dt} (\mtx{W}_2^{\top} \mtx{W}_2)$. The conclusion follows.
\end{proof}

\subsection{Weights of Features}\label{apdx: weights}

The following analysis is performed for self-supervised CL loss, under the condition that $\mtx{W}_1\mtx{W}_1^\top = \mtx{W}_2^\top\mtx{W}_2$.

Define the covariance of augmented examples
\begin{align}
    \mtx{M} \coloneqq \E[ \mathcal{A}(\vct{x}) (\mathcal{A}(\vct{x}))^\top] = \diag( [\phi_1^2+\sigma^2 \dots \phi_d^2+\sigma^2] ),
\end{align}
and the covariance of augmentation centers
\begin{align}
    \tilde{\mtx{M}} \coloneqq \E[ \E\mathcal{A}(\vct{x})(\E\mathcal{A}(\vct{x}))^\top] = \diag( [(1-2\alpha_1)^2\phi_1^2 \dots (1-2\alpha_d)^2\phi_d^2] ).
\end{align}

Let $\mtx{W} = \mtx{W}_2\mtx{W}_1$. By invoking Lemma B.2 in \cite{xue2023features} about the minimizer of the loss, given the fact that $\mtx{M},\tilde{\mtx{M}}$ are full rank, we have
\begin{align}
    \mtx{W}^\top \mtx{W} \mtx{M} = [ \mtx{M}^{-1} \tilde{\mtx{M}}  ]_p,
\end{align}
where notation $[\mtx{A}]_p$ represents the matrix composed of the first $p$ eigenvalues and eigenvectors of a positive semidefinite $\mtx{A}$ (if $p \geq \text{rank}{\mtx{A}}$ then $[\mtx{A}]_p = \mtx{A}$). Therefore, substituting yields
\begin{align}
    \mtx{W}^\top \mtx{W} = [ \diag([\frac{(1-2\alpha_1)^2\phi_1^2}{\phi_1^2+\sigma^2} \dots]) ]_p \diag([\frac{1}{\phi_1^2+\sigma^2} \dots]) \coloneqq \mtx{D}
\end{align}
Note that $[ \diag([\frac{(1-2\alpha_1)^2\phi_1^2}{\phi_1^2+\sigma^2} \dots]) ]_p$ is a matrix that only keeps the $p$  largest diagonal entries of  $\diag([\frac{(1-2\alpha_1)^2\phi_1^2}{\phi_1^2+\sigma^2} \dots])$. 
%Define $\mtx{G}$ as a $p\times d$ matrix whose upper left block is $\sqrt{\mtx{D}}$ with remaining elements being 0. 
% \mtx{G}\coloneqq
% \begin{cases}
%  \sqrt{\mtx{D}} , ~~\text{if $p\geq d$} \\
%  \sqrt{}
% \end{cases}

To avoid cluttered notations, below we consider the case where $p\leq d$. However, we note that similar analysis holds for $p > d$. Now we can obtain $\mtx{W} = \mtx{U} \sqrt{\mtx{D}}$, where $\mtx{U}\in\mathbbm{R}^{p\times p}$ is a matrix with orthonormal rows. Then
\begin{align}
\|f_2(\vct{e}_i)\|  =\|\mtx{W}\vct{e}_i\| = \| \mtx{U}[i, :] \sqrt{\mtx{D}}[i, i]  \| = \sqrt{\mtx{D}}[i, i],
\end{align}
which yields the conclusion about $\|f_2(\vct{e}_i)\|$ in Theorem \ref{thm: feature_weights}.

Now let's examine $\|f_1(\vct{e}_i)\|$. Let $\mtx{U}_1\mtx{S}_1\mtx{V}_1^\top$ and $\mtx{U}_2\mtx{S}_2\mtx{V}_2^\top$ be $\mtx{W}_1$ and $\mtx{W}_2$'s SVD, respectively. Given that $\mtx{W}_1\mtx{W}_1^\top = \mtx{W}_2^\top\mtx{W}_2$, we have
\begin{align}
\label{eq: s_match}
    \mtx{U}_1 (\mtx{S}_1 \mtx{S}_1^\top) \mtx{U}_1^\top = \mtx{V}_2 \mtx{S}_2^2 \mtx{V}_2^\top,
\end{align}
implying that the non-zero singular values of $\mtx{W}_1$ and $\mtx{W}_2$ matches, and those singular values corresponding columns in $\mtx{U}_1$ match those in $\mtx{V}_2$. Given that $\mtx{W}_2\mtx{W}_1=\mtx{W}=\mtx{U}\sqrt{\mtx{D}}$, we have
\begin{align}
    \nonumber
   \mtx{W}_2\mtx{W}_1 =  & \mtx{U}_2\mtx{S}_2\mtx{S}_1\mtx{V}_1^\top = \mtx{U}\sqrt{D} ~~~~\text{because singular vectors match} \\
   \nonumber
   \mtx{S}_2\mtx{S}_1 \mtx{V}_1^\top = & \mtx{U}_2^\top \mtx{U}\sqrt{\mtx{D}}  ~~~~\text{because $\mtx{U}_2\in\mathbbm{R}^{p\times p}$ is unitary}.\\
   \label{eq: s1v1}
   \mtx{S}_1 \mtx{V}_1^\top = & \mtx{U}_2^\top \mtx{U}\sqrt[4]{\mtx{D}} ~~~~\text{because singular values match}.
\end{align}
Given equation \ref{eq: s1v1}, we have
\begin{align}
    \nonumber
    \|f_1(\vct{e}_i)\| = & \| \mtx{W}_1\vct{e}_i \| \\
    \nonumber
    = &  \| \mtx{U}_1\mtx{S}_1\vct{V}_1^\top \vct{e}_i \| \\
    \nonumber
    = &  \| \mtx{S}_1\vct{V}_1^\top \vct{e}_i \|\\
    \nonumber
    = & \| \mtx{U}_2^\top \mtx{U}\sqrt[4]{\mtx{D}} \vct{e}_i \|\\
    \nonumber
    = & \sqrt[4]{\mtx{D}}[i,i],
\end{align}
which completes the proof of Theorem \ref{thm: feature_weights}.

%i.e., only the entries with indices in $\Pi_p$, where  $\Pi_p$ denotes the first $p$ elements in $\Pi$ (defined in Theorem \ref{thm: feature_weights}). 

\subsection{Comparing sample complexity}\label{apdx: sample_comp}

\begin{definition}\label{def: separable}
 We say that a data distribution $\mathcal{P}$ is separable with a $(\gamma, \rho)$ margin if where exists $\vct{w}^*, b^*$ s.t. $\| \vct{w}^* \|=1$ and 
 \begin{align}
     \nonumber
     \mathcal{P} (\{(\vct{x}, y): \| \vct{x} \|\leq \rho \land  y(\vct{w}^{*\top} \vct{x} + b^*) \geq \gamma \}=1  ).
 \end{align}
\end{definition}
It is well-known that the sample complexity of hard-margin SVM only grows with $r\coloneqq(\rho/\gamma)^2$ (e.g., \cite{Bartlett1999GeneralizationPO} ). Therefore, we refer to $r$ as the sample complexity indicator. From the analysis in Section \ref{apdx: weights}, we know that 
\begin{align}
    \nonumber
    f_1(\vct{e}_i) = \mtx{U}_1\mtx{U}_2\mtx{U}\sqrt[4]{\mtx{D}} \vct{e}_i = \sqrt[4]{\mtx{D}}[i,i] (\mtx{U}_1\mtx{U}_2\mtx{U})[i, :].
\end{align}
Since $\mtx{U}_1\mtx{U}_2$ is unitary and $\mtx{U}$ has orthornormal rows, $\mtx{U}_1\mtx{U}_2\mtx{U}$ also has orthornormal rows. Thus, $f_1(\vct{e}_i)$'s are orthonormal. The same conclusion holds for $f_i(\vct{e}_i)$'s as well. 

Now, given Definition \ref{def: separable}, by letting $\vct{w}^* = \frac{f_{i}(\vct{e}_{j^*})}{\|f_{i}(\vct{e}_{j^*})\|} $ and $b^*=0$, we can obtain $\gamma = \|f_{i}(\vct{e}_{j^*})\| \hat{\phi}_{j^*} $ for the $i$-th layer's representations. We also have $p=\sqrt{\sum_{j=1}^p \|f_{i}(\vct{e}_{j})\|^2\hat{\phi}_j^2 }  $. Thus, the sample complexity indicator is  $r_i = (\frac{\sqrt{\sum_{j=1}^p \|f_{i}(\vct{e}_{j})\|^2\hat{\phi}_j^2 } }{ \|f_{i}(\vct{e}_{j^*})\| \hat{\phi}_{j^*} })^2$, for $i=1,2$. Substituting the values of $\|f_{i}(\vct{e}_{j})\|$'s into the comparison between $r_1$ and $r_2$, with some algebraic manipulation yields Theorem \ref{thm: Delta}.

\subsection{Analysis for non-linear models}
We introduce the following two lemmas which allow us to analyze coordinates of the model separately. 
\begin{lemma} \label{cl_coordinate_decomp}
Suppose the model $\vct{f}$ can be decomposed coordinate-wise $f_1, \dots, f_p$ and each $f_i$ is odd. Also suppose the dataset $\mathcal{D}$ follows a coordinate wise symmetric distribution, namely the pdf $p$ satisfies
\begin{align}
p(x_1, \dots, x_{i-1}, x_i, x_{i+1}, \dots x_p) = p(x_1, \dots, x_{i-1}, -x_i, x_{i+1}, \dots x_p)
\end{align}
for any i. Then the contrastive loss can be decomposed coordinate-wise
\begin{align}
    \mathcal{L} &= \sum_{i=1}^p -2\E[f_i(x_i) f_i(x_i^+)] + \E\left[\left(f_i(x_i) f_i(x_i^-)\right)^2\right]
\end{align}
\end{lemma}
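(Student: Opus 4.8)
The plan is to prove the coordinate-wise decomposition by a direct expansion of the contrastive loss, exploiting two facts: (i) the model factorizes as $\vct{f}(\vct{x}) = (f_1(x_1), \dots, f_p(x_p))$, so that an inner product $\vct{f}(\vct{x})^\top \vct{f}(\vct{y}) = \sum_{i=1}^p f_i(x_i) f_i(y_i)$, and (ii) the data distribution and the augmentation act independently across coordinates, so all the relevant joint expectations factor into products of per-coordinate expectations. I would first write $\clloss(\vct{f})$ with its two terms and substitute the inner-product expansion into each.

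For the \emph{alignment} term $-2\,\E[\vct{f}(\vct{x}_1^+)^\top \vct{f}(\vct{x}_2^+)] = -2\sum_i \E[f_i((x_1^+)_i) f_i((x_2^+)_i)]$, there is nothing to do beyond linearity of expectation: this is already a coordinate-wise sum, and because the augmentation of $\vct{x}$ randomizes each coordinate independently (and the two positives $\vct{x}_1^+, \vct{x}_2^+$ are conditionally independent given $\vct{x}$), the $i$-th summand depends only on the $i$-th coordinate's marginal behavior, matching the claimed $-2\E[f_i(x_i) f_i(x_i^+)]$ term. For the \emph{uniformity} term $\E[(\vct{f}(\mathcal{A}(\vct{x}_1))^\top \vct{f}(\mathcal{A}(\vct{x}_2)))^2]$, expanding the square of the sum gives $\sum_i \E[f_i(\cdot) f_i(\cdot)]^2$-type diagonal contributions plus cross terms $\sum_{i\neq j} \E[f_i((x_1)_i) f_i((x_2)_i) f_j((x_1)_j) f_j((x_2)_j)]$ where $\vct{x}_1, \vct{x}_2$ are independent draws from $\mathcal{D}$ (fed through independent augmentations). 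The key step is showing every cross term vanishes: here I would use the coordinate-wise symmetry of $\mathcal{D}$ together with oddness of each $f_i$. Fixing $i\neq j$ and using independence across coordinates within a single sample, the expectation factors as $\E[f_i((x_1)_i) g_i] \cdot (\text{stuff})$; conditioning on all coordinates except the $i$-th of $\vct{x}_1$, the symmetry of $\mathcal{D}$ in coordinate $i$ combined with $f_i$ odd forces $\E[f_i((x_1)_i)\mid \text{rest}] = 0$ (note the augmentation, which randomizes a sign with some probability and adds mean-zero noise, preserves this: a sign-randomized symmetric variable plus symmetric noise is still symmetric, so $\E[f_i(\mathcal{A}(\vct{x})_i)\mid \cdots] = 0$). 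Hence each cross term is zero, leaving only the diagonal $\sum_i \E[(f_i(x_i) f_i(x_i^-))^2]$, which is exactly the claimed uniformity contribution.

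Assembling the two pieces gives $\clloss(\vct{f}) = \sum_{i=1}^p \big(-2\E[f_i(x_i) f_i(x_i^+)] + \E[(f_i(x_i) f_i(x_i^-))^2]\big)$, as stated. The main obstacle — really the only nontrivial point — is the vanishing of the cross terms in the uniformity term; everything else is bookkeeping with linearity of expectation and the product structure. The care needed there is to confirm that \emph{both} the raw data symmetry and the augmentation preserve the oddness-kills-the-mean argument, and to be careful that the expectation over the independent negative sample $\vct{x}^-$ (or $\vct{x}_2$) also factors coordinate-wise, which it does since $\vct{x}_1, \vct{x}_2$ are drawn independently and augmented independently. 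I would also remark that the same argument works verbatim when $p \neq d$ or for the supervised losses $\supclloss, \suploss$ after noting those share the same bilinear/quadratic structure in $\vct{f}$, though the lemma as stated only needs the self-supervised case.
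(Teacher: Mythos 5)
Your proposal is correct and follows essentially the same route as the paper: expand the inner products coordinate-wise and use the coordinate symmetry of $\mathcal{D}$ together with oddness of each $f_i$ to kill the cross terms in the squared (uniformity) term — the paper merely packages this cancellation as a Rademacher symmetrization (replace $x_i$ by $\sigma_i x_i$ with $\sigma_i\sim\mathrm{Unif}\{-1,1\}$, pull out $\sigma_i$ by oddness, and use $\E[\sigma_i\sigma_j]=0$) rather than your direct expansion with a conditional-expectation argument. One minor note: the lemma assumes only coordinate-wise symmetry, not independence across coordinates, so your factorization remark is superfluous, but your conditioning step $\E[f_i(x_i)\mid x_{-i}]=0$ already suffices and nothing in your argument breaks.
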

\begin{proof}
\begin{align}
    \mathcal{L} &= -2 \E[\vct{f(x)}^{\top}\vct{f(x)}_+] + \E[(\vct{f(x)}^{\top}\vct{f(x)}_-)^2] \\
    &= -2 \E\left[\sum_{i=1}^p f_i(x_i) f_i(x_i^+)\right] + \E\left[\left(\sum_{i=1}^p f_i(x_i) f_i(x_i^-)\right)^2\right] \\
    &= -2 \E\left[\sum_{i=1}^p f_i(x_i) f_i(x_i^+)\right] + \E\left[\E_{\sigma_i \sim Unif\{-1, 1\}}\left[\left(\sum_{i=1}^p f_i(\sigma_i x_i) f_i(x_i^-)\right)^2\right]\right] \\
    &= -2 \E\left[\sum_{i=1}^p f_i(x_i) f_i(x_i^+)\right] + \E\left[\E_{\sigma_i \sim Unif\{-1, 1\}}\left[\left(\sum_{i=1}^p \sigma_i f_i(x_i) f_i(x_i^-)\right)^2\right]\right] \\
    &= -2 \E\left[\sum_{i=1}^p f_i(x_i) f_i(x_i^+)\right] + \E\left[\sum_{i=1}^p \left(f_i(x_i) f_i(x_i^-)\right)^2\right] \\
    &= \sum_{i=1}^p -2\E[f_i(x_i) f_i(x_i^+)] + \E\left[\left(f_i(x_i) f_i(x_i^-)\right)^2\right]
\end{align}
Note that this hold for both supervised and unsupervised contrastive loss, since they only differ in how positive pairs are defined.
\end{proof}

\begin{lemma} \label{mse_coordinate_decomp}
Under the same assumptions as \ref{cl_coordinate_decomp}, MSE loss as defined in \ref{mse_loss} can be decomposed coordinate-wise.
% \begin{align}
%     \mathcal{L} &= \sum_{i=1}^p -2\E[f_i(x_i) f_i(x_i^+)] + \E\left[\left(f_i(x_i) f_i(x_i^-)\right)^2\right]
% \end{align}
\end{lemma}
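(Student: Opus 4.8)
The plan is to prove Lemma~\ref{mse_coordinate_decomp} by exactly mirroring the symmetrization argument used for Lemma~\ref{cl_coordinate_decomp}, but now for the squared-error objective. Recall that the network is the diagonal non-linear model, so $f_2(\vct{x})^\top\vct{1} = \sum_{i=1}^p f_{2,i}(x_i)$ where each $f_{2,i}$ acts only on coordinate $i$ and, since $\sigma(a,b)$ is odd in $a$ and the composition of two such maps preserves oddness, each $f_{2,i}$ is an odd function of $x_i$. Write $g_i \coloneqq f_{2,i}$ for brevity.

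First I would expand the square:
\begin{align}
\nonumber
\suploss(f_2) &= \E_{(\vct{x},y)}\Big(\sum_{i=1}^p g_i(x_i) - y\Big)^2 \\
\nonumber
&= \sum_{i=1}^p \E\big[g_i(x_i)^2\big] + \sum_{i\neq j}\E\big[g_i(x_i)g_j(x_j)\big] - 2\sum_{i=1}^p \E\big[y\, g_i(x_i)\big] + \E[y^2].
\end{align}
The key step is to show the cross terms $\E[g_i(x_i)g_j(x_j)]$ vanish for $i\neq j$. Here I would use the coordinate-wise symmetry of the distribution: conditioning on all coordinates except $i$, flipping the sign of $x_i$ leaves the joint density unchanged, while $g_i(-x_i) = -g_i(x_i)$ and $g_j(x_j)$ is unaffected; averaging over the $\pm$ flip of coordinate $i$ kills $\E[g_i(x_i)g_j(x_j)]$. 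This is precisely the $\sigma_i\sim\mathrm{Unif}\{-1,1\}$ trick from the previous lemma's proof. The term $\E[y\,g_i(x_i)]$ is already a per-coordinate quantity (it couples $y$ to coordinate $i$ only), and $\E[y^2]$ is a constant, so after grouping, the loss becomes $\suploss(f_2) = \sum_{i=1}^p \big(\E[g_i(x_i)^2] - 2\E[y\,g_i(x_i)]\big) + \E[y^2]$, which is a sum of per-coordinate functionals (plus an additive constant independent of the model). That is the coordinate-wise decomposition claimed; optionally one can absorb $\E[y^2]=1$ split evenly across coordinates, or simply note an additive constant does not affect minimization or gradient flow.

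I do not anticipate a serious obstacle — this is a routine symmetrization computation. The one point requiring a little care is whether $y$ itself interacts with coordinate $i$ in a way that breaks the sign-flip argument for the cross term: it does not, because in the cross term $\E[g_i(x_i)g_j(x_j)]$ the label $y$ does not appear at all, so the flip of $x_i$ is unconstrained by $y$; and in the linear-in-$g_i$ term $\E[y\,g_i(x_i)]$ we are not trying to make anything vanish. A secondary subtlety is that Definition~\ref{def: data_sup} fixes coordinates $1$ and $2$ to be deterministic functions of $(y,y_{sub})$ rather than marginally symmetric, so the coordinate-wise symmetry hypothesis of Lemma~\ref{cl_coordinate_decomp} must be read as applying to the coordinates over which we symmetrize (the "$\ldots$" coordinates, which are independent and symmetric); for coordinates $1$ and $2$ the decomposition still goes through because $x_1=y$ makes $\E[g_1(x_1)^2]$ and $\E[y g_1(x_1)]$ well-defined per-coordinate terms and the relevant cross terms involving coordinate $1$ or $2$ vanish once we symmetrize over a genuinely symmetric partner coordinate — I would state the lemma's applicability accordingly, matching how it is invoked in the proof of Theorem~\ref{thm: cc_nc}.
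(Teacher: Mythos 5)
Your proposal is correct and follows essentially the same route as the paper: expand the square and use coordinate-wise sign symmetry together with oddness of each $f_i$ (the $\sigma_i\sim\mathrm{Unif}\{-1,1\}$ trick) to kill the cross terms, leaving $\sum_{i}\big(\E[f_i(x_i)^2]-2\E[y\,f_i(x_i)]\big)$ plus a model-independent constant, which matches the paper's form $(1-p)\E[y^2]+\sum_i\E[(f_i(x_i)-y)^2]$. Your closing worry about coordinates $1$ and $2$ is unnecessary: since $y$ and $y_{sub}$ are symmetric $\pm 1$ variables, the distribution of Definition~\ref{def: data_sup} is already coordinate-wise sign-symmetric, which is exactly how the paper invokes the lemma.
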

\begin{proof}
\begin{align}
    \mathcal{L} &= \E_{(\vct{x}, y)}\left[(\vct{f}(\vct{x})^\top \vct{1} - y)^2\right] \\
    &= \E_{(\vct{x}, y)}\left[ \left(-y + \sum_{i=1}^p f_i(x_i) \right)^2\right] \\
    &= \E_{(\vct{x}, y)} \left[y^2 - 2\sum_{i=1}^p y f_i(x_i) + \E_{\sigma_i \sim Unif\{-1, 1\}}\left[\left(\sum_{i=1}^p f_i(\sigma_i x_i) \right)^2\right]\right] \\
    &= \E_{(\vct{x}, y)} \left[y^2 - 2\sum_{i=1}^p y f_i(x_i) + \E_{\sigma_i \sim Unif\{-1, 1\}}\left[\left(\sum_{i=1}^p \sigma_i f_i(x_i) \right)^2\right]\right] \\
    &= \E_{(\vct{x}, y)} \left[y^2 - 2\sum_{i=1}^p y f_i(x_i) + \sum_{i=1}^p \left(f_i(x_i)\right)^2\right] \\
    &= (1-p)\E_y[y^2] + \sum_{i=1}^p \E_{(\vct{x}, y)}\left[ \left(f_i(x_i) - y \right)^2\right]
\end{align}

\end{proof}

It is easy to check that the setting with the diagonal network and given data distribution satisfies the conditions of the lemma.

By the above lemmas, we can consider optimizing over each coordinate separately. 
%The following Theorem is slightly more generalized than the one presented in the main paper (Theorem \ref{thm: nonlin_cl}), where we allow the noise to be non-zero, but is bounded by $\delta$ at the second coordinate.
\begin{theorem}[Contrastive Loss]
Assume $|w_{22}^{(0)}|  \leq \sqrt{b^{(0)}}$ and $|w_{22}^{(0)}|(|w_{12}^{(0)}| - b^{(0)}) \geq b^{(0)}$, then as $t\rightarrow \infty$, $|f_2(\vct{e}_2)|\rightarrow 0$ and $|f_1(\vct{e}_2)| \geq \sqrt{b^{(0)}}  $.
%$|w_{12}^{(t)}| \geq \frac{\sqrt{b^{(0)}} + b^{(0)}}{1+\delta}$. %where $\max_{x \in \mathcal{D}}\{|x_2|\}$ represents the maximal absolute value at the input's second coordinate.
\end{theorem}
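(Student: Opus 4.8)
The idea is to use the coordinate-decomposition Lemma~\ref{cl_coordinate_decomp} to collapse the problem onto the four scalar parameters governing coordinate $2$, and then to integrate that low-dimensional gradient flow by finding its invariants. Since the diagonal network factors coordinate-wise and each coordinate map is odd, and $\vct{e}_2$ is supported only on coordinate $2$, both $f_1(\vct{e}_2)$ and $f_2(\vct{e}_2)$ are supported on coordinate $2$; write $q$ and $Q$ for their nonzero entries, so $q=\sigma(w_{12},b_{12})$ and $Q=\sigma(w_{22}q,\,b_{22})$, where $(w_{12},b_{12})$ and $(w_{22},b_{22})$ are the coordinate-$2$ weight/bias pairs of the two layers. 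With $\alpha_2=1$, noise level $0$, and $\phi_2=1$, the two augmented views of coordinate $2$ are i.i.d.\ uniform on $\{-1,1\}$ and independent of the clean value, so by oddness the positive-pair term of $\clloss$ restricted to coordinate $2$ vanishes and the restricted loss is simply $\mathcal L_2=Q^4$. Thus it suffices to study gradient flow of $(w_{12},b_{12},w_{22},b_{22})$ on $Q^4$ and to show $Q\to0$ while $|q|$ stays $\ge\sqrt{b^{(0)}}$ (the case $b^{(0)}=0$ is trivial since all relevant quantities are then nonnegative).

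\textbf{Fixing the active branches.} From the hypotheses, $|w_{12}^{(0)}|-b^{(0)}\ge b^{(0)}/|w_{22}^{(0)}|\ge\sqrt{b^{(0)}}>0$, so at $t=0$ the inner $\sigma$ is in its linear branch with $|q^{(0)}|=|w_{12}^{(0)}|-b^{(0)}$; then $|w_{22}^{(0)}q^{(0)}|=|w_{22}^{(0)}|(|w_{12}^{(0)}|-b^{(0)})\ge b^{(0)}$, so the outer $\sigma$ is in its linear branch too. The maps $w_{12}\mapsto-w_{12}$ and $w_{22}\mapsto-w_{22}$ send the flow to itself while flipping the signs of $q$ and $Q$ and fixing $Q^4$, so we may assume $w_{12}^{(0)},w_{22}^{(0)}>0$. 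While we stay in these branches, $q=w_{12}-b_{12}$ and $Q=w_{22}q-b_{22}$, and gradient flow on $\mathcal L_2=Q^4$ reads
\begin{align}
\nonumber
\dot w_{12}=-4Q^3w_{22},\qquad \dot b_{12}=4Q^3w_{22},\qquad \dot w_{22}=-4Q^3 q,\qquad \dot b_{22}=4Q^3.
\end{align}

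\textbf{Invariants and convergence.} From these ODEs: $w_{12}+b_{12}$ is constant; dividing $\dot q=-8Q^3w_{22}$ by $\dot w_{22}=-4Q^3q$ gives the invariant $q^2-2w_{22}^2=K:=(q^{(0)})^2-2(w_{22}^{(0)})^2$; and $\dot Q=\dot{(w_{22}q)}-\dot b_{22}=-4Q^3(q^2+2w_{22}^2+1)\le-4Q^3$ whenever $Q>0$. Comparison with $\dot z=-4z^3$ forces $0<Q(t)\le Q^{(0)}(1+8(Q^{(0)})^2t)^{-1/2}\to0$; hence $Q$ stays strictly positive (the trajectory never leaves the linear branches), $b_{22}$ increases to a finite limit $\ell\ge b^{(0)}$ since $\int_0^\infty Q^3\,dt<\infty$, $w_{22}q$ decreases to the same limit $\ell$, and the invariant $q^2=2w_{22}^2+K$ together with boundedness of $w_{22}q$ forces $w_{22}$ and $q$ themselves to converge. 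In particular $\|f_2(\vct{e}_2)\|=|Q|\to0$.

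\textbf{Lower bound on $\|f_1(\vct{e}_2)\|$ and the main obstacle.} Passing to the limit, $Q=0$ gives $w_{22,\infty}q_\infty=\ell$, which with $q_\infty^2=2w_{22,\infty}^2+K$ yields $q_\infty^2=\tfrac12(K+\sqrt{K^2+8\ell^2})$. Using $\ell\ge b^{(0)}$ and, from $q^{(0)}\ge b^{(0)}/w_{22}^{(0)}\ge\sqrt{b^{(0)}}$ and $(w_{22}^{(0)})^2\le b^{(0)}$, the bound $K=(q^{(0)})^2-2(w_{22}^{(0)})^2\ge b^{(0)}-2b^{(0)}=-b^{(0)}$, a short one-variable inequality gives $q_\infty^2\ge b^{(0)}$, i.e.\ $\|f_1(\vct{e}_2)\|\ge\sqrt{b^{(0)}}$. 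I expect the genuinely delicate part to be not any single computation but making the convergence paragraph airtight: proving the trajectory never switches ReLU branches and that all four parameters (hence $q$ and $w_{22}$) converge, so the limiting identities are legitimate. This is exactly where the monotonicity facts ($Q\downarrow0$ with $Q^3$ integrable, $w_{22}q\downarrow$, $b_{22}\uparrow$) and the invariant $q^2-2w_{22}^2=K$ do the work; the boundary case $Q^{(0)}=0$ (equality in the second hypothesis) is immediate, since then the flow is stationary and $q\equiv q^{(0)}$ with $(q^{(0)})^2\ge b^{(0)}$.
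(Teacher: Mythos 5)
Your proof is correct, and it rests on the same foundation as the paper's: the coordinate-wise decomposition of Lemma~\ref{cl_coordinate_decomp}, the reduction of the coordinate-2 loss to $Q^4$ (the positive-pair term vanishes since $\alpha_2=1$ makes the two views' second coordinates independent and the map odd), and the same four gradient expressions, specialized to the linear branches. The differences are in how you finish. First, you actually prove $\|f_2(\vct{e}_2)\|\to 0$ via the differential inequality $\dot Q\le -4Q^3$ and comparison, which also yields integrability of $Q^3$ and hence convergence of $b_{22}$, $w_{22}q$, $w_{22}$ and $q$, and you justify branch preservation through $Q>0$ for all $t$; the paper instead asserts that monotonicity of the weights and biases drives the second output coordinate to zero and works directly with the limiting kink condition, so your treatment makes rigorous exactly the step the paper leaves informal (and which you rightly flag as the delicate part). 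Second, your lower bound on $\|f_1(\vct{e}_2)\|$ goes through the conserved quantity $q^2-2w_{22}^2=K$ combined with $\ell\ge b^{(0)}$ and $K\ge -b^{(0)}$, solving $q_\infty^4-Kq_\infty^2-2\ell^2=0$; the paper's finish is shorter and needs no invariant: since $|w_{22}|$ decreases and $b_{22}$ increases, $|w_{22,\infty}|\le |w_{22}^{(0)}|\le\sqrt{b^{(0)}}\le\sqrt{b_{22,\infty}}$, and $Q\to 0$ gives $|w_{22,\infty}|\,q_\infty=b_{22,\infty}$, hence $q_\infty\ge \sqrt{b_{22,\infty}}\ge\sqrt{b^{(0)}}$ (read the paper's displayed inequality as equality in the limit). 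Both endings are valid; yours buys a quantitative decay rate, an airtight convergence argument, and explicit handling of the degenerate cases ($b^{(0)}=0$, equality in the second hypothesis), at the cost of a somewhat longer closing computation, while the paper's monotonicity-only bound is leaner but relies on the unproven claim that the output cannot stall above zero.
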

\begin{proof}
Let $\vct{f}_2(\vct{x}) = (z_1, \dots, z_n)$ be the embeddings outputted by the second layer. Then the coordinate-wise decomposition of the contrastive loss
%from \ref{coordinate_decomp} 
takes the form
\begin{align}
    \mathcal{L} &= \E_{z \sim f(\mathcal{D})}[(z_1^2 - 1)^2 + z_2^4 + \dots + z_n^4]
\end{align}

From here it is clear that the an optimal solution maps the second coordinate of every embedding to zero.
Writing down the gradients for the weights and threshold,
\begin{align*}
    \frac{\partial L}{\partial w_{22}} &=  \E_{\vct{x} \sim \mathcal{D}}[4z_2^3\sigma'(w_{22}\sigma(w_{12}x_2, b_{12}), b_{22}) \sigma(w_{12}x_2, b_{12})] \\
    \frac{\partial L}{\partial w_{12}} &=  \E_{\vct{x} \sim \mathcal{D}}[4z_2^3 \sigma'(w_{22}\sigma(w_{12}x_2, b_{12}), b_{22}) w_{22}\sigma'(w_{12}x_2)x_2] \\
    \frac{\partial L}{\partial b_{22}} &= \E_{\vct{x} \sim \mathcal{D}}[-4z_2^3\sigma'(w_{22}\sigma(w_{12}x_2, b_{12}), b_{22})] \\
    \frac{\partial L}{\partial b_{12}} &=  \E_{\vct{x} \sim \mathcal{D}}[-4z_2^3 \sigma'(w_{22}\sigma(w_{12}x_2, b_{12}), b_{22}) w_{22}]
\end{align*}  
Observe that $|w_{12}|$ and $|w_{22}|$ are both decreasing and $b_{12}, b_{22}$ are both increasing throughout training, so that as $t \to \infty$, the second coordinate of all embeddings goes to zero. Namely, this means that 
\begin{align}
    |w_{22}^{(t)}|\left(|w_{12}^{(t)}|  - b_{12}^{(t)}\right) \leq b_{22}^{(t)}
\end{align}
But since weights are decreasing and thresholds are increasing, $|w_{22}^{(t)}| \leq \sqrt{b^{(0)}} \leq \sqrt{b_{22}^{(t)}}$. It follows that $(|w_{12}^{(t)}|  - b_{12}^{(t)}) \geq \sqrt{b_{22}^{(t)}} \geq \sqrt{b^{(0)}}$. Rearranging gives
\begin{align}
\nonumber
    |w_{12}^{(t)}| &\geq \sqrt{b^{(0)}} + b_{12}^{(t)},
    %\\
    % &\geq \frac{\sqrt{b^{(0)}} + b^{(0)}}{\max_{x \in \mathcal{D}}\{|x_2|\}}
\end{align}
as desired.
\end{proof}

\begin{theorem}[MSE Loss]
%\TODO{check assumption}
Assume $|w_{22}^{(0)}|  \leq \sqrt{b^{(0)}}$ and $|w_{22}^{(0)}|(|w_{12}^{(0)}| - b^{(0)}) \geq b^{(0)}$, and $w_{22}^{(0)}$ and $w_{12}^{(0)}$ have the same sign, then as $t\rightarrow \infty$, $|f_2(\vct{e}_2)|\rightarrow 0$ and $|f_1(\vct{e}_2)| \geq \sqrt{b^{(0)}}  $.
%$|w_{12}^{(t)}| \geq \frac{\sqrt{b^{(0)}} + b^{(0)}}{1+\delta}$. %where $\max_{x \in \mathcal{D}}\{|x_2|\}$ represents the maximal absolute value at the input's second coordinate.
\end{theorem}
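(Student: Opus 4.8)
\textbf{Proof proposal for Theorem (MSE Loss, i.e.\ the $\suploss$ part of Theorem~\ref{thm: cc_nc}).}
The plan is to reduce to a single coordinate as in Lemma~\ref{mse_coordinate_decomp}, and then run the same monotonicity argument used in the contrastive case, with the one extra ingredient that the data distribution on coordinate~2 is symmetric and independent of the label $y$, so coordinate~2 contributes a pure ``regression toward zero'' term. Concretely, by Lemma~\ref{mse_coordinate_decomp} the loss splits as $\suploss = (1-p)\,\E_y[y^2] + \sum_{i=1}^p \E_{(\vct{x},y)}[(f_i(x_i)-y)^2]$, and I would isolate the $i=2$ summand. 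Since in Definition~\ref{def: data_sup} the second coordinate $x_2=y_{sub}\in\{-1,1\}$ is drawn independently of $y$ and symmetrically, we have $\E[(f_2(x_2)-y)^2] = \E[f_2(x_2)^2] - 2\E[y]\E[f_2(x_2)] + \E[y^2]$; using symmetry of $x_2$ and oddness of $f_2$, $\E[f_2(x_2)]=0$, so this summand equals $\E[f_2(x_2)^2] + \E[y^2]$. Hence, up to a constant, minimizing the loss pushes the second coordinate of every embedding to zero — exactly the same effect as the $z_2^4$ term in the contrastive case.

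Next I would write the gradient flow equations for the four scalar parameters $w_{12}, w_{22}, b_{12}, b_{22}$ governing coordinate~2, where $z_2 = \sigma(w_{22}\,\sigma(w_{12}x_2, b_{12}),\, b_{22})$. The gradients have the form $\partial \suploss/\partial w_{22} \propto \E[ z_2\,\sigma'(\cdot)\,\sigma(w_{12}x_2,b_{12}) ]$, $\partial \suploss/\partial w_{12} \propto \E[ z_2\,\sigma'(\cdot)\,w_{22}\,\sigma'(w_{12}x_2,b_{12})\,x_2 ]$, $\partial \suploss/\partial b_{22} \propto -\E[ z_2\,\sigma'(\cdot) ]$, $\partial \suploss/\partial b_{12} \propto -\E[ z_2\,\sigma'(\cdot)\,w_{22} ]$ (replacing the $4z_2^3$ of the contrastive case by a linear-in-$z_2$ factor, which does not change any signs). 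The point of the sign assumptions $|w_{22}^{(0)}| \le \sqrt{b^{(0)}}$, $|w_{22}^{(0)}|(|w_{12}^{(0)}| - b^{(0)}) \ge b^{(0)}$, and $w_{12}^{(0)}w_{22}^{(0)}>0$ is precisely to guarantee that along the flow $|w_{12}|$ and $|w_{22}|$ are non-increasing while $b_{12}, b_{22}$ are non-decreasing — one checks by the symmetry of $x_2$ (pairing $x_2=+1$ with $x_2=-1$) that the relevant expectations keep a fixed sign, so the ODEs preserve these monotonicities. This mirrors the contrastive-loss proof verbatim.

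From there the endgame is identical to the contrastive case: monotonicity forces the dynamics toward the regime $|w_{22}^{(t)}|(|w_{12}^{(t)}| - b_{12}^{(t)}) \le b_{22}^{(t)}$ (i.e.\ coordinate~2 of every embedding is killed, giving $\|f_2(\vct{e}_2)\|\to 0$), while $|w_{22}^{(t)}| \le \sqrt{b^{(0)}} \le \sqrt{b_{22}^{(t)}}$ yields $|w_{12}^{(t)}| - b_{12}^{(t)} \ge \sqrt{b_{22}^{(t)}} \ge \sqrt{b^{(0)}}$, hence $\|f_1(\vct{e}_2)\| = |\sigma(w_{12}, b_{12})| = |w_{12}^{(t)}| - b_{12}^{(t)} \ge \sqrt{b^{(0)}}$. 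For completeness I would also note that feature~1 (the class feature) behaves fine because $x_1 = y$ is perfectly label-correlated, so that coordinate is driven to a nonzero fixed point rather than collapsing — but this is not needed for the stated conclusion.

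\textbf{Main obstacle.} The delicate step is establishing the monotonicity of $|w_{12}|, |w_{22}|$ (decreasing) and $b_{12}, b_{22}$ (increasing) \emph{simultaneously} along the flow: each parameter's sign of motion depends on the current configuration of the others through the indicator $\sigma'(w_{22}\sigma(w_{12}x_2,b_{12}),b_{22})$, so one must argue that the invariant region carved out by the initialization hypotheses is forward-invariant and that inside it all four monotonicities hold in a self-consistent way. This is exactly where the three initialization inequalities (and the new same-sign condition $w^{(0)}_{12}w^{(0)}_{22}>0$, which rules out cancellation between the two layers) are used, and it is the part that must be checked carefully rather than quoted from the contrastive proof — although, reassuringly, the structure of the argument is the same and the $4z_2^3 \rightsquigarrow 2(z_2 - \E[y|x_2]) = 2z_2$ change only rescales magnitudes, never signs.
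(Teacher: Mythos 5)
Your proposal is correct and follows essentially the same route as the paper's proof: the coordinate-wise decomposition via Lemma~\ref{mse_coordinate_decomp} reducing coordinate~2 to a pure $\E[z_2^2]$ term, the same four gradient-flow equations with the $4z_2^3\mapsto 2z_2$ replacement, the monotonicity of $|w_{12}|,|w_{22}|$ (decreasing) and $b_{12},b_{22}$ (increasing), and the identical endgame yielding $|w_{22}^{(t)}|\le\sqrt{b^{(0)}}\le\sqrt{b_{22}^{(t)}}$ and hence $|w_{12}^{(t)}|-b_{12}^{(t)}\ge\sqrt{b^{(0)}}$. The paper simply asserts the monotonicity (using the same-sign assumption to take $w_{12}^{(0)},w_{22}^{(0)}>0$ WLOG), so the ``main obstacle'' you flag is handled no more rigorously there than in your sketch.
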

\begin{proof}
Let $\vct{f}_2(\vct{x}) = (z_1, \dots, z_n)$ be the embeddings outputted by the second layer. Then the coordinate-wise decomposition of the  loss
%from \ref{coordinate_decomp} 
takes the form
\begin{align}
    \mathcal{L} &= \E_{z \sim f(\mathcal{D})}[(z_1 - 1)^2 + z_2^2 + \dots + z_n^2] + C
\end{align}
where C is independent of the weights.

From here it is clear that the an optimal solution maps the second coordinate of every embedding to zero. W.L.O.G., assume $w_{12}^{(0)}, w_{22}^{(0)} > 0$.
Writing down the gradients for the weights and threshold,
\begin{align*}
    \frac{\partial L}{\partial w_{22}} &=  \E_{\vct{x} \sim \mathcal{D}}[2z_2\sigma'(w_{22}\sigma(w_{12}x_2, b_{12}), b_{22}) \sigma(w_{12}x_2, b_{12})] \\
    \frac{\partial L}{\partial w_{12}} &=  \E_{\vct{x} \sim \mathcal{D}}[2z_2 \sigma'(w_{22}\sigma(w_{12}x_2, b_{12}), b_{22}) w_{22}\sigma'(w_{12}x_2)x_2] \\
    \frac{\partial L}{\partial b_{22}} &= \E_{\vct{x} \sim \mathcal{D}}[-2z_2\sigma'(w_{22}\sigma(w_{12}x_2, b_{12}), b_{22})] \\
    \frac{\partial L}{\partial b_{12}} &=  \E_{\vct{x} \sim \mathcal{D}}[-2z_2 \sigma'(w_{22}\sigma(w_{12}x_2, b_{12}), b_{22}) w_{22}]
\end{align*}  
Observe that $w_{12}$ and $w_{22}$ are both decreasing and $b_{12}, b_{22}$ are both increasing throughout training, so that as $t \to \infty$, the second coordinate of all embeddings goes to zero. Namely, this means that 
\begin{align}
    w_{22}^{(t)}\left(w_{12}^{(t)}  - b_{12}^{(t)}\right) \leq b_{22}^{(t)}
\end{align}
But since weights are decreasing and thresholds are increasing, $w_{22}^{(t)} \leq \sqrt{b^{(0)}} \leq \sqrt{b_{22}^{(t)}}$. It follows that $(w_{12}^{(t)}  - b_{12}^{(t)}) \geq \sqrt{b_{22}^{(t)}} \geq \sqrt{b^{(0)}}$. Rearranging gives
\begin{align}
\nonumber
    w_{12}^{(t)} &\geq \sqrt{b^{(0)}} + b_{12}^{(t)},
    %\\
    % &\geq \frac{\sqrt{b^{(0)}} + b^{(0)}}{\max_{x \in \mathcal{D}}\{|x_2|\}}
\end{align}
as desired.
\end{proof}

\section{Discussion on Multi-layer Linear Model}\label{sec: mutli_layer}

As mentioned in Section  \ref{sec: supervised}, Theorem \ref{thm: gf} can be extended to multi-layer models, which gives us $\mtx{W}_l(t)^\top \mtx{W}_l(t) = \mtx{W}_{l+1}(t) \mtx{W}_{l+1}(t)^\top$. Based on this, going through a similar process as in Appendix \ref{apdx: weights} and Appendix \ref{apdx: sample_comp}, we will obtain the following expression of the sample complexity indicator for each layer $l$:
\begin{align}
\nonumber
    r_l=\frac{ \sum_{j=1}^p c_j^{2l/L}~\hat{\phi}_j^2 }{ c_{j^*}^{2l/L}~\hat{\phi}_{j^*} }
\end{align}
where
\begin{align}
    \nonumber
    & c_j = 
    \begin{cases}
        \frac{(1-\alpha_j)\phi_j}{\phi_j^2 + \sigma^2}, ~~~&\text{if}~~~ j\in  \{j_1, \dots, j_{\min\{d, p\}}\} \\
        0, ~~~&\text{else}
    \end{cases}
\end{align}
The definitions of $j_1, \dots, j_{\min\{d, p\}}$ remain the same as in Theorem 3.5, and other quantities are consistent with the definitions provided in Section 3. Here, $c_j$ represents the weight the full model would allocate to the $j$-th feature. Although the expression appears complex, some intuition can be gleaned from extreme scenarios: If $c_{j^*}$ is the largest, indicating that the full model assigns the most weight to the downstream relevant feature, $r_l$ decreases with $l$. This means one should just use the final-layer representations. 2. Conversely, if $c_{j^*}$ is the smallest among non-zero $c_j$'s, indicating that the full model assigns the least weight to the downstream relevant feature, $r_l$ increases with $l$. In this case, using the lowest layer would be preferable. Applying these observations in conjunction with the relationship between $c_j$ and $\alpha_j, \phi_j$, similar conclusions to those in Corollary 3.7 regarding the impact of augmentation and feature strength for multi-layer models can be drawn. 

\begin{figure}
    \centering
\includegraphics[width=.5\textwidth]{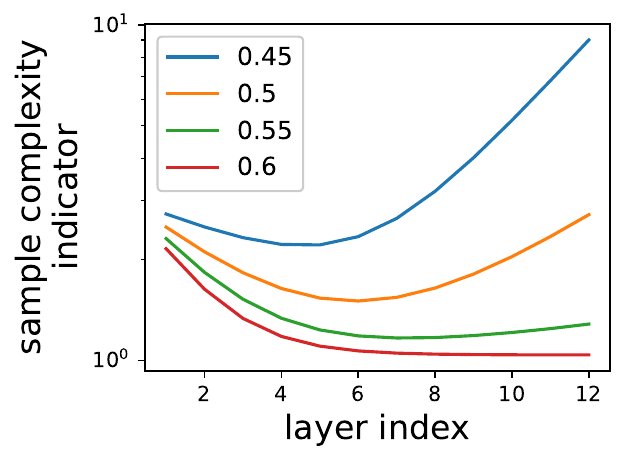}
    \caption{The value of the sample complexity indicator at different layers. Legends show the weight assigned to the downstream relevant weight by the pretrained model. We observe that the optimal layer shifts lower when the pretrained model assigns less weight to the downstream relevant features.}
    \label{fig: depth}
\end{figure}

\textbf{The greater the mismatch between the pretraining and downstream tasks, the lower the optimal layer tends to be.} What about situations that are more intricate, occurring between the above extreme cases? To explore this, we consider the following setting for simulation. We let 
\begin{align}
\nonumber
&L=12, \hat{\phi}_j=1, \forall j\leq 9, ~~~~~~~ \hat{\phi}_j=0.1, \forall j\geq 10, ~~~~~~~ j^*=9, \\
\nonumber
& c_j=0.4, \forall j\leq 8, ~~~~~~~ c_j=0.6, \forall j\leq 10.
\end{align}
Then, we vary $c_{j^*}$, the weight assigned to the downstream relevant feature from $0.4$ to $0.6$, and plot $r_l$ vs $l$ (i.e., sample complexity indicator vs depth) under each value of $c_{j^*}$ in Figure \ref{fig: depth}. We observe that, for $c_{j^*}=0.45, 0.5, 0.55$, the best sample complexity is achieved by some intermediate layer. Additionally, the optimal layer (corresponding to the bottom of the U-shaped curve) becomes lower as the weight assigned to the relevant feature decreases, which indicates a larger mismatch between the pretraining and downstream tasks.

\textbf{Challenges in locating the optimal layer.} Even in this simplified scenario, we observe that the depth of the optimal layer is influenced by various factors, including the position of the downstream-relevant feature, the strength of features in the downstream task, and the weights assigned to features during pretraining. The last one is further a function of features, noise and augmentations for pretraining data. In practical settings, we acknowledge that more factors may come into play, such as the model architecture, which varies across layers. Therefore, determining the exact optimal layer for downstream tasks is very challenging and represents an intriguing and valuable avenue for future exploration. We believe the analytical framework established in this paper, capable of expressing downstream sample complexity in closed form through various elements in pretraining and downstream tasks, and explaining several observed phenomena (e.g., those depicted in Figure \ref{fig: mnist}), can significantly aid advancing research in this direction.

\section{Experimental Details}

\subsection{Setups}\label{apdx: exp_details}

\textbf{Synthetic Experiments, CL.} We train two-layer (symmetrized) ReLu Networks with momentum SGD, with momentum set to 0.9. We use the spectral CL loss. (1) In setting 1, we set $d=5, p=20$ and $\phi_1=\dots=\phi_5=1$, indicating that all five features have equal strength. We set $\alpha_1,\alpha_2, \alpha_3, \alpha_4, \alpha_5$ to $0, 0.25, 0.5, 0.75, 1$, respectively, meaning that features are disrupted by the augmentation to different extents. We let $\sigma=0.01$ and set learning rate to $0.05$. (2) In setting 2, we set $d=9, p=20$, $\phi_i=3.2/2^i, \forall i$, and $\sigma=0.1$. We use learning rate 0.01.

\textbf{Synthetic Experiments, SCL.} The data distribution is the same as described in Definition \ref{def: data_sup}. We let other coordinates be randomly drawn from $\{-0.001, 0.001\}$. We let $d=5,p=20, \sigma=0.1$, learning rate = 0.01.

\textbf{MNIST-on-CIFAR-10.} The main data generation process is outlined in Section \ref{sec: exp_sscl}. To provide further details, after processing the digits in each image, we apply standard data augmentations, RandomResizedCrop and ColourDistortion. Additionally, we resize the digits to set their height to 16 pixels. We train ResNet-18 models. By default, we use a temperature of 0.5 and minimize the SimCLR loss with the Adam optimizer. Our training batch size is 512, with a learning rate of 0.001 and weight decay set to $1\times 10^{-6}$. We train for 400 epochs. The projection head is a one-hidden layer MLP with an output dimension of 128, and the hidden dimension is set to match the output dimension of the ResNet-18 encoder.

\textbf{Coarse-to-fine transfer on CIFAR-100.}
 On CIFAR-100, we refer to the 10 super-classes as `coarse' and the 100 classes as `fine'. (1) SCL. The pretraining is conducted with the 10 coarse-grained labels, using the SCL loss in \cite{khosla2020supervised} with temperature 0.5. We use train a ResNet-18 with momentum SGD, using learning rate = $0.1$, momentum = $0.9$ and weight decay 1e-6. We train with batch size set to 512 for 400 epochs. The projection head is a one-hidden layer MLP with an output dimension of 128, and the hidden dimension is set to match the output dimension of the ResNet-18 encoder. (2) SL. 
 %Following \cite{devries2017improved}, 
 We use momentum SGD, with learning rate = $0.1$ and momentum = $0.9$. 
 %We decay the learning rate by 5 at 60th, 120th, 160th epochs, 
 We train for 200 epochs with batchsize 128 and weight decay 5e-4.

 \textbf{Few-shot adaption on UrbanCars.} UrbanCars is constructed by \cite{li2023whac}, features multiple spurious correlations. The task is classifying images as either urban cars or country cars. Each image has one background (BG) and one co-occurring object (CoObj). The BG is selected from either urban or country backgrounds, and the CoObj is selected from either urban or country objects.  In the source distribution, for each class, images with common BG and CoObj constitute 90.25\%, images with uncommon BG and common CoObj, or with common BG and uncommon CoObj, constitute 4.75\%, and images with both uncommon BG and CoObj constitute 0.25\%. This dataset presents a challenge due to multiple spurious correlations/shortcuts.  We let the target distribution contains only the subpopulations that are most underrepresented in the source distribution, i.e., images with both uncommon BG and CoObj. We train the model on the source data using the SGD optimizer with a batch size of 128, a learning rate of 0.01, and weight decay of 0.000001 for 50 epochs. The linear layer is then trained on top of representations using a few (2 to 128) data from the target distribution. Following \cite{chen2023project}, for the training of the linear layer, we employ the Adam optimizer \cite{kingma2014adam} with a batch size of 64 and train for 100 epochs. We tune both the learning rate and weight decay in the range of 0.1, 0.01, 0.001, and report the configuration that yields the best result. Each experiment is repeated 10 times, and we report the average. We use the implementation from \cite{joshi2023mitigating} for these experiments.

 \textbf{Experiments with reweighting heads.} All setups remain the same as before. We set the values of $\kappa$ to 1.05, 1.5, 1.2, and 1.01 for the four experiments in Table \ref{tab: reweight}, from top to bottom, respectively.

\subsection{Details results for the last row of Table \ref{tab: reweight}}\label{apdx: adaption_5lines}

\begin{figure}
    \centering
    \includegraphics[width=.7\textwidth]{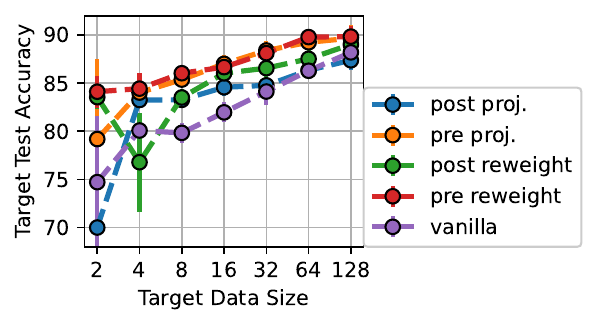}
    \caption{Target test accuracy with respect to the number of target examples for adaptation, comparing different representations.}
    \label{fig: 5lines}
\end{figure}

Figure \ref{fig: 5lines} presents the target test accuracy with respect to the number of target examples for adaptation, comparing different representations. We observe that for both the projection head and the reweighting head, the representations before the head are better than those after, and also better than the vanilla approach, which does not add any head during pretraining. Furthermore, pre-reweighting-head representations outperform pre-projection-head representations, underscoring the potential of the reweighting head.

\textbf{}

\section{Additional Experiments}

\subsection{Two Experiments about Feature Strength in Natural Images}\label{apdx: natural}

\textbf{Dataset and Downstream Task.} The experiment is based on images in CIFAR-10, aiming to validate the second and third points outlined in Corollary \ref{cor: three} using real data. Given the subjective nature of defining features in natural images, we select the feature `color,' which offers a straightforward and less controversial definition. This choice enables a controlled experiment with minimal modification to the original natural images. Our downstream task involves predicting whether a given image is categorized as `red,' `green,' or `blue' based on the channel with the largest mean value. To vary the the strength of the color feature in the pretraining data, we consider the following two approaches. We note that the second one involves no change to the images themselves, making sure all of them are natural.

\textbf{1. Varying Color Distinguishability by Processing the Image}. During the training, we process each image as follows: Let \( R \), \( G \), and \( B \) denote the average pixel value for the three channels. We first find the largest two values among \( R \), \( G \), and \( B \) and calculates \( u \) as their mean. For the matrix at each channel, denoted \( M_R \), \( M_G \), and \( M_B \) for the red, green, and blue channels respectively, we update each matrix as follows:

\[
M_R \leftarrow M_R \times \left(\frac{(R - u) \cdot \alpha + u}{R}\right),
\]
\[
M_G \leftarrow M_G \times \left(\frac{(G - u) \cdot \alpha + u}{G}\right),
\]
\[
M_B \leftarrow M_B \times \left(\frac{(B - u) \cdot \alpha + u}{B}\right).
\]

Then, the values are capped between 0 and 1. \( \alpha \geq 0 \) is a parameter that we denoted as contrast strength. A larger \( \alpha \) increases the values in the dominant channel while decreasing the values in other channels. Intuitively, larger $\alpha$ means a larger gap between the dominant channel and the other two channels, making the information about the dominant channel more obvious.  We varied the \( \alpha \) in the range of [0, 0.2, 0.5, 1, 2, 3, 4, 5].

\textbf{2. Varying Color Distinguishability by Selecting a Subset}. For each image, we choose the channel that has the greatest mean pixel value as the dominant channel, and compute the color distinguishability $D=$  (dominant channel's mean value / sum of the mean values for the three channels). We sort images in each class of CIFAR-10 based on $D$ from highest to lowest. Then we select 1000 images from each class starting from the K-th image to form the training set. Intuitively, a larger $K$ leads to images with higher average color distinguishability $D$ values being selected in the pretraining data, and a higher color distinguishability means that the information about the dominant channel is more obvious. 
We varied K in the range of [0, 500, 1000, 1500, 2000, 2500, 3000, 3500, 4000], which corresponds to the average color distinguishability of [0.4254, 0.3950, 0.3815, 0.3717, 0.3640, 0.3575, 0.3517, 0.3463, 0.3406].

\textbf{Training Details.}  We train ResNet18 with a one hidden layer projection head with hidden dimension 2048 using SimCLR \citep{chen2020simple} on the selected trainset images for 400 epochs. We use Adam optimizer with learning rate 0.001, weight decay $10^{-6}$, and batch size 512. We used temperature 0.5. 

\begin{figure}
    \centering
\begin{subfigure}[b]{0.23\textwidth}
    \centering
\includegraphics[width=1\textwidth]{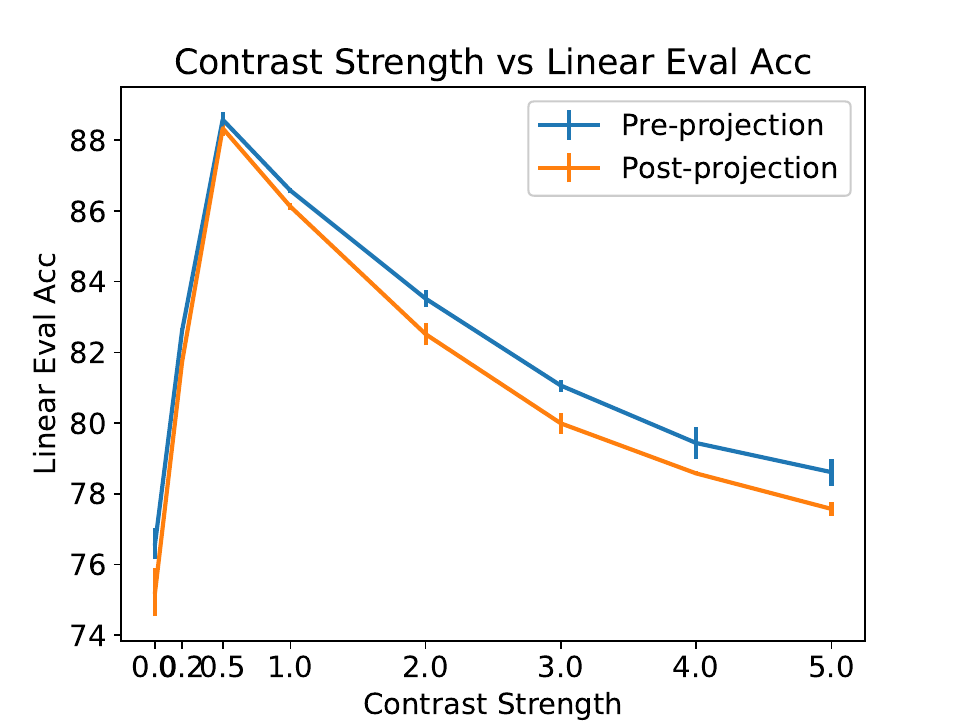}
\caption{Acc v.s. Contrast.}
    \end{subfigure}
\begin{subfigure}[b]{0.23\textwidth}
    \centering
\includegraphics[width=1\textwidth]{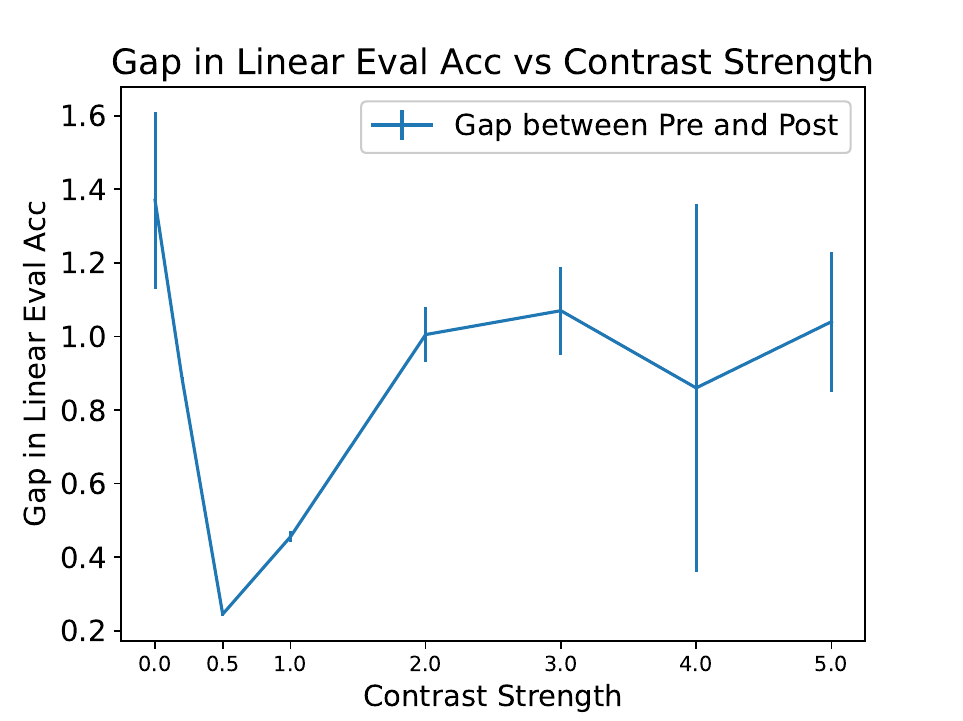}
\caption{Gap v.s. Contrast.}
    \end{subfigure}
\begin{subfigure}[b]{0.23\textwidth}
    \centering
\includegraphics[width=1\textwidth]{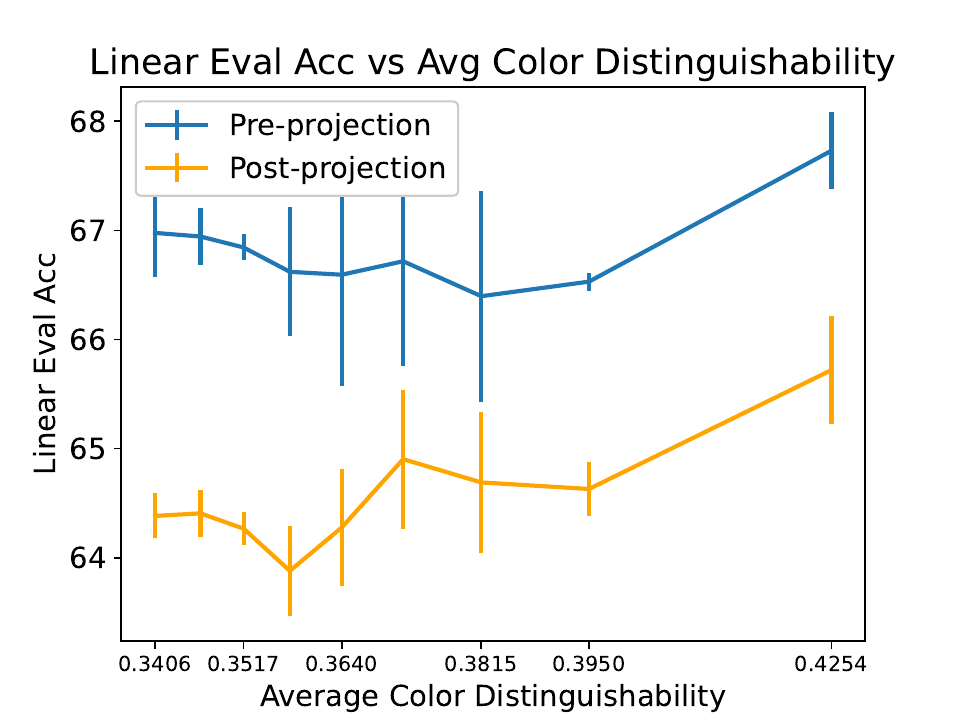}
\caption{Acc v.s. color dist.}
    \end{subfigure}
\begin{subfigure}[b]{0.23\textwidth}
    \centering
\includegraphics[width=1\textwidth]{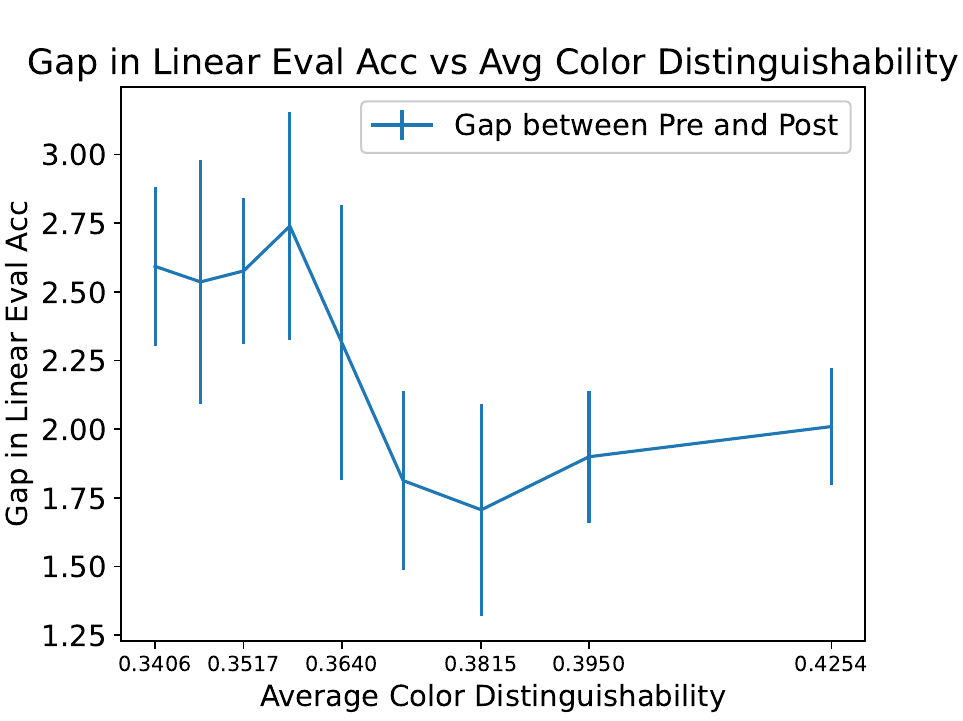}
\caption{Gap v.s. color dist.}
    \end{subfigure}
    \caption{ Accuracy and gap when the level of color distinguishability in the pretraining data is varied. (a)(c) are for approach 1, (b)(d) are for approach 2.  }
    \label{fig: color}
\end{figure}

\textbf{Results.} Figure \ref{fig: color} presents the results for the above two experiments. In both cases, we see that the gap between pre-projection and post-projection roughly show an decreasing-increasing trend. In other words, using pre-projection is more beneficial when either the color distringuishability is very high or very low, confirming our theoretical results in Corollary \ref{cor: three}.

\subsection{Effect of Early Stopping}\label{apdx: es}
\begin{figure}
    \centering
\begin{subfigure}[b]{0.33\textwidth}
    \centering
\includegraphics[width=1\textwidth]{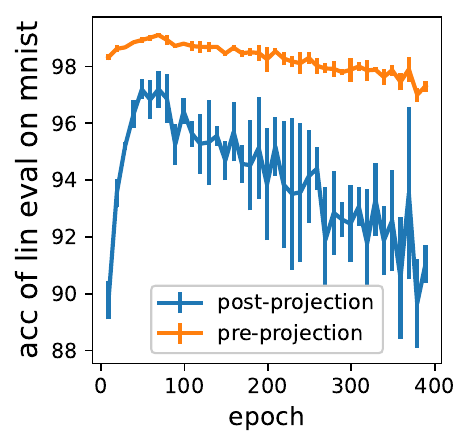}
\caption{$p_{drop}=0$}
\end{subfigure}
\begin{subfigure}[b]{0.33\textwidth}
    \centering
\includegraphics[width=1\textwidth]{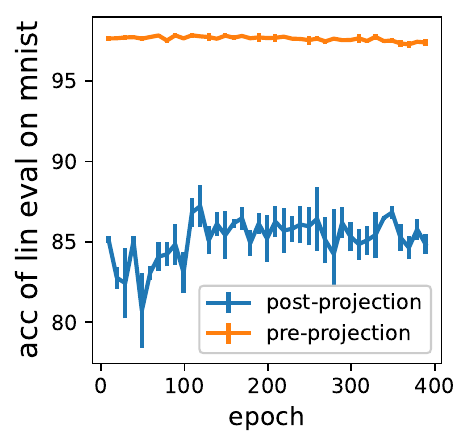}
\caption{$p_{drop}=0.2$}
\end{subfigure}
    \caption{Linear evaluation accuracy during training. We see that early stopping reduces the gap between pre-projection and post-projection when $p_{drop}=0$, but not when $p_{drop}=0.2$. }
    \label{fig:early_stopping}
\end{figure}

We examine how the linear evaluation accuracy changes during training on MNIST-on--CIFAR-10. The setting is consistent with the setting described in Section 5.1.  Intuitively, early stopping should result in a model that is less specialized towards the training objective, potentially benefiting the downstream task when there is a misalignment. However, perhaps counter intuitively, the results in Figure \ref{fig:early_stopping} reveal that early stopping improves post-projection with good augmentation ($p_{drop}=0$), but not with bad augmentation ($p_{drop}=0.2$). A deeper analysis of training dynamics during the course of training is required to fully grasp the effect of early stopping.

\subsection{Robustness to distribution shift on ImageNet}\label{apdx: imagenet}

It is well-known that deep learning models trained on ImageNet \cite{deng2009imagenet} tend to rely heavily on backgrounds rather than the main objects for making predictions \citep{xiao2020noise,zhu2016object}, resulting in poor performance on test sets where backgrounds are random or absent. 
% \begin{wrapfigure}{r}{0.3\textwidth}
\begin{figure}[t!]
\centering
\vspace{-4mm}
\includegraphics[width=.4\linewidth]{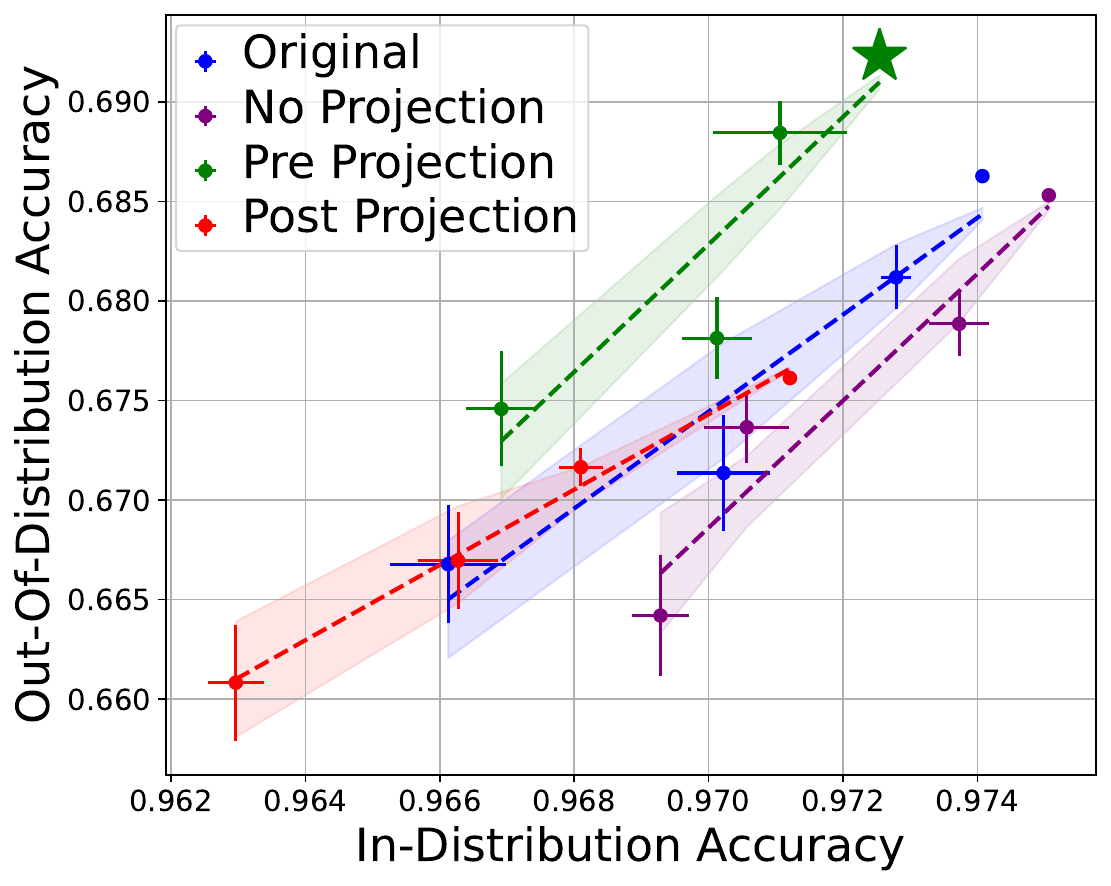}
    \vspace{-3mm}
    \caption{\small Pre-projection representations exhibit better robustness. OOD accuracy is evaluated across four shifted ImageNets.\looseness=-1}
    \label{fig: imagenet}
   \vspace{-5mm}
\end{figure}

\begin{figure}[t!]
\centering
    \includegraphics[width=.4\linewidth]{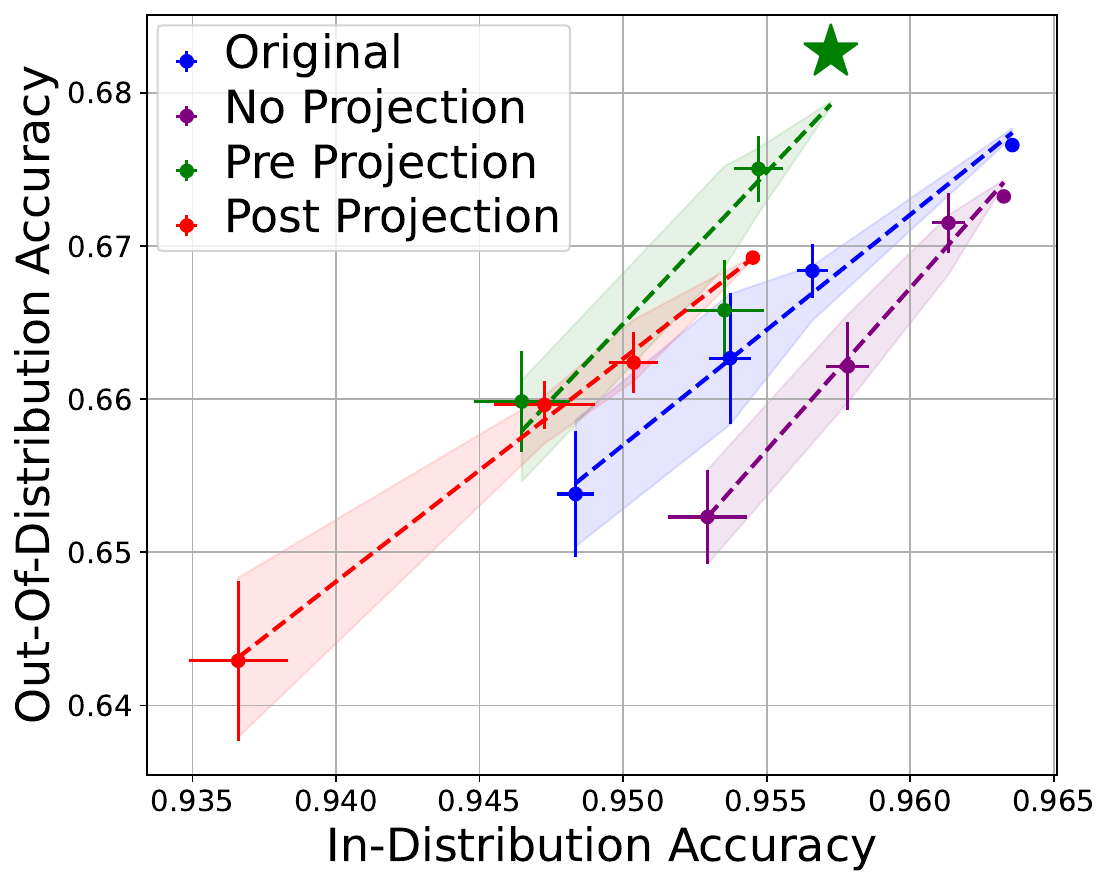}
    \caption{OOD-ID accuracy relation for models trained on Mixed-Rand. }
    \label{fig: imagenet_mr}
\end{figure}

Recently, \cite{kirichenko2022last} demonstrate that despite this poor performance, the representations before the linear classifier actually contain information about the objects. Consequently, training a new classifier on these representations using data that does not exhibit a correlation between backgrounds and classes can lead to improved out-of-distribution (OOD) accuracy. We demonstrate that the quality of representations can be further enhanced by leveraging a non-linear projection head. We take the ImageNet pretrained ResNet-50 model, and fine-tune the model with an dditional projection head on ImageNet for 50 epochs. We find that the pre-projection representations 
%exhibit better robustness, containing more information about the main objects. This
result in an improved in-distribution vs. OOD accuracy relationship (which is a standard measurement of robustness \citep{taori2020measuring}), as shown in Figure \ref{fig: imagenet} where we compare it with post-projection representations, representations provided by the original pretrained model and representations obtained by fine-tuning the model for 50 epochs without the projection head.\looseness=-1

\textbf{Experimental details.} Following \cite{kirichenko2022last}, we use the datasets in Backgrounds Challenge based on the ImageNet-9 dataset \citep{xiao2020noise} along with the ImageNet-R \citep{hendrycks2021many} dataset. We use ImageNet-A \citep{hendrycks2021natural} instead of Paintings-BG \citep{kirichenko2022last} due to limited dataset availability. We consider Original, Mix-Rand, and FG-only datasets for Backgrounds Challenge. 
%\textit{Comparing Representation Before and After Projection Head}. 
We finetune the ImageNet-pretrained ResNet-50 encoder along with an additional randomly initialized projection head and a randomly initialized linear classifier. The projection head is an MLP with one hidden layer of size 2048. We finetune the model for 50 epochs with batch size 256, momentum 0.9, weight decay $10^{-4}$, learning rate 0.01, and a learning rate scheduler with step size 30 and $\gamma$ = 0.1. We also finetune the original ImageNet-pretrained ResNet-50 model with the same hyperparameters. During the training, we follow the same procedure as \cite{kirichenko2022last}, using two train sets: Mixed-Rand and Combination of Mixed-Rand and Original. The sizes of Original and Mixed-Rand are the same. We use the same data preprocessing step as in \cite{kirichenko2022last}. We train DFR on varied-sized random subsets of the training data with Mixed-Rand data sizes in the range of \{5000, 10000, 20000, 45405\} for 1000 epochs using SGD with full batch, learning rate 1, and weight decay equal to 100 / size of the data. For evaluation, we use four out-of-distribution datasets, Mixed-Rand, FG-Only, ImageNet-R, ImageNet-A, and one in-distribution dataset, Original. We average the accuracy of the four in-distribution datasets and present the out-of-distribution accuracy vs. in-distribution accuracy plot for the four settings: pre-projection, post-projection, original (representing original ImageNet-pretrained ResNet-50 model) and no-projection (representing finetuned ResNet-50 model without projection head). The results for Combination of Mixed-Rand and Original and Mixed-Rand are shown in Figures \ref{fig: imagenet} and \ref{fig: imagenet_mr}, respectively.

\end{document}